\newtheorem{assumption}{Assumption}
\newtheorem{theorem}{Theorem}
\newcommand{\defeq}{\vcentcolon =}
\newcommand{\Diff}{\mathrm{d}}
\newcommand{\Reals}{\mathbb{R}}
\newcommand{\Proba}{\mathbb{P}}
\newcommand{\condproba}[2]{\Proba(#1|#2)}
\newcommand{\exps}[1]{\mathrm{e}^{#1}}
\newcommand{\norm}[1]{\left\lVert#1\right\rVert}
\newcommand{\xbar}{\overline{x}}
\newcommand{\Var}{\mathrm{Var}}
\newcommand{\var}[1]{\Var(#1)}
\newcommand{\Cov}{\mathrm{Cov}}
\newcommand{\cov}[1]{\Cov(#1)}
\newcommand{\Expec}{\mathbb{E}}
\newcommand{\expec}[1]{\Expec\left[#1\right]}
\newcommand{\tr}{\operatorname{tr}}
\newtheorem{lemma}{Lemma}
\newtheorem{corollary}{Corollary}
\newtheorem{conjecture}{Conjecture}
\newtheorem{definition}{Definition}
\theoremstyle{definition}
\newtheorem{remark}{Remark}
\def\th@plain{%
  \thm@notefont{}
  \itshape 
}
\def\th@definition{%
  \thm@notefont{}
  \normalfont 
}
\providecommand{\TCAV}{\textsc{Tcav}\xspace}
\providecommand{\Var}{\text{Var}}
\providecommand{\Cov}{\text{Cov}}
\providecommand{\eat}[1]{}
\title{On the Variability of Concept Activation Vectors}
\author {
    Julia Wenkmann\textsuperscript{\rm 1},
    Damien Garreau\textsuperscript{\rm 1}
}
\begin{document}

\maketitle

\begin{abstract}
One of the most pressing challenges in artificial intelligence is to make models more transparent to their users. 
Recently, explainable artificial intelligence has come up with numerous method to tackle this challenge. 
A promising avenue is to use concept-based explanations, that is, high-level concepts instead of plain feature importance score. 
Among this class of methods, Concept Activation vectors (\text{\textsc{Cav}\xspace}s), \citeauthor{kim_interpretability_2018}~\shortcite{kim_interpretability_2018} stands out as one of the main protagonists. 
One interesting aspect of \text{\textsc{Cav}\xspace}s is that their computation requires sampling random examples in the train set. 
Therefore, the actual vectors obtained may vary from user to user depending on the randomness of this sampling. 
In this paper, we propose a fine-grained theoretical analysis of \text{\textsc{Cav}\xspace}s construction in order to quantify their variability. 
Our results, confirmed by experiments on several real-life datasets, point out towards an universal result: the variance of \text{\textsc{Cav}\xspace}s decreases as $1/N$, where~$N$ is the number of random examples. Based on this we give practical recommendations for a resource-efficient application of the method.
\end{abstract}

\section{Introduction}  
Explainable  Artificial  Intelligence (\text{\textsc{Xai}\xspace}) has rapidly ascended to the forefront of machine learning research, as the field grapples with the challenge of making deep neural networks more transparent and trustworthy. 
Recently, explainable artificial intelligence has come up with numerous method to tackle this challenge. 
A promising avenue is to use concept-based explanations, that is, high-level concepts instead of plain feature importance score \cite{poeta_concept-based_2023}. 
One method, that has risen to prominence in  Concept-based Explainable  Artificial  Intelligence (C-\text{\textsc{Xai}\xspace}), is 
Testing with Concept Activation  Vectors (\text{\textsc{Tcav}\xspace}), introduced by \citeauthor{kim_interpretability_2018}~\shortcite{kim_interpretability_2018}. 
\text{\textsc{Tcav}\xspace} quantifies the influence of human-understandable concepts (\emph{e.g.}, ``stripes'') on a specific class (\emph{e.g.}, ``zebra''). The core of the method is the Concept Activation Vector (\text{\textsc{Cav}\xspace}). 
This vector defines the direction of a concept in a latent layer as the normal vector of a linear boundary between the embeddings of concept examples and a set of randomly selected reference data, so-called random samples. 
 The resulting \text{\textsc{Tcav}\xspace} score measures the sensitivity of a prediction for this specific concept, as described in detail in Section~\ref{sec:tcav_introduction}.

A problem of \text{\textsc{Xai}\xspace} methods is their stability (also referred to as consistency). 
A method is considered stable if it provides identical or at least  similar explanations for the same model and input when applied repeatedly \cite{alvarezmelis_robustness_2018}. 
Failure to meet this standard can lower the trust users place in the explanation, a known issue for many \text{\textsc{Xai}\xspace} techniques \cite{krishna_the_2024}.
Due to its reliance on random sampling, \text{\textsc{Tcav}\xspace}'s results exhibit high variability from one run to another. 
\citeauthor{kim_interpretability_2018}~\shortcite{kim_interpretability_2018} acknowledged this issue and suggested running the method several times and reporting the average scores to mitigate the variability.
However, this solution is not perfect, as the average itself is still subject to variance. This leads to a fundamental question regarding computational resources: how can the variance of the result be minimized within a limited sampling budget? In other words, is it more effective to conduct a single run with a large number of samples, or to average the outcomes of multiple runs, each using a smaller sample set?

To answer this question, we conduct the (first) theoretical analysis of the influence of the number of random samples on the variance of \text{\textsc{Tcav}\xspace}. 
Based on this, we derive practical recommendations for a resource-efficient application of the method.\\
Our contributions are as follows:
\begin{itemize}
\item \textbf{Asymptotic variability of CAVs.} In the infinitely imbalanced regime (fixed concept set, $N\!\to\!\infty$ random references) we prove asymptotic normality of the penalized logistic CAV estimator (Thm.~\ref{thm:asymptotic_normality_cav}). Under some reasonable assumptions, the covariance trace decays as $\mathcal{O}(1/N)$ (Cor.~\ref{cor:covariance_trace}).
\item \textbf{Sensitivity and \text{\textsc{Tcav}\xspace}.} The asymptotic normality transfers to sensitivity scores; Under the same assumptions, their variance scales as $\mathcal{O}(1/N)$, whereas \text{\textsc{Tcav}\xspace} may retain $\Theta(1)$ variance due to borderline evaluation points.
\item \textbf{Multi-run averaging.} Averaging \text{\textsc{Tcav}\xspace} over $s$ independent runs reduces variance as $\mathcal{O}(1/s)$ (Conj.~\ref{conj:multi_run_variance}); we discuss the effect of mild dependence.
\item \textbf{Guidelines.} To stabilize \emph{\text{\textsc{Tcav}\xspace}}, prefer several modest-$N$ runs; to stabilize the \text{\textsc{Cav}\xspace} direction for downstream use, increase $N$ per run.

\end{itemize}
 We provide the code for all experiments in the supplementary material and will publicly release it after publication.

\section{Related Work}
\label{sec:related_work}
As mentioned earlier, \text{\textsc{Tcav}\xspace} is a very influential method among concept-based explainability approaches and has been extended and modified several times \cite{poeta_concept-based_2023}. 
For this reason, we focus on \text{\textsc{Tcav}\xspace}, since an analysis of its stability also cascades to its adaptations.
\medskip

\noindent 
\textbf{Influence of the (T)CAV method.} 
Many subsequent works use both {\textsc{Cav}\xspace}s and \textsc{Tcav}\xspace scores to represent concepts in latent space and analyze class-concept relationships.
For example, \label{acro:STCE} \text{\textsc{STCE}\xspace}~\cite{ji_spatial-temporal_2023} transfers the \textsc{Tcav}\xspace method to video data and thus allows a temporal view of concepts. Other methods, such as \label{acro:ACE}\text{\textsc{ACE}\xspace}~\cite{ghorbani_towards_2019}, \label{acro:ICE}\text{\textsc{Ice}\xspace}~\cite{zhang_invertible_2021} and \label{acro:COCOX}\text{\textsc{CoCoX}\xspace}~\cite{akula_cocox_2020}, adapt the calculation of the \textsc{Tcav}\xspace score, but in an unsupervised setting.
Although \label{acro:TCAV}\textsc{Tcav}\xspace provides mainly global class-concept relationships, \label{acro:CALVI} \textsc{Cavli}\xspace \cite{shukla_cavli_2023} and Visual-\textsc{Tcav}\xspace \cite{santis_visual-tcav_2024} adapted it to produce local explanations. Specifically, \textsc{Cavli}\xspace combines \textsc{Tcav}\xspace with \label{acro:LIME}\textsc{Lime}\xspace for instance-level interpretations, and Visual-\textsc{Tcav}\xspace adds saliency maps to localize concepts in the input, leveraging an Integrated Gradients \cite{sundararajan_axiomatic_2017} approach.

Subsequent work has also focused on improving the accuracy of {\textsc{Cav}\xspace}s and accommodating non-linearly separable concepts, which can not be fully captured by {\textsc{Cav}\xspace}s. Methods like \emph{Concept Activation Regions} \label{acro:CAR}(\textsc{Car}\xspace)~\cite{crabbe_concept_2022} and \emph{Concept Gradient} \label{acro:CG}(\textsc{Cg}\xspace)~\cite{bai_concept_2024} generalize {\textsc{Cav}\xspace}s to capture more complex concept boundaries, representing concepts through kernel-based regions or non-linear functions. \citeauthor{soni_adversarial_2020}~\shortcite{soni_adversarial_2020} improve \textsc{Cav}\xspace robustness in two ways. First, \emph{Adversarial} \textsc{Cav}\xspace introduces small adversarial perturbations to concept samples. This leads to more stable concept vectors. Second, \emph{Orthogonal Adversarial} \textsc{Cav}\xspace applies a Gram–Schmidt-like orthogonalization to further separate concept and non-concept subspaces, thus improving \textsc{Cav}\xspace separability.
Pattern-based $\text{\textsc{Cav}\xspace}s $, introduced by \citeauthor{pahde_navigating_2024}~\shortcite{pahde_navigating_2024}, address the issue of noise when learning {\textsc{Cav}\xspace}s. Instead of learning to separate positive and negative examples, they find a direction in activation space that best correlates with a concept's intensity. This, together with  sparsity constraints, yields concept representations that are more precise and robust against noise.

Finally, researchers such as \citeauthor{pahde_patclarc_2022}~\citeyear{pahde_patclarc_2022}, \citeauthor{dreyer_hope_2023}~\citeyear{dreyer_hope_2023}, and \citeauthor{bareeva_reactive_2024}~\citeyear{bareeva_reactive_2024} have also used \textsc{Tcav}\xspace to improve model performance and mitigate bias. They use Concept Activation Vectors to identify and remove undesirable concepts that the model learned during training.
\medskip

\noindent 
\textbf{Prior Theoretical Analysis of XAI.}
An important part of evaluating \text{\textsc{Xai}\xspace} methods is examining their consistency, \emph{i.e.,} the extent to which an explanation method provides  deterministic explanations for the same inputs to be explained. 
Such an analysis has already been carried out for some established \text{\textsc{Xai}\xspace} methods, such as \text{\textsc{Lime}\xspace}~\cite{ribeiro_why_2016}. \citeauthor{garreau_what_2021}~\shortcite{garreau_what_2021} show that the randomness and instability observed in \text{\textsc{Lime}\xspace} explanations are a direct result of not using enough samples to fit the linear model. When the number of generated samples is very large, the explanations converge to an explicit ``limit explanation.'' 
\citeauthor{visani_statistical_2021}~\shortcite{visani_statistical_2021} take a different approach by introducing two new indices to measure the instability and reliability of \text{\textsc{Lime}\xspace}. These indices give practitioners a tool to assess the trustworthiness of \text{\textsc{Lime}\xspace}'s outputs, which is demonstrated using a credit risk case study.

Although Concept Activation Vectors are widely used, their fundamental properties and limitations have been underexplored, with only few works systematically addressing this gap~\cite{nicolson_explaining_2025}. Key issues affecting the reliability and interpretability of \text{\textsc{Cav}\xspace}s include layer inconsistency~\cite{nicolson_explaining_2025}, concept entanglement~\cite{chen_concept_2020}, dependence on the spatial location of the concept~\cite{raman_understanding_2024}, and sensitivity to data variations~(\citeauthor{ramaswamy_overlooked_2023}~\citeyear{ramaswamy_overlooked_2023},~\citeauthor{soni_adversarial_2020}~\citeyear{soni_adversarial_2020}).

In this paper, we extend the last point, the \textit{sensitivity to data variations}. For this, we systematically analyze the variance of \text{\textsc{Cav}\xspace}s. In particular, we investigate how the number of random samples affects the stability of the calculated \text{\textsc{Cav}\xspace}s. To the best of our knowledge, there has been no  throughout investigation of this topic to date.

\section{Preliminaries and Notation}  
\label{sec:tcav_introduction}

\begin{figure*}[ht]
  \vskip 0.2in
  \centering
  \includegraphics[scale=0.45]{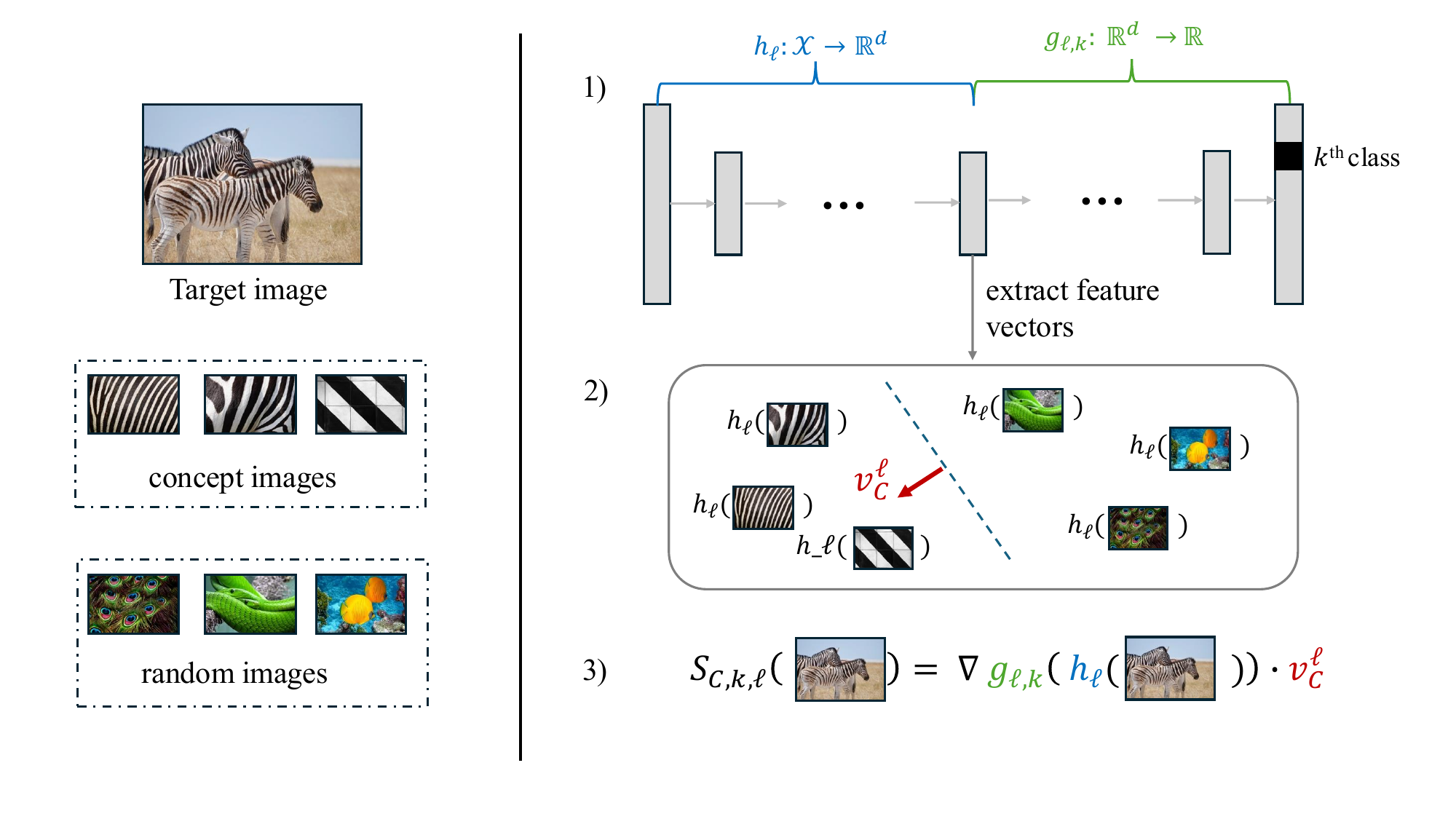}
  \caption[An overview of the workflow of the first part of \textsc{Tcav}\xspace]{\label{fig:tcav}
    An overview of the first part of the \textsc{Tcav}\xspace{} operating procedure: 
    1) \textsc{Tcav}\xspace{} extracts the activation vectors at a specific layer~$\ell$. 
    2) The \textsc{Cav}\xspace{} $v^{\ell}_{C}$, shown by the red arrow, is learned by training a binary linear classifier to differentiate between concept examples and random examples. 
    3) \textsc{Tcav}\xspace{} calculates the directional derivative $S_{C,k,\ell}(x)$ to measure how sensitive the model’s class prediction is to the Concept Activation Vector~$v^{\ell}_{C}$.
  }
  \vskip -0.2in
\end{figure*}

In this section, we describe the concept activation vectors and \text{\textsc{Tcav}} values utilised by \citeauthor{kim_interpretability_2018} \shortcite{kim_interpretability_2018} and thereby introduce our notation.

\subsection{Generating Concept Activation Vectors}

We can understand how a model works by looking at the concepts it learns in its hidden layers. For instance, a network might identify ``stripes'' as an important concept when classifying an image as a ``zebra''.
To formalize this, we split our model $f \colon \mathcal{X} \to \mathbb{R}^K$ intended for the classification of $K$ classes, into two parts $f \defeq g_{\ell} \circ h_{\ell} $, at a specific layer $\ell$. 
Here $h_{\ell}\colon \mathcal{X} \to \Reals^d$ is the model up to  layer~$\ell$, and $g_{\ell}\colon \mathbb{R}^d \to \mathbb{R}^K$ is the remaining part of the network, which transforms the embedding vectors in the latent space $\mathbb{R}^d$ into class logits. 
In other words, for a fixed class $k \in \{1,\dots,K\}$, the function
$
g_{\ell,k} \colon \mathbb{R}^d \to \mathbb{R}
$
maps a latent embedding $v \in \mathbb{R}^d$ to the logit (or score) of class $k$.
This is illustrated in Figure~\ref{fig:tcav}. 

To compute the \text{\textsc{Cav}\xspace} in layer~$\ell$, we collect:
\begin{enumerate}
\item \textit{Positive examples} of concept~$C$, \emph{i.e.}, a set of $n$ inputs $\{x_i\}_{i=1}^n \subset \mathcal{X} $ that clearly exhibit the concept $C$. These examples can be sourced from annotated datasets, like the \texttt{Broden} dataset \cite{bau_network_2017}, or be custom-curated for the specific concept under study.

\item  \textit{Negative (random) examples}, \emph{i.e.}, a set of $N$ inputs $\{z_j\}_{j=1}^N\subset \mathcal{X}$ selected uniformly among the training set. 
\end{enumerate}
We then compute the latent embeddings of the concepts and the random examples. 
Since raw feature vectors do not trivially encode the presence or absence of a concept $C$, \text{\textsc{Tcav}\xspace} trains a linear classifier to discriminate between positive $\bigl\{h_\ell(x_i)\bigr\}_{i=1}^n$ and negative $\bigl\{h_\ell(z_j)\bigr\}_{j=1}^N$ embeddings. 
The normal vector to the decision boundary learned by the classifier, oriented towards the concept examples, is defined as the Concept Activation Vector $v_C^\ell$ of concept~$C$.
Note, that the \text{\textsc{Cav}\xspace} depends on both the random sampling and the stochastic nature of the classifier training process. 
Several different classifiers can be employed:
\begin{itemize}
    \item The original \textsc{Tcav}\xspace implementation defaults to a linear Support Vector Machine (\texttt{SGDClassifier} with hinge loss~\cite{pedregosa_scikit-learn_2011}), as detailed in Appendix~\ref{appendix:hinge_loss}.
    
    \item A simpler approach, \texttt{Difference of Means}, proposed by \citeauthor{martin_interpretable_2019} \shortcite{martin_interpretable_2019}, is adopted in Visual-\textsc{Tcav}\xspace \cite{santis_visual-tcav_2024}. It calculates the \textsc{Cav}\xspace as the  difference between the mean embeddings of the concept and random samples. \citeauthor{santis_visual-tcav_2024} report that this estimator yields better \textsc{Cav}\xspace quality. This case is discussed in Appendix~\ref{appendix:visual_tcav}.
\end{itemize}
Despite the differences in these models, all of those different classifiers exhibit the same asymptotic normality behaviour, as demonstrated experimentally in Appendix~\ref{sec:additional-results}. 
However, in this paper, we focus on \texttt{LogisticRegression} with binary cross-entropy loss. We choose this method because it presents a more amenable mathematical analysis and is another prominent classifier available in the official \textsc{Tcav}\xspace libraries (\citeauthor{kokhlikyan_captum_2025}~\citeyear{kokhlikyan_captum_2025}, \citeauthor{kim_code_2025}~\citeyear{kim_code_2025}).

\subsection{Calculating TCAV Scores}
Once the \text{\textsc{Cav}\xspace} $v_C^{\ell}$ has been determined for the concept~$C$ in the layer~$\ell$, \text{\textsc{Tcav}\xspace}\ then evaluates the relevance of this concept for a class~$k$ by comparing the gradient of $g_{\ell,k}$ with the direction of $v_C^{\ell}$ using the dot product ($\cdot$). 
Specifically,  for an input $\mathbf{x} \in \mathcal{X}_k$, that is, an input $\mathbf{x}$ of class~$k$, \text{\textsc{Tcav}\xspace} defines the \textit{sensitivity score} of $\mathbf{x}$ as
\begin{equation}
S_{C,k,\ell}(\mathbf{x}) 
  \;\defeq \; 
  \nabla g_{\ell,k}\bigl( h_{\ell}(\mathbf{x}) \bigr) 
  \;\cdot\;
  v_C^{\ell}
\, .
\end{equation}
The score $S_{C,k,\ell}(\mathbf{x})$ measures a concept's influence on the classification of $\mathbf{x}$. A positive score means the concept pulls the prediction toward class $k$, while a negative score pushes it away.
To aggregate these local sensitivities, the \text{\textsc{Tcav}\xspace} \emph{score} is defined as the proportion of samples $\mathbf{x}$ from the class~$k$ whose sensitivity $S_{C,k,\ell}(\mathbf{x})$ is positive, that is,
\begin{equation}
\mathrm{\text{\textsc{Tcav}\xspace}}_{C,k,\ell}
  \;\defeq \;
  \frac{ \bigl|\{\mathbf{x} \in \mathcal{X}_k \mid S_{C,k,\ell}(\mathbf{x}) > 0 \}\bigr| }{ \bigl|\mathcal{X}_k\bigr| }
\, .
  \label{eq:tcav_score_algo}
\end{equation}
A score close to $1$ means concept $C$ shifts the classification toward class $k$, a low score (close to $0$) indicates a strong negative effect, while a score near $0.5$ implies no consistent effect.
Finally, a two-tailed $t$-test (see, \emph{e.g.}, \cite{hogg_introduction_2019}, Chapter~4.5) is performed on the sensitivity values to ensure that the effect captured by the \text{\textsc{Tcav}\xspace} score is not due to random variation.

\section{Theoretical analysis}
\label{sec:theoretical-analysis}
We now present our main theoretical results regarding the variability of \text{\textsc{Cav}\xspace}s obtained by logistic regression when the number of random examples goes to infinity. 
This reflects the practice well: in applications, the number of annotated concepts is usually restricted (for instance, a typical concept class in the Broden dataset has on average $50$ examples \cite{bau_network_2017}), whereas the number of random examples is virtually unlimited (the ImageNet dataset has $10^6$ images  \cite{deng_imagenet_2009}). 
Therefore, we analyse the stability of \text{\textsc{Cav}\xspace}s in this
(infinitely) imbalanced setting.

Let us now define the formal setting for our analysis.

\subsection{Theoretical setting}
\label{sec:theoretical-setting}
We consider two sets of feature vectors:
\begin{itemize}
    \item $n$ \textbf{fixed points}, $\{x_i\}_{i=1}^n$, which represent our ``concept'' samples and are assigned the class label $Y=1$.
    \item $N$ \textbf{random points}, $\{z_j\}_{j=1}^N$, which are drawn from the distribution $F_0$ of training samples and assigned the class label $Y=0$.
\end{itemize}
For simplicity, we  use $\{x_i\}_{i=1}^n$ and $\{z_j\}_{j=1}^N$ as shorthand for their respective latent representations, $\{h_\ell(x_i)\}$ and $\{h_\ell(z_j)\}$ in the definitions above as well as throughout the rest of this paper.
We denote the \emph{intercept} by $\alpha\in\Reals$ and the \emph{coefficients} by $\beta \in\Reals^d$. 
Following \citeauthor{owen_2007}~\shortcite{owen_2007}, for any input $w\in\Reals^d$, we write the logistic regression model  as
\begin{equation}
\label{eq:logistic-regression-model}
\condproba{Y=1}{X=w} = \sigma(\alpha + \beta^{\top}w)
\, ,
\end{equation}
where $\sigma(u) \defeq (1+\exps{-u})^{-1}$, with  $u \in\Reals$.
For any given $N$, we fit the logistic regression model Eq.~\eqref{eq:logistic-regression-model} by maximizing the $L^2$-penalized log-likelihood 
\begin{align}
\mathcal{L}_N^{(\lambda)}(\alpha, \beta) &\defeq  \sum_{i=1}^n \log \sigma(\alpha + \beta^\top x_i) \notag 
\\
&+ \sum_{j=1}^N \log(1 - \sigma(\alpha + \beta^\top z_j)) - \frac{\lambda}{2}\norm{\beta}^2 \label{eq:objective-function}
\, ,
\end{align}
where $\lambda$ is the regularization parameter. 
We denote by $(\alpha_N, \beta_N)$ the unique maximizer of this function. 
The theoretical coefficient $\beta_N$ corresponds exactly to our empirical \text{\textsc{Cav}\xspace} $v_C^{\ell}$ calculated with $N$ random samples.
Note, however, that our analysis does not consider any effects from the optimisation procedure.

\subsection{Variability of Concept Activation Vectors}
\label{sec:main-results}

In this section, we prove asymptotic normality of \text{\textsc{Cav}\xspace}s in the special case of infinite imbalance. 
This specific logistic regression setting, where the number of samples in one class grows infinitely large while the other is held fixed, was previously analyzed by \citeauthor{owen_2007}~\shortcite{owen_2007} and \citeauthor{goldman_zhang_2022}~\shortcite{goldman_zhang_2022}.
Our result, similar to that of \citeauthor{owen_2007}~\shortcite{owen_2007}, is conditional upon the ``Surrounded Mean'' Assumption~\ref{assumption:surround_general}. This is a weak condition, postulating that infinite random samples surround the concept's mean  $\bar{x}$  in the latent space, which generally holds in practice (see Appendix~\ref{appendix:cav_log}).

\begin{restatable}[Surrounded Mean]{assumption}{SurroundedMean}
\label{assumption:surround_general}
The distribution $F_0$ on $\mathbb{R}^d$ has the point $\bar{x}$ ``surrounded,'' that is
\begin{equation}
\int_{(z-\bar{x})^\top\omega>\varepsilon} \Diff F_0(z) > \delta
\end{equation}
holds for some $\varepsilon > 0$, some $\delta > 0$ and all $\omega \in \Omega$ where $\Omega = \{\omega \in \mathbb{R}^d | \omega^\top\omega = 1\}$ is the unit sphere in $\mathbb{R}^d$ and $\bar{x}$ is the mean of the concept embeddings $\{x_i\}_{i=1}^n$.
\end{restatable}
This mild geometric condition says that, in every direction, the random references populate a non-trivial cap around $\bar x$. Our second assumption is a technical one. We assume that the limit of the Hessian of Eq.~\eqref{eq:objective-function} is exists and invertible, which is a standard condition in the analysis of maximum likelihood estimators. 
\begin{restatable}[Limit Hessian]{assumption}{LimitHessian}
Assume that the matrix $H_0$, defined as the limit in probability of the normalized Hessian of the loss function,
\[
H_0 := \lim_{N\to\infty} \left( -\frac{1}{N} \nabla^2_\beta \mathcal{L}^{(\lambda)}(\alpha_N, \beta_N) \right)
\]
exists and is invertible.
\label{assumption:hessian}
\end{restatable}
We can now state our main result:
\begin{restatable}[Asymptotic variability of CAVs]{thm}{maintheorem}
\label{thm:asymptotic_normality_cav}
Assume that Assumptions~\ref{assumption:surround_general} and~\ref{assumption:hessian} hold, that the limit $A_0 = \lim_{N\to\infty} N \exps{\alpha_N}$ exists and  that $\beta_N \xrightarrow{p} \beta_0$, where  $\xrightarrow{p}$ denotes convergence in probability as $N \to \infty$.
Then $\beta_N$ is asymptotically normal. 
Namely,
\begin{equation}
\label{eq:asymptotic-normality}
\sqrt{N}(\beta_N - \beta_0) \xrightarrow{D} \mathcal{N}(0, \Sigma)
\, ,
\end{equation}
where 
$\xrightarrow{D}$ denotes convergence in distribution. A precise expression of $\Sigma$ is given in Appendix~\ref{appendix:cav_log} (Eq.~\eqref{eq:sigma}).
\end{restatable}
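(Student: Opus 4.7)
The plan is to follow the standard M-estimator route for MLE asymptotic normality, adapted to the infinitely imbalanced regime studied by \citeauthor{owen_2007}~\shortcite{owen_2007}. The first-order conditions $\nabla \mathcal{L}_N^{(\lambda)}(\alpha_N, \beta_N) = 0$ are Taylor-expanded around an appropriate ``moving limit'' point, the Hessian is inverted using Assumption~\ref{assumption:hessian}, and the remaining score term is shown to satisfy a central limit theorem. Slutsky's theorem then delivers joint asymptotic normality for $(\alpha_N, \beta_N)$, from which the $\beta_N$-marginal yields the claim. Because the hypothesis assumes $\alpha_N \to -\infty$ with $N e^{\alpha_N} \to A_0$, the natural expansion point is not a fixed $(\alpha_0, \beta_0)$ but the $N$-dependent point $(\alpha_N^\star, \beta_0)$ with $\alpha_N^\star \defeq \log(A_0/N)$; equivalently, one may reparametrize by $A \defeq N e^\alpha$ so that all quantities stay finite throughout.

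\textbf{Key steps.} First, I would derive the first-order conditions in closed form and use the infinite-imbalance approximations $1 - \sigma(\alpha_N + \beta_N^\top x_i) \to 1$ for concept points (since $\alpha_N \to -\infty$) and $N\,\sigma(\alpha_N + \beta_N^\top z) \to A_0 e^{\beta_0^\top z}$ for random points to identify the pair $(\beta_0, A_0)$ as the solution of
\begin{equation*}
n = A_0\,\mathbb{E}_{Z\sim F_0}\bigl[e^{\beta_0^\top Z}\bigr], \qquad n\bar{x} = A_0\,\mathbb{E}\bigl[e^{\beta_0^\top Z} Z\bigr] + \lambda\beta_0 \, .
\end{equation*}
Second, I would mean-value Taylor-expand the score to obtain
\begin{equation*}
\sqrt{N}\bigl((\alpha_N,\beta_N) - (\alpha_N^\star, \beta_0)\bigr) = -\bigl[\tfrac{1}{N}\nabla^2 \mathcal{L}_N^{(\lambda)}(\tilde\alpha,\tilde\beta)\bigr]^{-1}\tfrac{1}{\sqrt{N}}\,\nabla \mathcal{L}_N^{(\lambda)}(\alpha_N^\star, \beta_0) \, ,
\end{equation*}
so that Assumption~\ref{assumption:hessian} (extended to include the intercept block) reduces the right-hand factor to a deterministic invertible matrix acting on a centered sum. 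Third, I would apply a triangular-array central limit theorem (Lindeberg--Feller) to the random-sample part of $\nabla \mathcal{L}_N^{(\lambda)}(\alpha_N^\star, \beta_0)$: this is a sum of iid vectors with $N$-dependent weights $\sigma(\alpha_N^\star + \beta_0^\top z_j)$ of order $1/N$, and appropriate centering plus rescaling by $\sqrt{N}$ produces a finite limit covariance $V$. Combining via Slutsky yields joint normality of $(\alpha_N, \beta_N)$, and reading off the $\beta$-block gives the stated $\Sigma$ as a Schur complement of $H_0^{-1} V H_0^{-1}$.

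\textbf{Main obstacle.} The principal difficulty lies in the CLT step: because $\alpha_N \to -\infty$, the random-sample contribution is a triangular array whose summands depend on $N$, and verifying the Lindeberg condition requires moment control on $F_0$ (for instance, integrability of $e^{2\beta_0^\top Z}\norm{Z}^{2+\delta}$ for some $\delta > 0$). This integrability is not explicit in Assumption~\ref{assumption:surround_general} and will have to either be added as a hypothesis or derived from the surrounded-mean condition combined with a tail bound on $F_0$. A secondary difficulty is that the Taylor remainder must be controlled uniformly over a shrinking neighborhood of the \emph{moving} point $(\alpha_N^\star, \beta_0)$, which calls for a locally uniform strengthening of Assumption~\ref{assumption:hessian}. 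The penalty term itself is benign: it adds a deterministic $-\lambda I$ block to the Hessian and a deterministic shift to the score, affecting neither the centering nor the variance of the CLT.
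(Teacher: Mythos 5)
Your proposal is correct in outline and follows the same M-estimation skeleton as the paper's proof: first-order conditions, Taylor expansion of the score, convergence of the normalized Hessian to an invertible limit $H_0$, a central limit theorem for the scaled score, and Slutsky's theorem. Your two limit equations for $(\beta_0, A_0)$ are equivalent (up to the paper's centering of the exponent at $\bar{x}$) to the paper's Lemma~\ref{lemma:asymptotic_consistency_of_weighted_mean}, $\int (z-\bar{x})e^{z^\top\beta_0}\,\Diff F_0(z) = -\lambda e^{\bar{x}^\top\beta_0}\beta_0/A_0$, which is exactly the identity the paper uses to show that the non-zero mean of the score cancels the $-\lambda\beta_0/\sqrt{N}$ penalty contribution. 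Where you genuinely differ is in the handling of the intercept: the paper expands only in $\beta$ around $\beta_0$ with the \emph{random} $\alpha_N$ plugged into the score, and then treats the summands $\rho(z_i;\beta_0)\propto e^{\alpha_N}(z_i-\bar{x})e^{z_i^\top\beta_0}$ as i.i.d., even though they share the data-dependent factor $e^{\alpha_N}$; your joint expansion around the deterministic moving center $(\alpha_N^\star,\beta_0)=(\log(A_0/N),\beta_0)$ with a Lindeberg--Feller triangular-array CLT sidesteps that dependence cleanly, at the cost of extending Assumption~\ref{assumption:hessian} to the full $(\alpha,\beta)$ block and adding a Schur-complement step to extract the $\beta$-marginal. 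The moment condition you flag, integrability of $e^{2\beta_0^\top Z}\norm{Z}^{2+\delta}$, corresponds to the condition $\expec{\norm{Z}^2 e^{2Z^\top\beta_0}}<\infty$ that the paper invokes to control its remainder, so that obstacle is real and acknowledged on both sides. Two small cautions: after the $1/N$ normalization the penalty block $-\lambda I$ vanishes from the limiting Hessian, so it should not survive into $H_0$; and the claim that the penalty shift to the score is ``benign'' is only true \emph{because} it cancels against the non-zero score mean induced by the penalized limit equation --- your setup encodes this cancellation implicitly, but it is the crux of the extension beyond \citeauthor{owen_2007}'s unpenalized case and should be verified explicitly rather than asserted.
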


\begin{figure}[t!]
\centering
\includegraphics[width=0.45\textwidth]{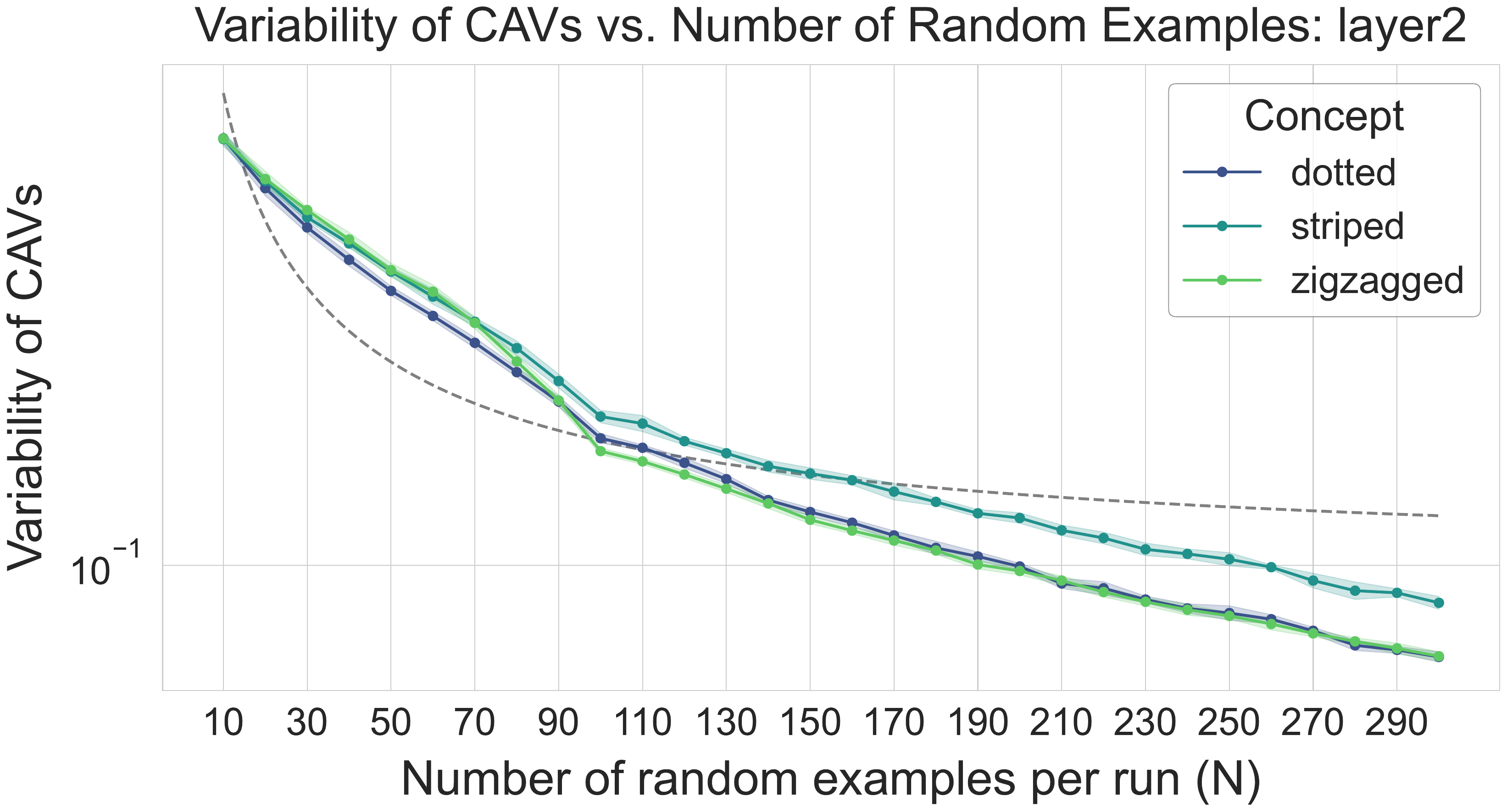}
\caption[Variability of \text{\textsc{Cav}\xspace}s vs. Number of Random Samples Used]{\label{fig:variance_of_cavs} Mean variability of \text{\textsc{Cav}\xspace}s for \texttt{ResNet} at layer \texttt{`layer2`} as a function of the number of random examples per run ($N$), shown for ``striped''-, ``zigzagged''-, and ``dotted'' concepts. Error bars indicate $\pm 1$ standard deviation; the $y$-axis is log-scaled. 
Variance of \text{\textsc{Cav}\xspace}s is estimated by taking the sum of per-feature variances across ten independent runs. The dashed line shows a theoretical fit of the form $y = a/N + b$.
}
\end{figure}
Informally, this means that for large values of $N$, $\sqrt{N}(\beta_N-\beta_0)$ behaves as a multivariate normal distribution $\mathcal{N}(0, \Sigma)$.
This finding is consistent with standard statistical theory. The \text{\textsc{Cav}\xspace} $\beta_N$
behaves like a logistic regression estimator \cite{goldman_zhang_2022}, whose variance is known to scale inversely with the sample size.

Let us now return to our central topic:  the variability of the \text{\textsc{Cav}\xspace}s. A direct consequence of Theorem~\ref{thm:asymptotic_normality_cav} is that the \text{\textsc{Cav}\xspace}s become more stable with increasing number $N$ of random samples used.
Coming back to variability, we choose the trace of the covariance matrix of the \text{\textsc{Cav}\xspace}s , \emph{i.e.}, the sum of the per-feature variance, as a variability measure.  This measure will equal zero, \emph{if and only if} the \text{\textsc{Cav}\xspace}s had no variability at all.

\begin{definition}[Variance of CAVs]
We define the variance of~$\beta_N$ as
\begin{align*}
\var{\beta_N} &\defeq   
\sum_j \cov{\beta_N}_{jj} \\
&= \sum_j \mathbb{E}\Bigl[\bigl(\beta_N- \expec{\beta_N}\bigr)\bigl(\beta_N- \expec{\beta_N}\bigr)^\top\Bigr]_{jj} \; 
\, .
\end{align*} 
\end{definition}

Using this definition, we are now able to examine the variability of \text{\textsc{Cav}\xspace}s .

\paragraph{Asymptotic Behavior of the Covariance Trace}
\label{cor:covariance_trace}

Theorem~\ref{thm:asymptotic_normality_cav} gives $\sqrt{N}(\hat\beta_N-\beta_0)\Rightarrow\mathcal{N}(0,\Sigma)$.
Assuming, that the sequence $\{\sqrt{N}(\hat\beta_N-\beta_0)\}_{N\ge1}$ is uniformly integrable, this entails
\(
\Var(\hat\beta_N)(\hat\beta_N)\big)=\mathcal{O}\!\left(\frac{1}{N}\right).
\)
See Appendix~\ref{appendix:cav_log}, Corollary~\ref{cor:covariance_trace} for a precise statement and proof. 
Essentially, this means that as the random dataset size ($N$) grows, the variability in the direction of the Concept Activation Vector ($\beta_N$) shrinks at a rate proportional to $1/N$. For instance, multiplying the number of random examples $N$ by ten reduces the variance of the estimator by a factor of ten.
This is in accordance with our experimental results presented in Figure~\ref{fig:variance_of_cavs}. Additional experimental findings for different datasets and models are presented in Section~\ref{sec:experiments}.

\paragraph{Sketch of the proof of Theorem~\ref{thm:asymptotic_normality_cav}.}
The proof uses and adapts methods from \cite{owen_2007} and \cite{goldman_zhang_2022}. The main difference with their work is that we take the $L^2$ penalization of the objective \eqref{eq:objective-function} into account, thus extending their results. 
This leads to a different, but still asymptotically normal convergence behaviour in $\mathcal{O}(N^{1/N})$.
We proceed as follows: First, we perform a \emph{Taylor-expansion} on the gradient of our loss function \eqref{eq:objective-function} at the optimum, where $\nabla_\beta\mathcal{L}^{(\lambda)}(\beta_N)=0$, around the true parameter $\beta_0$. This allows us to express $\sqrt{N}(\beta_N-\beta_0)$ as the product of the inverse Hessian and the scaled score. Then, the normalized Hessian converges to a constant by the Law of Large Numbers. Vitally, the scaled score converges to a zero-mean normal distribution via the Central Limit Theorem, as the term from the $L^2$-penalty is perfectly cancelled by the score's expectation. \citeauthor{slutzky_summation_1937}'s Theorem~\cite{slutzky_summation_1937} then combines these results to establish the asymptotic normality of $\beta_N$. The full proof is provided in Appendix~\ref{appendix:cav_log}.
\qed

\subsection{Variance of Sensitivity Scores}
\label{sec:variance_sensitivity_scores}
We now analyze whether the asymptotic normality of \text{\textsc{Cav}\xspace}s transfers to the sensitivity scores. We restate them as
\begin{equation}
\label{eq:taylor_expansion}
S(\mathbf{x},  {\beta}_N) \;:=\; 
  \nabla g_{\ell,k}\bigl( h_{\ell}(\mathbf{x}) \bigr) 
  \;\cdot\;
   {\beta}_N
  \, .
\end{equation}
The following corollary shows that we can transfer the asymptotic convergence of the \text{\textsc{Cav}\xspace}s.  The variance of the sensitivity score decreases at the rate $ \mathcal{O}(1/N)$, like the variance of $  {\beta}_N $, as demonstrated in Figure~\ref{fig:variance_of_sensitivity_scores}. 

\begin{figure}[h!]
\centering
\includegraphics[width=0.45\textwidth]{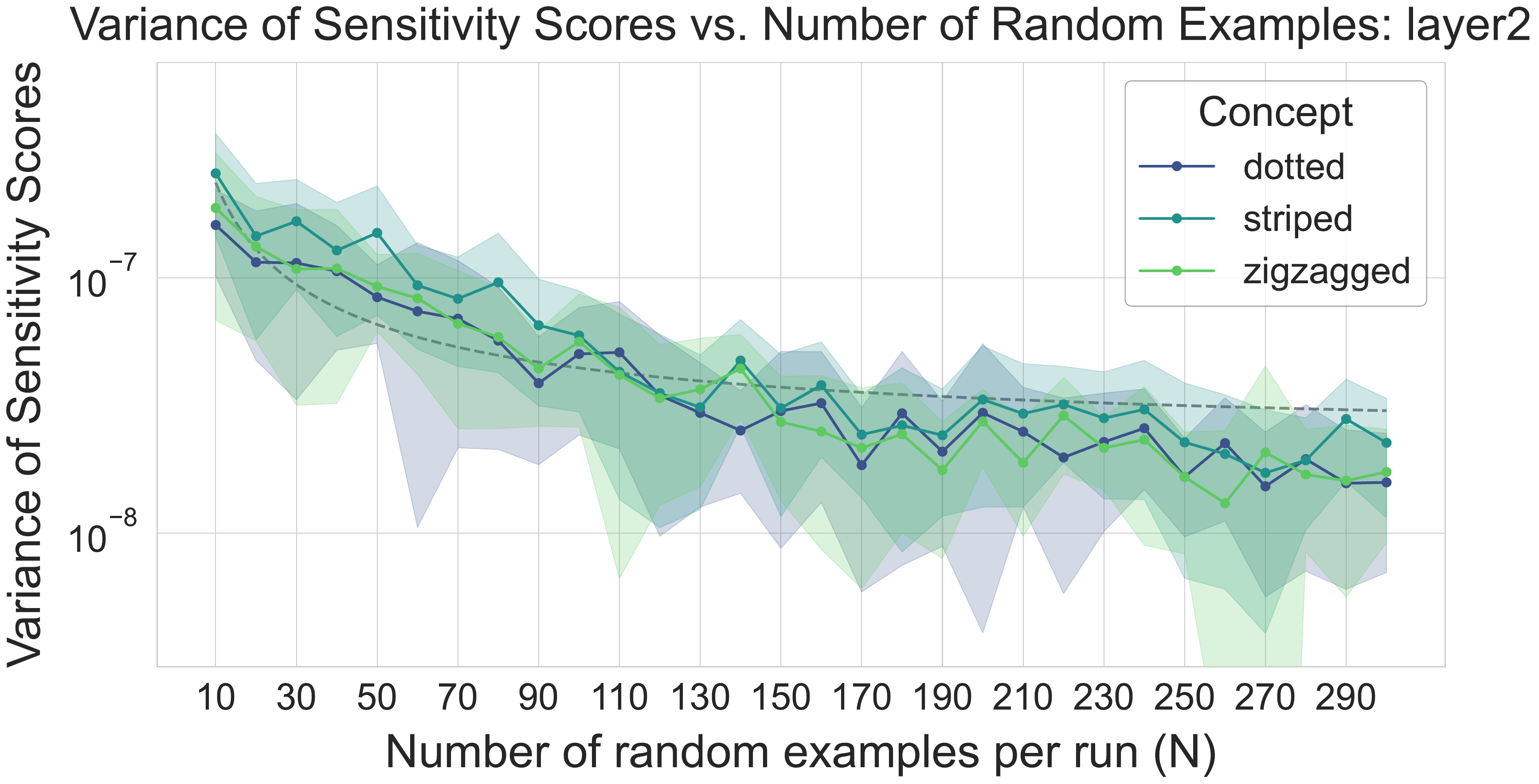}
\caption[Mean Variance of Sensitivity Scores vs. Number of Random Samples Used]{\label{fig:variance_of_sensitivity_scores}Mean variance of sensitivity scores vs. number $N$ of random samples used for the class ``zebra'' in the Imagenet classification setting for one input. The $y$-value shows the geometric mean variance of these sensitivity scores $\pm 1$ standard deviation, averaged over all $r=10$ runs for a fixed positive input of a zebra. Log scale cut at $\geq 3 \times 10^{-9}$ so near‑zero bands don't dominate.}
\end{figure}

\begin{corollary}[Asymptotic variance of the Sensitivity Scores]
\label{prop:asymptotic_normality_sensitivity}
Under the conditions specified in Theorem~\ref{thm:asymptotic_normality_cav}, the sensitivity score $ S(\mathbf{x},  {\beta}_N) $ satisfies
\begin{equation}
Z_N := \sqrt{N} \left( S(\mathbf{x}, \beta_N) - S(\mathbf{x}, \beta_0) \right) \xrightarrow{D} \mathcal{N}\left( 0, V(\mathbf{x}) \right),
\label{eq:Z_N}
\end{equation}
where the asymptotic variance $ V(\mathbf{x}) $ is given by
\begin{equation}
V(\mathbf{x}) := \nabla g_{\ell,k}\bigl( h_{\ell}(\mathbf{x}) \bigr)^\top \cdot \Sigma \cdot \nabla g_{\ell,k}\bigl( h_{\ell}(\mathbf{x}) \bigr) \, .
\label{eq:asymptotic_variance_sensitivity}
\end{equation}
\end{corollary}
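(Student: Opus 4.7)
The plan is to observe that the sensitivity score is a \emph{linear} functional of the Concept Activation Vector, so the asymptotic normality established in Theorem~\ref{thm:asymptotic_normality_cav} transfers by a one-line continuous mapping argument. Concretely, I would introduce the shorthand $a(\mathbf{x}) \defeq \nabla g_{\ell,k}(h_\ell(\mathbf{x}))\in\Reals^d$, which is deterministic once the trained network and the input $\mathbf{x}$ are fixed (it does not depend on the random references $\{z_j\}_{j=1}^N$ at all). With this notation, the sensitivity score reduces to $S(\mathbf{x},\beta) = a(\mathbf{x})^\top\beta$, and the centred, rescaled score can be rewritten as
\[
Z_N \;=\; \sqrt{N}\bigl(S(\mathbf{x},\beta_N) - S(\mathbf{x},\beta_0)\bigr) \;=\; a(\mathbf{x})^\top\,\sqrt{N}(\beta_N-\beta_0).
\]

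The second step is to apply the continuous linear map $v\mapsto a(\mathbf{x})^\top v$ to the weak convergence $\sqrt{N}(\beta_N-\beta_0)\xrightarrow{D}\mathcal{N}(0,\Sigma)$ given by Theorem~\ref{thm:asymptotic_normality_cav}. Because linear maps on $\Reals^d$ are continuous, the continuous mapping theorem (equivalently, the delta method applied to the affine functional $\beta\mapsto a(\mathbf{x})^\top\beta$, whose Jacobian is simply $a(\mathbf{x})^\top$) yields $Z_N \xrightarrow{D} a(\mathbf{x})^\top W$ with $W\sim\mathcal{N}(0,\Sigma)$. Any linear combination of a multivariate Gaussian is Gaussian, so $a(\mathbf{x})^\top W\sim\mathcal{N}\bigl(0,\,a(\mathbf{x})^\top\Sigma\,a(\mathbf{x})\bigr)$, which is precisely the asymptotic variance $V(\mathbf{x})$ claimed in Eq.~\eqref{eq:asymptotic_variance_sensitivity}.

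There is essentially no substantive obstacle in this argument because the sensitivity is linear, rather than nonlinear, in $\beta_N$: no Taylor remainder has to be controlled, and no regularity of $g_{\ell,k}$ beyond existence of the gradient at $h_\ell(\mathbf{x})$ is needed. The only mild technical point I would make explicit is that $a(\mathbf{x})$ is a fixed (non-random) vector with respect to the sampling of the references, so it can be pulled out of the asymptotic statement without measurability issues. If one additionally wished to translate the convergence in distribution into a statement about the finite-$N$ variance (as suggested by Figure~\ref{fig:variance_of_sensitivity_scores}), one would combine this corollary with the uniform integrability argument underlying Corollary~\ref{cor:covariance_trace} to conclude $\Var\bigl(S(\mathbf{x},\beta_N)\bigr) = V(\mathbf{x})/N + o(1/N)$, recovering the empirically observed $\mathcal{O}(1/N)$ decay.
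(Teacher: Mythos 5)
Your argument is correct and matches the paper's own justification: the paper likewise notes that $S(\mathbf{x},\beta_N)$ is a linear functional (dot product) of $\beta_N$, so the asymptotic normality of $\sqrt{N}(\beta_N-\beta_0)$ from Theorem~\ref{thm:asymptotic_normality_cav} transfers directly, with variance $\nabla g_{\ell,k}(h_\ell(\mathbf{x}))^\top\Sigma\,\nabla g_{\ell,k}(h_\ell(\mathbf{x}))$. Your write-up is in fact slightly more explicit (naming the continuous mapping theorem and the uniform-integrability step needed to pass to finite-$N$ variances), but it is the same proof.
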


This corollary is a direct consequence of Theorem~\ref{thm:asymptotic_normality_cav} and the properties of multivariate normal distributions. Since the sensitivity score $S(\mathbf{x}, \beta_N)$ is a linear transformation (specifically, a dot product) of the random vector $\beta_N$, the asymptotic normality of $\sqrt{N}(\beta_N - \beta_0)$ transfers directly to the scaled sensitivity scores. The variance of the sensitivity score therefore diminishes at a rate of $\mathcal{O}(1/N)$, confirming that a larger sample size also yields more stable and trustworthy sensitivity scores, besides improved \text{\textsc{Cav}\xspace}s.

\subsection{Variance of TCAV Scores}
\label{sec:variance_tcav_scores}

Extending our variance analysis to the \text{\textsc{Tcav}\xspace} scores in Equation~\eqref{eq:tcav_score_algo}, we make a surprising observation. In practice, the variance does not always decrease with more random samples, as one might assume.
This means that, experimentally, the variance of the \text{\textsc{Tcav}\xspace} scores is approximately independent of the number $N$ of random embedding vectors used (see Figure~\ref{fig:variance_tcav}).

\begin{figure}[ht!]
\centering
\includegraphics[width=0.48 \textwidth]{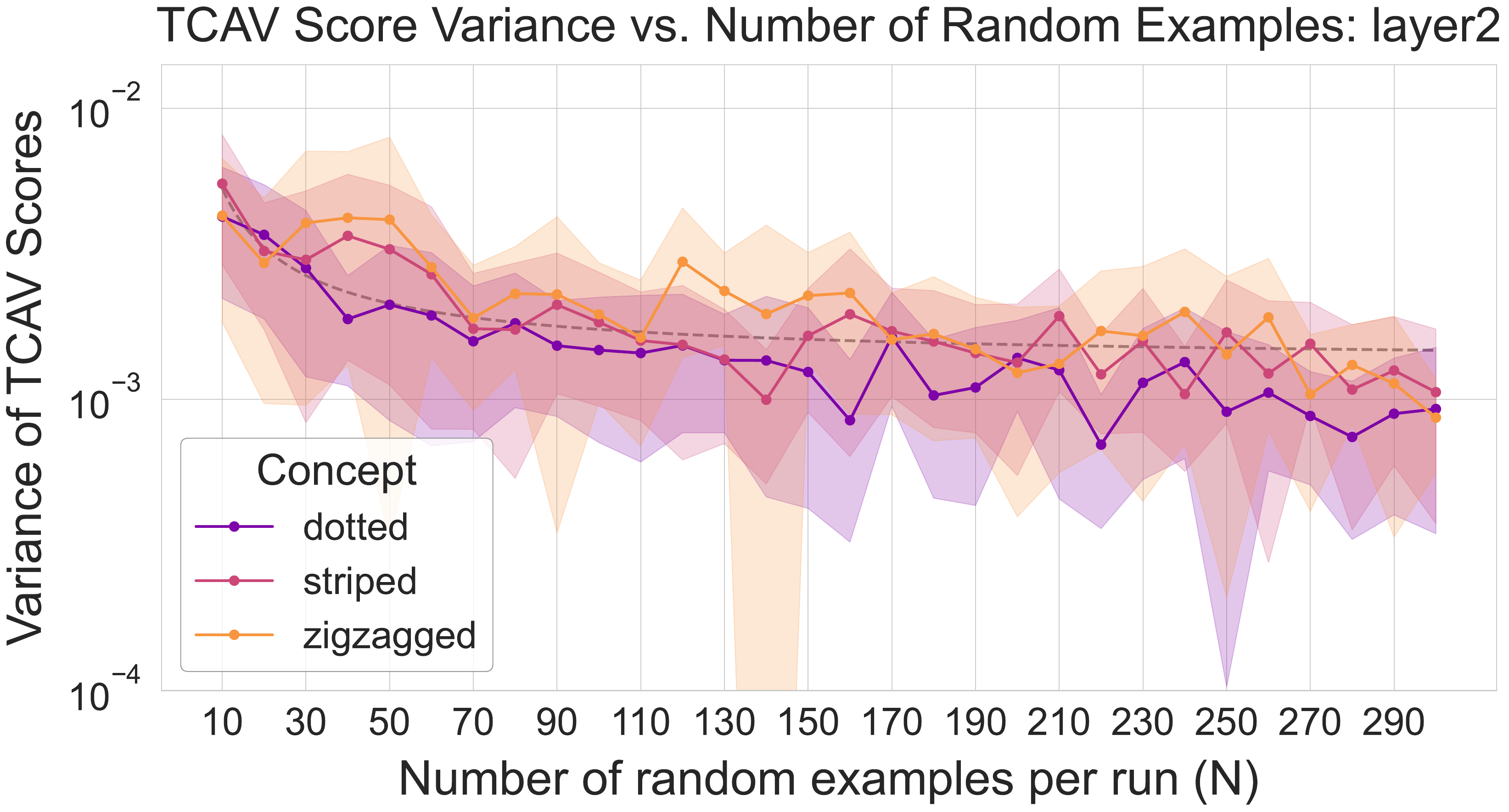}
\caption[Variance of the \text{\textsc{Tcav}\xspace} scores vs. Number of Random Samples Used ]{\label{fig:variance_tcav}Mean variance of \text{\textsc{Tcav}\xspace} scores at layer \texttt{`layer2`} vs. the number of random examples per concept set $N$ for ``striped''-, ``zigzagged''-, and ``dotted'' concepts on the Imagenet dataset; error bars denote $\pm 1$ standard deviation. $Y$‑axis clipped at $\geq 10^{-4}$ to reduce distortion from near‑zero standard deviations. 
} 
\end{figure}

Our intuition for this quite surprising fact is, that across the embeddings of samples from a specific class, over which we calculate the \text{\textsc{Tcav}\xspace} score, a subset of them may lie \textbf{on} or \textbf{near} the decision boundary. This makes their classification highly sensitive to small changes in the model. Those samples, which we call ``borderline points'', therefore still contribute to the variance of the \text{\textsc{Tcav}\xspace} score. This holds even when the variance of the \text{\textsc{Cav}\xspace}s vanishes asymptotically. For any other ``non-borderline'' point, the classification is asymptotically stable, and its contribution to the variance is negligible.
The covariance between any two borderline points, on the other side, is a constant value, $\mathcal{O}(1)$, that does not decrease as the \text{\textsc{Cav}\xspace}  estimate improves. Therefore, the total variance is dominated by the sum over the pairs of  ``borderline points'' and $\operatorname{Var}(\mathrm{\TCAV}) = \mathcal{O}(1).$ More examples of this behavior are provided in Appendix~\ref{sec:additional-results}. 
Based on our empirical and theoretical observations we can now give concrete recommendations for action.

\subsection{Recommendations for practice}
\label{sec:multi_run}
\citeauthor{kim_interpretability_2018}~\shortcite{kim_interpretability_2018} already incorporated into their official implementation the method of running the \text{\textsc{Tcav}\xspace} algorithm multiple times with $s$ different random sets and then calculating the mean value of the obtained \text{\textsc{Tcav}\xspace} scores. 
Precisely, the multi-run approach partitions a set of $R$ random samples into $s$ disjoint subsets, each of size $N = R/s$. For each subset $ \in [1,\dots,s]$, a separate logistic regression is trained to find a \text{\textsc{Cav}\xspace} $\beta_{N}^{(j)}$. This yields $s$ individual \text{\textsc{Tcav}\xspace} scores 
\[
T_{j} := \mathrm{TCAV}\bigl(\beta_{N}^{(j)}\bigr) =
\frac{ \bigl|\{\mathbf{x} \in \mathcal{X}_k \mid S(\mathbf{x}, \beta_{N}^{(j)}) > 0 \}\bigr| }{ \bigl|\mathcal{X}_k\bigr| } \;.
\]
The final \textbf{multi-run \text{\textsc{Tcav}\xspace} score}, $T_{\mathrm{multi}}$, is the average of these individual scores $ T_{\mathrm{multi}} \defeq \frac{1}{s}\sum_{j=1}^{s}T_{j}.$
Based on our observations, averaging \text{\textsc{Tcav}\xspace} scores over multiple runs  is indeed the most favourable method. Our analysis shows that as $s$ increases, the variance decreases as $\mathcal{O}(1/s)$ (see Figure~\ref{fig:var_tcav_multi}). 
This empirical finding aligns well with our theoretical expectations, as described below in Conjecture~\ref{conj:multi_run_variance}.
\begin{conjecture}
\label{conj:multi_run_variance}
Assuming $\operatorname{Var}(\mathrm{\TCAV}) = \mathcal{O}(1)$, the variance of the multi-run score $T_{\mathrm{multi}}$ scales inversely with the number of subsets $s$:
\[
    \operatorname{Var}\,\bigl(T_{\mathrm{multi}}\bigr) = \mathcal{O}\left(\frac{1}{s}\right).
\]
\end{conjecture}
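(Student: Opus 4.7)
}
The plan is to reduce the claim to the elementary formula for the variance of an average of independent random variables, and then to quantify how the construction of the multi-run estimator guarantees this independence. First, I would fix the concept embeddings $\{x_i\}_{i=1}^n$ and the evaluation set $\mathcal{X}_k$, and work conditionally on them throughout: both are treated as deterministic in the theoretical setting of Section~\ref{sec:theoretical-setting}, so all randomness comes from the draws of the random references. Under the multi-run protocol, the pool of $R$ random samples is split into $s$ disjoint subsets of size $N=R/s$, each drawn i.i.d.\ from $F_0$; therefore the subsets themselves are mutually independent, and the \text{\textsc{Cav}\xspace}s $\beta_{N}^{(1)},\dots,\beta_{N}^{(s)}$ obtained by running the penalized logistic regression of Eq.~\eqref{eq:objective-function} on each subset independently are i.i.d.\ random vectors.

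Second, since each $T_j$ is a measurable (deterministic) function of $\beta_N^{(j)}$ alone, namely
\[
T_j = \frac{1}{|\mathcal{X}_k|} \sum_{\mathbf{x}\in\mathcal{X}_k} \mathbf{1}\!\left\{ \nabla g_{\ell,k}(h_\ell(\mathbf{x})) \cdot \beta_N^{(j)} > 0 \right\},
\]
the i.i.d.\ property carries over: $T_1,\dots,T_s$ are i.i.d.\ with common variance $\operatorname{Var}(T_1) = \operatorname{Var}(\mathrm{TCAV})$. Applying the standard identity for the variance of an average of independent random variables then yields
\[
\operatorname{Var}\!\bigl(T_{\mathrm{multi}}\bigr) = \frac{1}{s^2} \sum_{j=1}^s \operatorname{Var}(T_j) = \frac{\operatorname{Var}(\mathrm{TCAV})}{s},
\]
which is $\mathcal{O}(1/s)$ by the standing assumption $\operatorname{Var}(\mathrm{TCAV}) = \mathcal{O}(1)$. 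The bound is also tight since no cross-covariances appear.

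The main obstacle, and the reason the statement is phrased as a conjecture rather than a theorem, lies in justifying the exact independence hypothesis in realistic settings. Two points deserve care. First, the optimizer used to fit each logistic regression may inject run-to-run correlations (shared initialization, shared pseudo-random stream, early stopping): strictly, the argument above is about the population maximizer of Eq.~\eqref{eq:objective-function} given each subset, and one should explicitly assume that the training procedure is independent across runs. Second, in some implementations the partition is not disjoint (overlapping or resampled reference pools), in which case $\beta_N^{(j)}$ and $\beta_N^{(j')}$ share common random inputs and become only weakly dependent. Handling this regime rigorously would require bounding the covariance $\operatorname{Cov}(T_j,T_{j'})$, for instance via a mixing-type argument on the fraction of shared samples, and would show that the $1/s$ rate degrades by an additive $\mathcal{O}(\rho)$ correlation term; this is the only nontrivial step and the reason the rate is stated asymptotically.
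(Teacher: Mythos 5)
Your argument coincides with the paper's own justification (given only as an ``Intuition'' paragraph, which is why the statement is left as a conjecture): disjoint reference subsets yield independent $\beta_N^{(j)}$, hence independent $T_j$, and the variance of their sample mean is $\sigma^2/s=\mathcal{O}(1/s)$. Your additional remarks on measurability, optimizer-induced correlations, and overlapping reference pools make explicit the same caveats the paper only hints at, but the core reasoning is identical.
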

\begin{figure}[h!]
    \centering
    \includegraphics[width=0.45\textwidth]{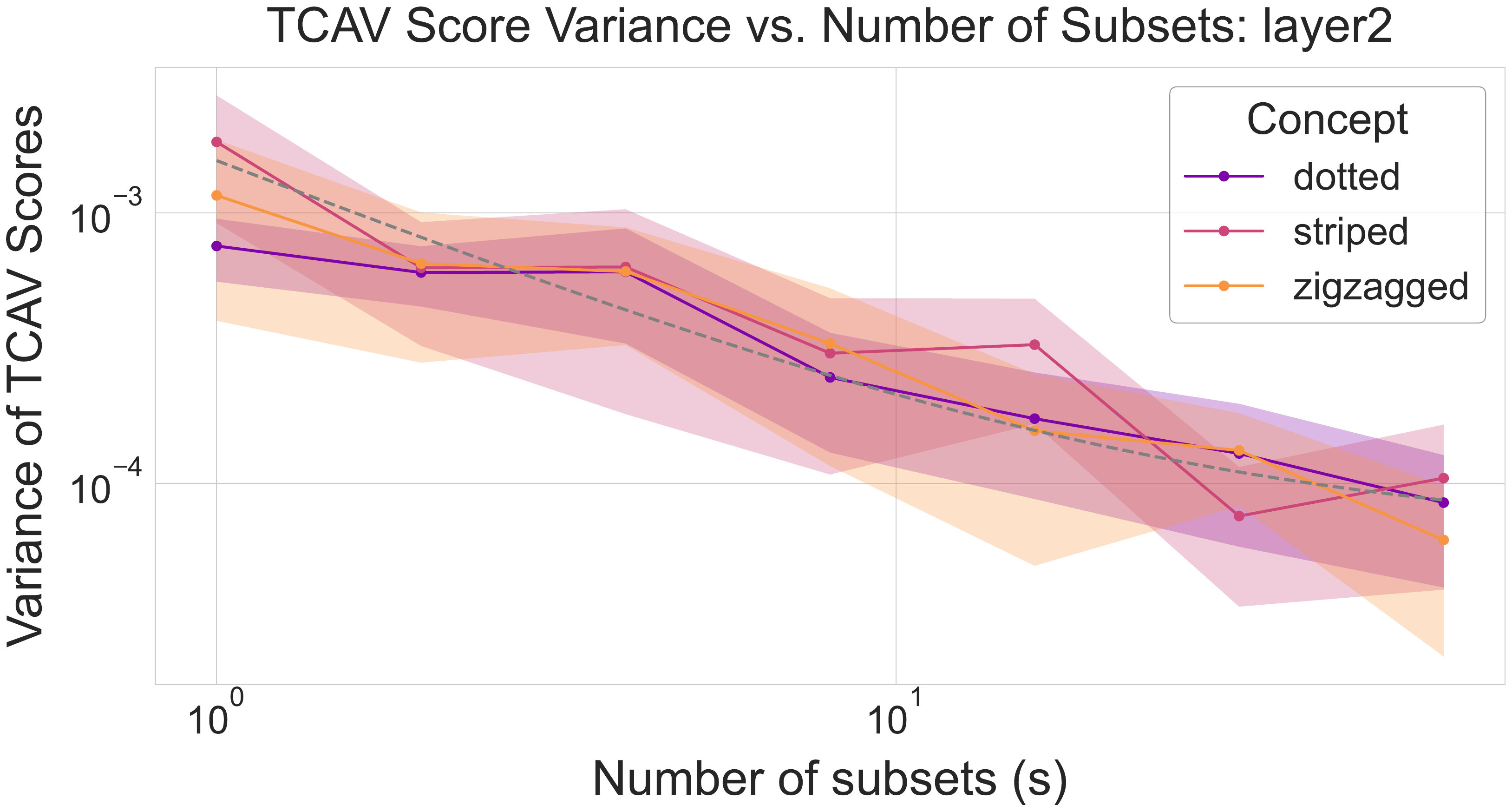} 
    \caption[Variance of \text{\textsc{Tcav}\xspace} vs. Number of Subsets]{Variance of multi-run \text{\textsc{Tcav}\xspace} scores. We use a fixed number $R = 2000$ of random tabular samples and divide them over a varying number of subsets $s$, with each subset containing $N = R/s$ samples. To get the variance of the mean over all $s$ \text{\textsc{Tcav}\xspace} scores, we repeat this $r=10$ times and calculate the variance for each $s$ over the $r$ runs. The plot shows that the variance of the mean \text{\textsc{Tcav}\xspace} score decreases as $s$ increases, confirming our theoretical analysis. Finally, we repeat this experiment $e=10$ times and report the mean variance over those $e$ runs $\pm 1$ geometric standard deviation.
    \label{fig:var_tcav_multi}}
\end{figure}
\paragraph{Intuition.}
computations~\cite{hogg_introduction_2019}. The \text{\textsc{Cav}\xspace}s $\beta_{N}^{(j)}$ are trained on disjoint (and thus independent) sets of random data, so the resulting \text{\textsc{Tcav}\xspace} scores $T_j$ can be treated as independent random variables. For $s$ independent estimates $T_1, \dots, T_s$, each with variance $\sigma^2 = \operatorname{Var}\,(T_j)$, the variance of their sample mean is $\operatorname{Var}\,(T_{\mathrm{multi}}) = \frac{\sigma^2}{s}$.
Assuming $\sigma^2 = \mathcal{O}(1)$, the conjecture follows directly. 

This provides practitioners with a tool for achieving the best possible stability with a fixed number of random samples. However, this comes at the expense of computing time. For example, the overhead in the Captum implementation~\cite{kokhlikyan_captum_2025} means that doubling the number of subsets $s$ halves the variance, but also increases the computing time linearly (in this case by a factor of $\approx 1.5$).

\section{Experiments}
\label{sec:experiments}
We test our theoretical findings across three data modalities: \textbf{images}, \textbf{tabular}, and \textbf{text} data. For all three data types we use appropriate datasets where concept samples are available. 
To assess the empirical variance of \text{\textsc{Cav}\xspace}s, we design the following experiment, which was repeated $r$ times for statistical significance.
For a given run, we vary the number of random examples, $N$. At each value of $N$, we compute $s$ separate \text{\textsc{Cav}\xspace}s. Each of these \text{\textsc{Cav}\xspace}s was trained on a set of $N$ examples, which were sampled with replacement from a large pool of $1,000$ (for images) to $50,000$ (for text) samples. We then calculate the trace of the covariance matrix of the resulting $10$ \text{\textsc{Cav}\xspace}s.
Finally, we report the mean and standard deviation of the traces collected across all $r=10$ runs.

\subsection{Image Classification}
Following the original \text{\textsc{Tcav}\xspace} setup of \citeauthor{kim_interpretability_2018}~\shortcite{kim_interpretability_2018}, we use the \textbf{ImageNet dataset}~\cite{deng_imagenet_2009} and Broden~\cite{bau_network_2017} concept definitions.  As a model we use the pre-trained ResNet model~\cite{he_deep_2016}. We evaluate our results in the layers \texttt{'inception4c'} and \texttt{'inception4e'}. For each concept (``striped'', ``zigzagged'' and ``dotted''), we calculate  \text{\textsc{Cav}\xspace}s for an increasing number $N$ ($10$ to $200$) of random samples. 
We repeat the whole process $r=5$ times with $m=10$ random sets per run. The results can be found in Appendix \ref{appendix:images}.

\subsection{Tabular Data} 
To demonstrate applicability beyond vision, we adapt the framework to income prediction on the \textbf{UCI Adult dataset} \cite{becker_adult_1996}. Here we analyze how the demographic concepts of ``male'' and ``female'' are encoded within the model's hidden layers. 
The model is a custom two-layer feed-forward network trained to classify whether an individual's income exceeds $50,000\$$ per year. We analyze \text{\textsc{Cav}\xspace}s for two target hidden layers, \texttt{'layer1'} and~\texttt{'layer2'}. To test the stability of these conceptual representations, \text{\textsc{Cav}\xspace}s are generated multiple times under varying conditions. For tabular data, we vary the number of example samples from $10$ to $300$. For each value of $N$, the process is repeated for $r=10$ independent runs with $m=10$ random sets per run to ensure statistical significance. We present the results in Appendix \ref{appendix:tabular}.

\subsection{Text Classification}
Finally, we apply \text{\textsc{Cav}\xspace}s in the \text{\textsc{Nlp}\xspace} setting using the \textbf{IMDB sentiment dataset}~\cite{maas_learning_2011}.  For this, we load a pre-trained text classifier~\cite{kokhlikyan_captum_2025} and extract token embeddings. Concepts are defined by hand-picked sets of ``positive'', ``negative'', and ``neutral'' adjectives.  
We again use the same hyperparameters as for tabular data and report the variability of our \text{\textsc{Cav}\xspace}s and subsequent methods over $r=10$ independent runs with $N$ from $10$ to $300$. We provide the results for the layers~\texttt{'convs.1'} and~\texttt{'convs.2'} in Appendix \ref{appendix:text}.

\subsection{Results}  
In all three domains, the empirical variance of the \text{\textsc{Cav}\xspace} estimator decays approximately as $1/N$, in agreement with our theoretical $\mathcal{O}(1/N)$ prediction under uniform integrability. 
Furthermore, in all three domains, the variance of the \text{\textsc{Tcav}\xspace} scores becomes clearly stable, either immediately or after a short initial decrease. 
The \text{\textsc{Tcav}\xspace} method therefore does not directly use the information provided by increasingly accurate \text{\textsc{Cav}\xspace}s. We provide the Python code for all experiments in the supplementary material.
The corresponding plots for the \textbf{hinge} and \textbf{binary cross-entropy} loss functions can be found in Appendix~\ref{sec:additional-results}. We decided not to conduct a separate experimental evaluation for \textbf{Difference of Means} method, as equivalent results can be expected due to the clear theoretical derivation (see Appendix~\ref{appendix:hinge_loss}) and the demonstrated independence from the optimizer. This remains a task for future work.

\section{Conclusion}
In this paper, we analyze the stability of the \text{\textsc{Tcav}\xspace} method. To this end, we introduce a mathematical framework to theoretically analyze variability in the limit of infinitely imbalanced logistic regression, which applies beyond the scope of this work. Building on this analysis and extensive experiments, we provide practical recommendations on how many random samples to use and how to allocate them for optimal stability in \text{\textsc{Tcav}\xspace}. We further present a thorough exploration of the compute-consistency trade-off, quantifying how a fixed budget is best split between (i) more independent runs with fewer random samples per run and (ii) fewer runs with larger per-run sample sizes. \emph{Crucially, this trade-off is method- and implementation-dependent}: it differs for \text{\textsc{Tcav}\xspace} scores versus \text{\textsc{Cav}\xspace}s and varies with the logistic-regression solver, regularization, feature normalization, and stopping criteria.
We demonstrate that, for stable \text{\textsc{Tcav}\xspace} scores, a modest number of random samples distributed across multiple independent runs is sufficient and compute-efficient. By contrast, applications that require stable \text{\textsc{Cav}\xspace}s, such as bias mitigation or feature steering, benefit more from larger per-run sample sizes than from additional runs. We emphasize again that there is no single best setting: the optimal compute allocation should be re-evaluated for the specific method and implementation at hand.

Building on our findings, we identify two primary directions for future work. First, the nature of the trade-off between compute time and stability is highly implementation-dependent. We therefore propose investigating this trade-off in more detail to provide implementation-specific advice. Second, since our analysis assumed perfect optimization,  an important goal for future work is to investigate the role of the optimizer. In particular, it would be interesting to examine the interaction between its convergence properties and the statistical stability of the resulting explanation.

\bibliography{refs}

\onecolumn
\newcommand{\appendixtocdepth}{1}

\newcommand{\printappendixtoc}{%
  \printcontents[appendix]{l}{\appendixtocdepth}{\section*{Contents of the Appendix}}{}%
}

\onecolumn
\appendix
\startcontents[appendix] 

\section*{Appendix}
\phantomsection          
\addcontentsline{toc}{section}{Appendix}
\printappendixtoc

\section{Appendix~A: Additional Results}
\label{sec:additional-results}

\medskip

In the following we present additional experimental results regarding the variability of (T)\text{\textsc{Cav}\xspace}s varying the underlying linear classifier:
\begin{itemize}
\item First, we report additional results using the  \texttt{LogisticRegression} classifier, which is optimized with a binary cross-entropy loss.
\item Second, we report the same metrics \texttt{SGDLinearModel}, which is trained with a hinge loss.
\end{itemize}
For both sections, we show the variance of Concept Activation Vectors across two network layers. 
Moreover, sensitivity and \text{\textsc{Tcav}\xspace} scores are presented for a single representative layer, which we believe is sufficient to give a clear picture.

\paragraph{Note.}
All experiments were conducted on a MacBook Pro running macOS Sonoma $14.6$. The system was equipped with an Apple $M3$ Max chip ($14$-core CPU) and $36$ GB of unified memory.

\subsection{TCAV for Images}
\label{appendix:images}
Finally, we evaluate \TCAV on images. We report results for ResNet50~\cite{he_deep_2016}, but the method is readily applicable to a wide range of architectures; we have also implemented it for GoogLeNet~\cite{szegedy_going_2014}, EfficientNet~\cite{efficientnet_tan_2019}, MobileNetV3~\cite{howard_searching_2019}, and ViT-B/16~\cite{dosovitskiy_an_2021}. Across models, the empirical behavior aligns with our propositions and theorems, though to varying degrees. Some settings converge more slowly, which is expected given the asymptotic nature of our results.

\subsubsection{Empirical Findings with Binary Cross-Entropy Loss}
Finally, we present the results for image data using the logistic regression classifier with binary cross-entropy loss. When analysing the variance with logistic regression classifiers, we observe a somewhat unexpected behaviour: initially, the variance increases until it asymptotically decreases after about $200$ images. 
This is not a contradiction to our theoretical statements, as the latter only applies asymptotically. However, the exact reasons for this behaviour are worth investigating.

\begin{figure}[!ht]
    \centering
    \begin{minipage}{0.48\textwidth}
        \centering
        \includegraphics[width=\linewidth]{figures/img/cav_vector_variance_resnet50_layer2_logistic.pdf}
    \end{minipage}
    \hfill 
    \begin{minipage}{0.48\textwidth}
        \centering
        \includegraphics[width=\linewidth]{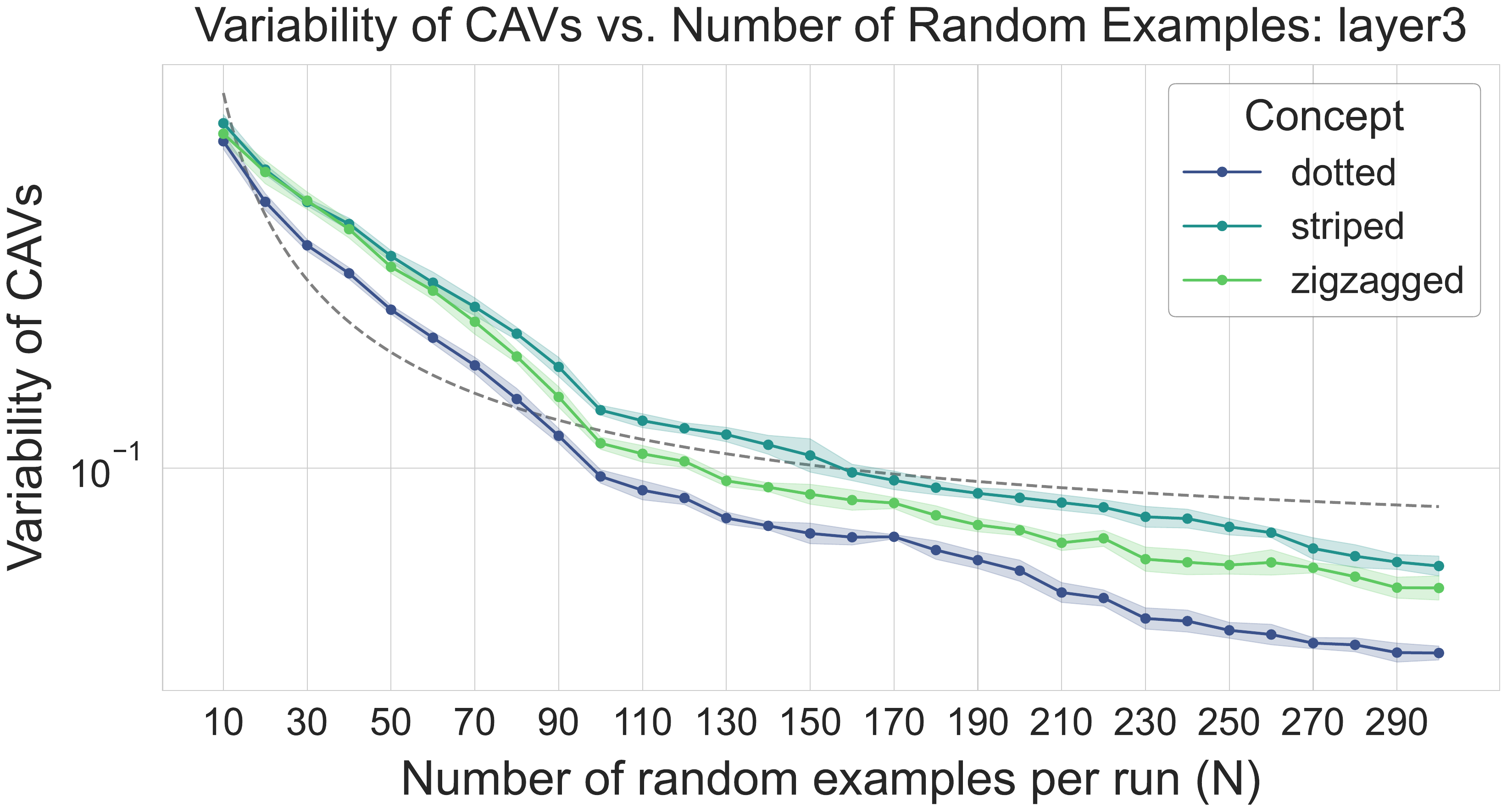}
    \end{minipage}
    \caption[Variance of CAVs on ImageNet]{
    Mean variability of \text{\textsc{Cav}\xspace}s for ``striped'', ``zigzagged'', and ``dotted'' as a function of random examples ($N$) for \texttt{'layer2'} and \texttt{'layer3'} of the \textbf{Resnet50 model}. The classifiers were trained using \textbf{binary cross-entropy loss}. Error bars indicate $\pm 1$ SD; the $y$-axis is log-scaled. We fitted a curve of the form $f(N) = a/N + b$ to it. For \texttt{layer2} the parameters were $a=7.71, b=0.0999$, for \texttt{layer3} they were $a=5.58, b=0.0643$.
}
    \label{fig:variance_of_cavs_log_images}
\end{figure}

\begin{figure}[H]
    \centering
    \begin{minipage}{0.48\textwidth}
        \centering
        \includegraphics[width=\linewidth]{figures/img/tcav_score_variance_resnet50_layer2_logistic.pdf}
    \end{minipage}
    \hfill 
    \begin{minipage}{0.48\textwidth}
        \centering
        \includegraphics[width=\linewidth]{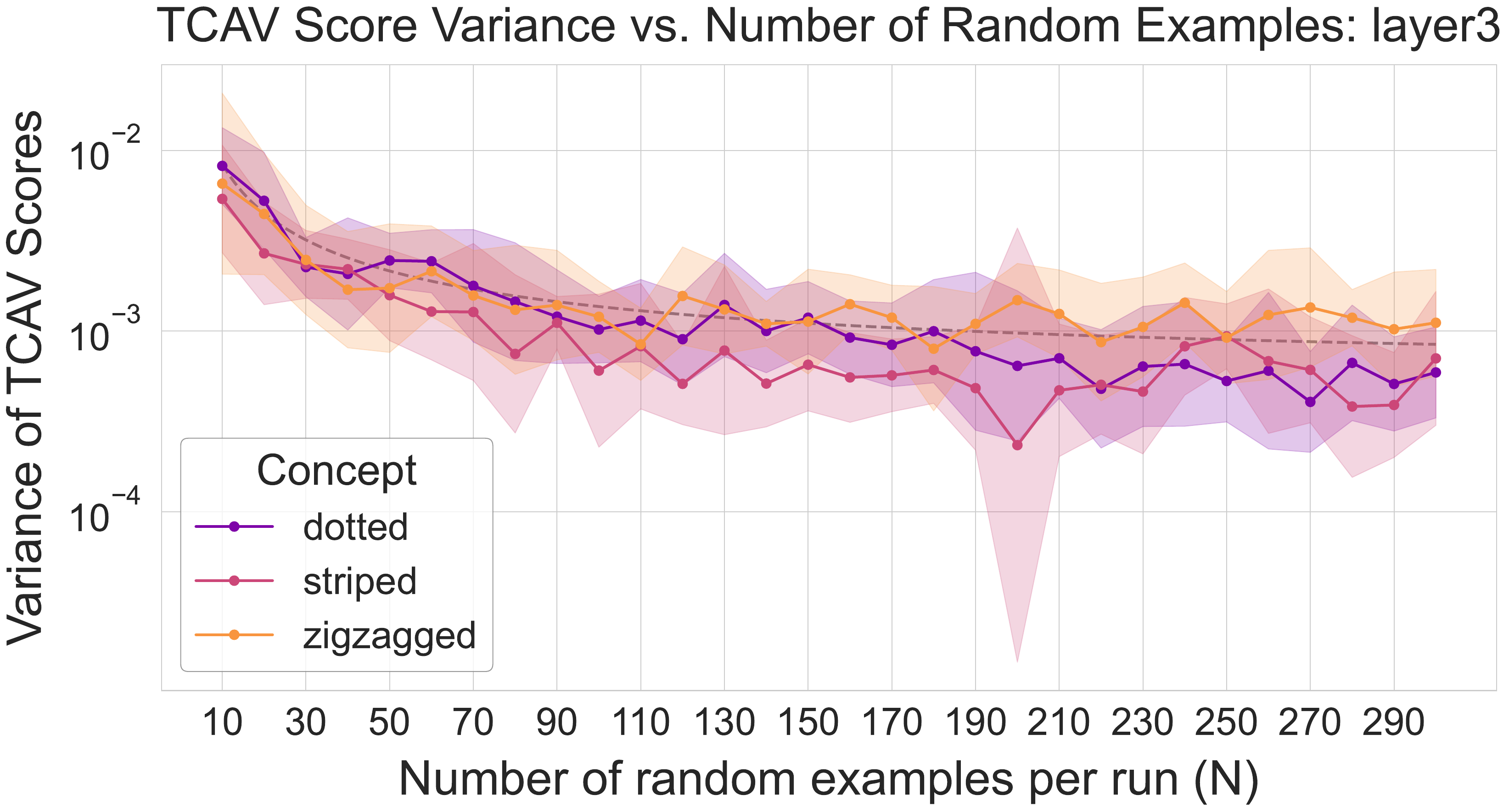}
    \end{minipage}
    \caption[Variance of TCAV scores (Log Loss) on ImageNet]{
    Variance of \text{\textsc{Tcav}\xspace} scores at \texttt{'layer2'} and \texttt{'layer3'} of the \textbf{Resnet50 model} vs. the number of examples per concept set ($N$) for the visual concepts ``striped'', ``zigzagged'', and ``dotted''.  Error bars denote $\pm 1$ standard deviation. We fitted a curve of the form $f(N) = a/N + b$ to it. For \texttt{layer2} the parameters were $a=0.0396, b=0.00134$, for \texttt{layer3} they were $a=0.0784, b=5.83\times 10^{-4}$. For \texttt{layer2} the $y$‑axis was clipped at $\geq 10^{-4}$. 
}
    \label{fig:variance_of_tcavs_log_images}
\end{figure}

\subsubsection{Empirical Findings with Hinge Loss}
Next, we show the corresponding results with hinge loss.
\begin{figure}[H]
    \centering
    \begin{minipage}{0.48\textwidth}
        \centering
        \includegraphics[width=\linewidth]{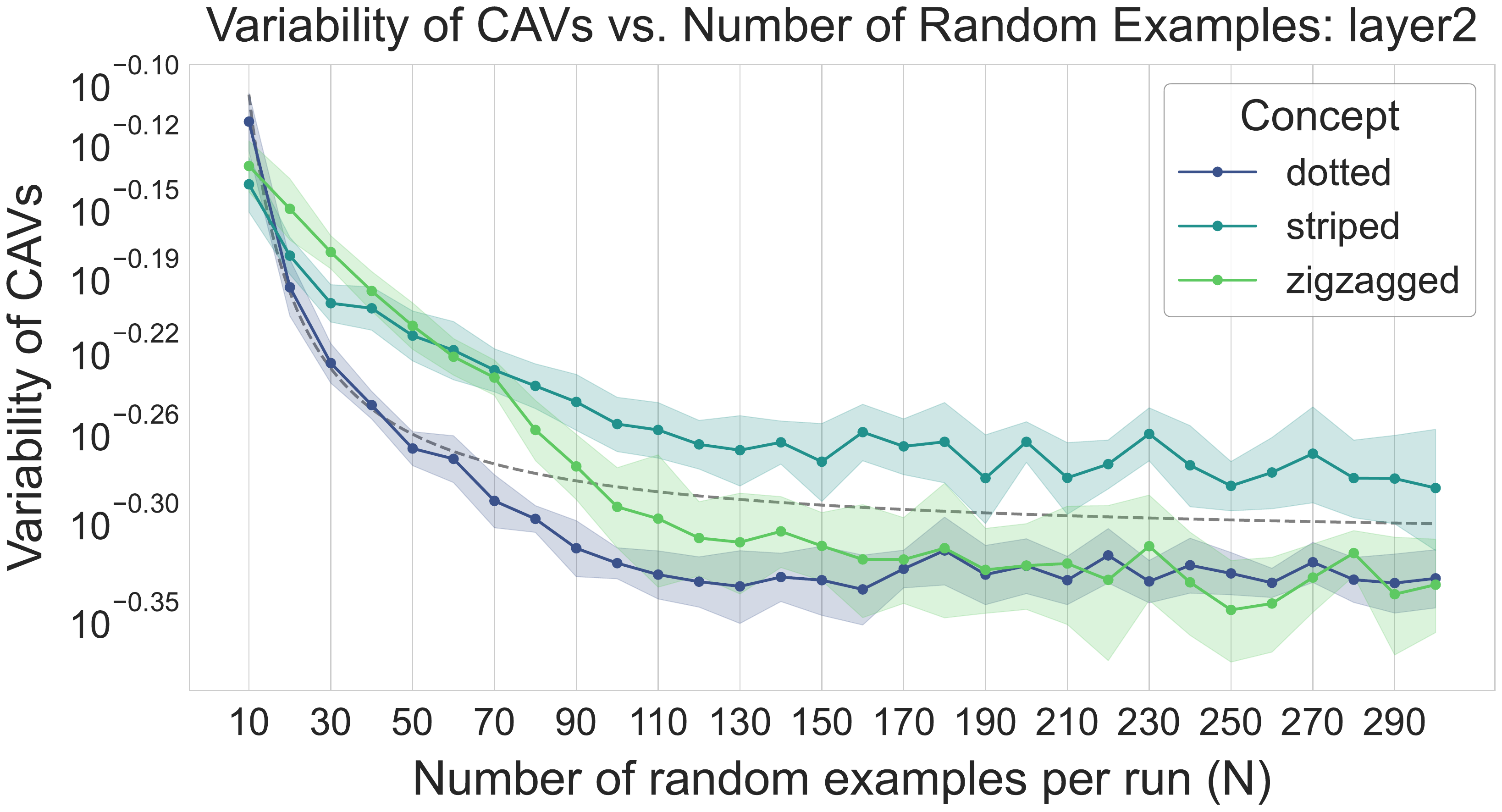}
    \end{minipage}
    \hfill
    \begin{minipage}{0.48\textwidth}
        \centering
        \includegraphics[width=\linewidth]{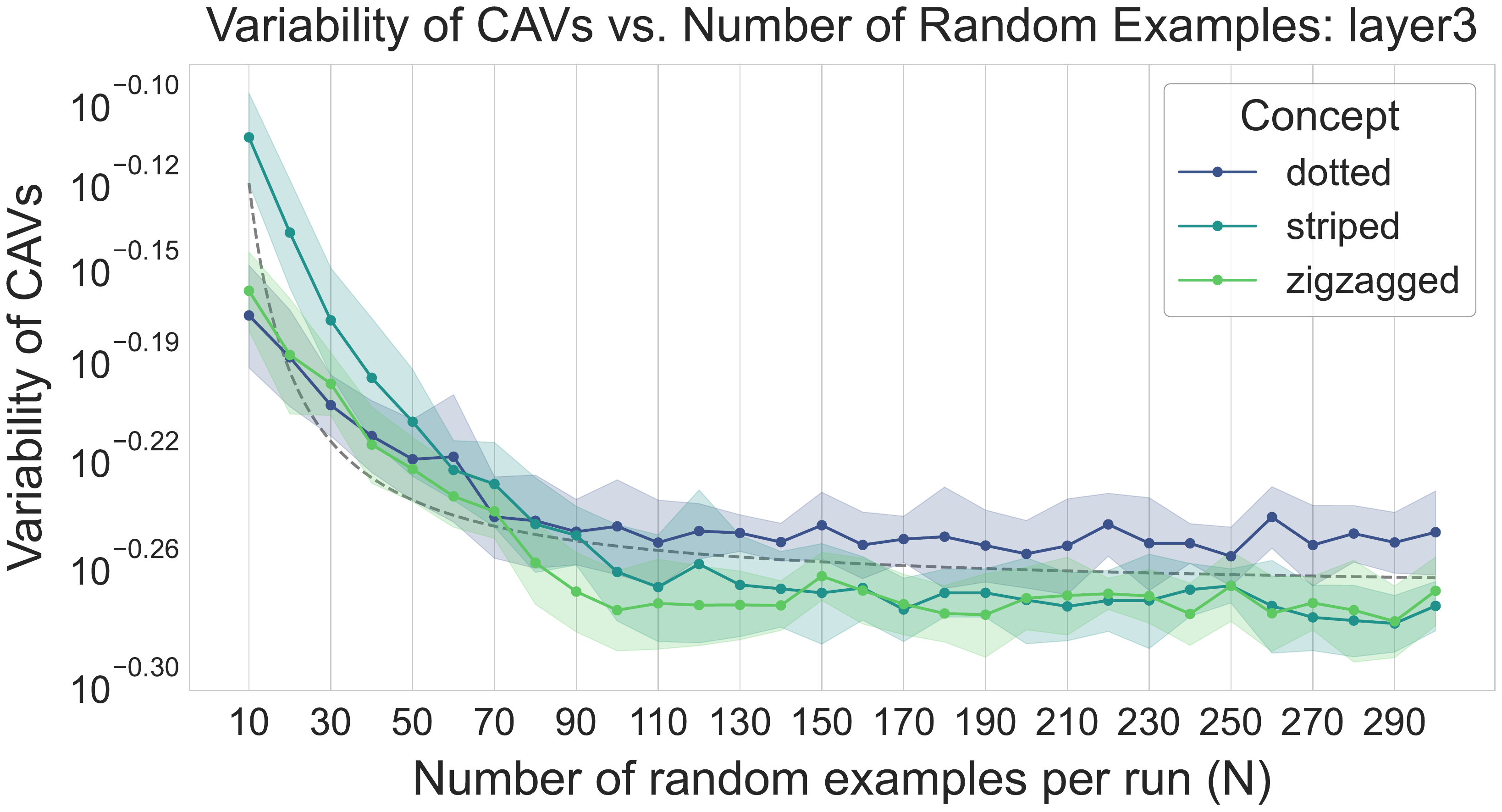}
    \end{minipage}

    \caption[Variance of CAVs (Hinge Loss) on ImageNet]{
    Mean variability of \text{\textsc{Cav}\xspace}s for the visual concepts ``striped'', ``zigzagged'', and ``dotted'' as a function of the number of random examples per run ($N$). Results are shown for two different layers of the \textbf{Resnet50 model} (\texttt{'layer2'} on the left, \texttt{'layer3'} on the right). Error bars indicate $\pm 1$ SD; the $y$-axis is log-scaled. Variance is estimated by the sum of per-feature variances across five independent runs. We fitted a curve of the form $f(N) = a/N + b$ to it. For \texttt{layer2} the parameters were $a=3.01, b=0.488$, for \texttt{layer3} they were $a=2.12, b=0.538$.
}
    \label{fig:variance_of_cavs_hinge_images}
\end{figure}
\begin{figure}[H]
    \centering  
    \begin{minipage}{0.48\textwidth}
        \centering
        \includegraphics[width=\linewidth]{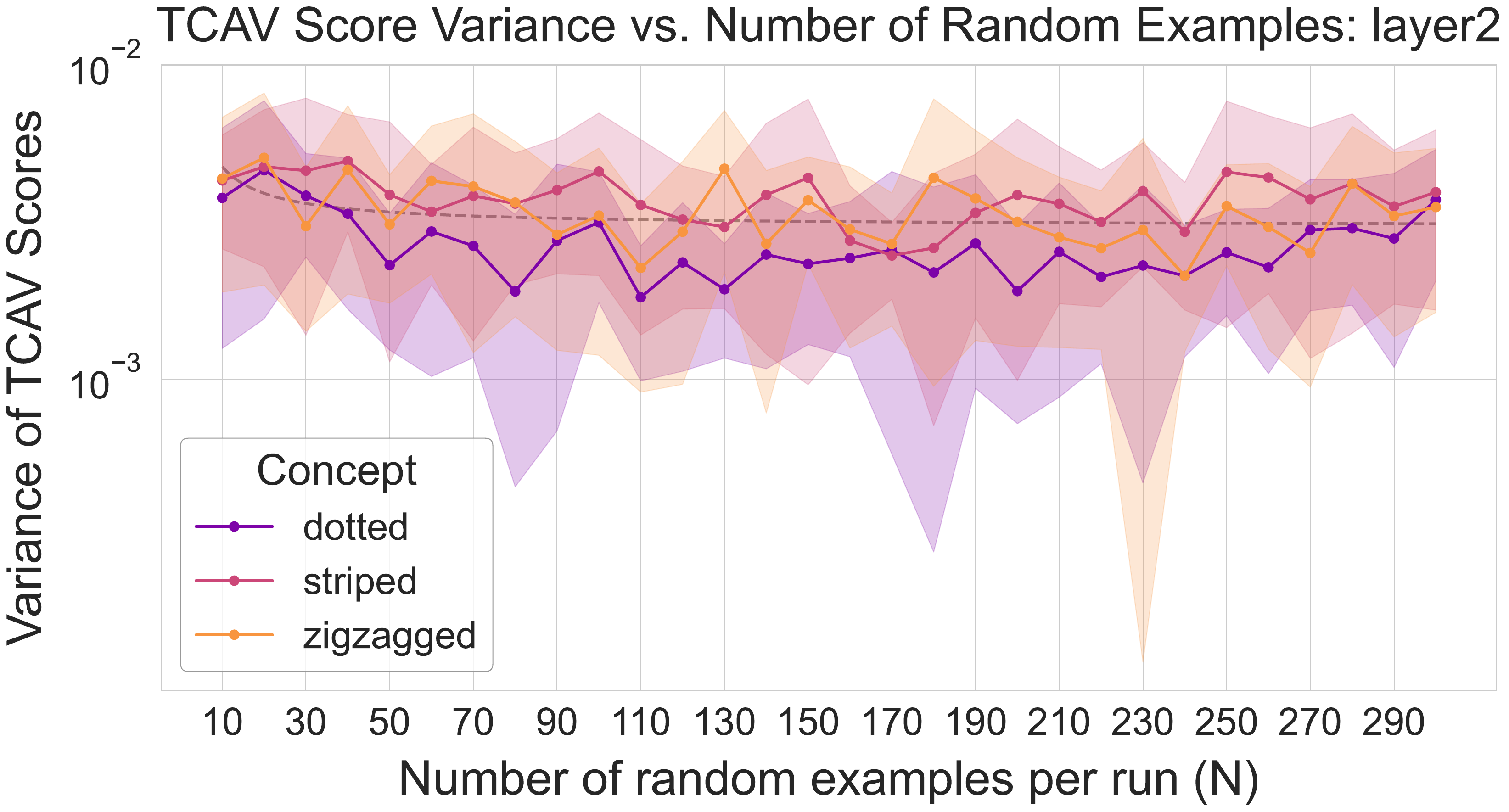}
    \end{minipage}
    \hfill
    \begin{minipage}{0.48\textwidth}
        \centering
        \includegraphics[width=\linewidth]{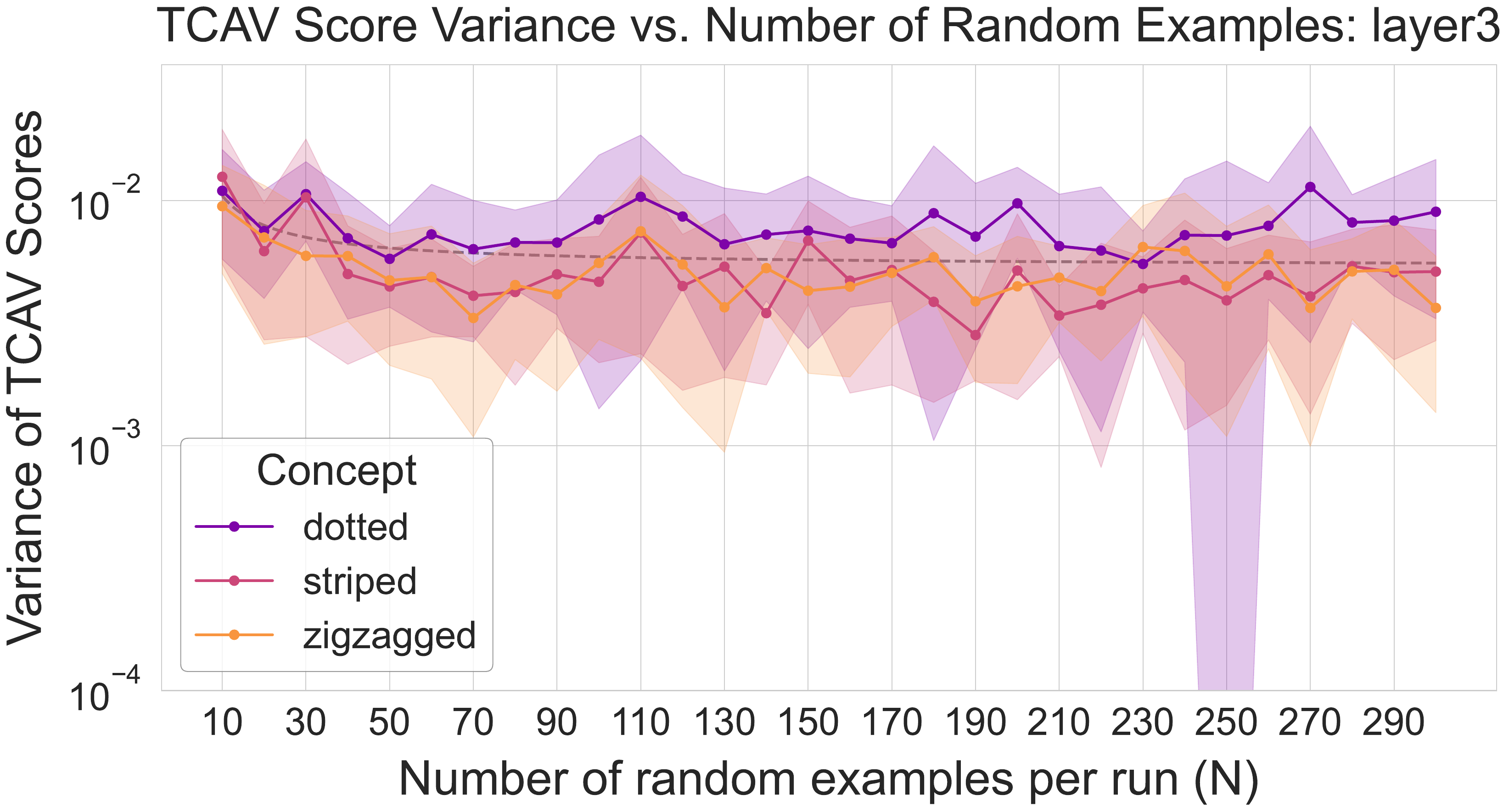}
    \end{minipage}
    \caption[Variance of TCAV scores (Hinge Loss) on ImageNet]{
    Variance of \text{\textsc{Tcav}\xspace} scores at \texttt{'layer2'} and \texttt{'layer3'} of the \textbf{ResNet50 model} vs. the number of examples per concept set ($N$) for the visual concepts ``striped'', ``zigzagged'', and ``dotted''. The underlying \text{\textsc{Cav}\xspace}s were trained using \textbf{hinge loss}. Error bars denote $\pm 1$ standard deviation. We fitted a curve of the form $f(N) = a/N + b$ to it. For \texttt{layer2} the parameters were $a=0.0168, b=0.00307$, for \texttt{layer3} they were $a=0.0507, b=0.00539$. For \texttt{layer3} the scale is trimmed at $\geq 10^{-4}$ to prevent small‑variance tails collapsing the plot. $Y$‑axis clipped at $\geq 10^{-6}$ for \texttt{'layer2'} and $\geq 10^{-4}$ for \texttt{layer3} to reduce distortion from near‑zero standard deviations.
}
    \label{fig:variance_of_tcavs_hinge_images}
\end{figure}

\newpage

\subsubsection{Empirical Findings with Difference of Means}
Again, we found that  \text{\textsc{Cav}\xspace}s computed via the \emph{Difference of Means}-method most closely follow a variance decline of $\mathcal{O}(1/N)$, compared to the other two methods.

\begin{figure}[H]
    \centering
    \begin{minipage}{0.48\textwidth}
        \centering
        \includegraphics[width=\linewidth]{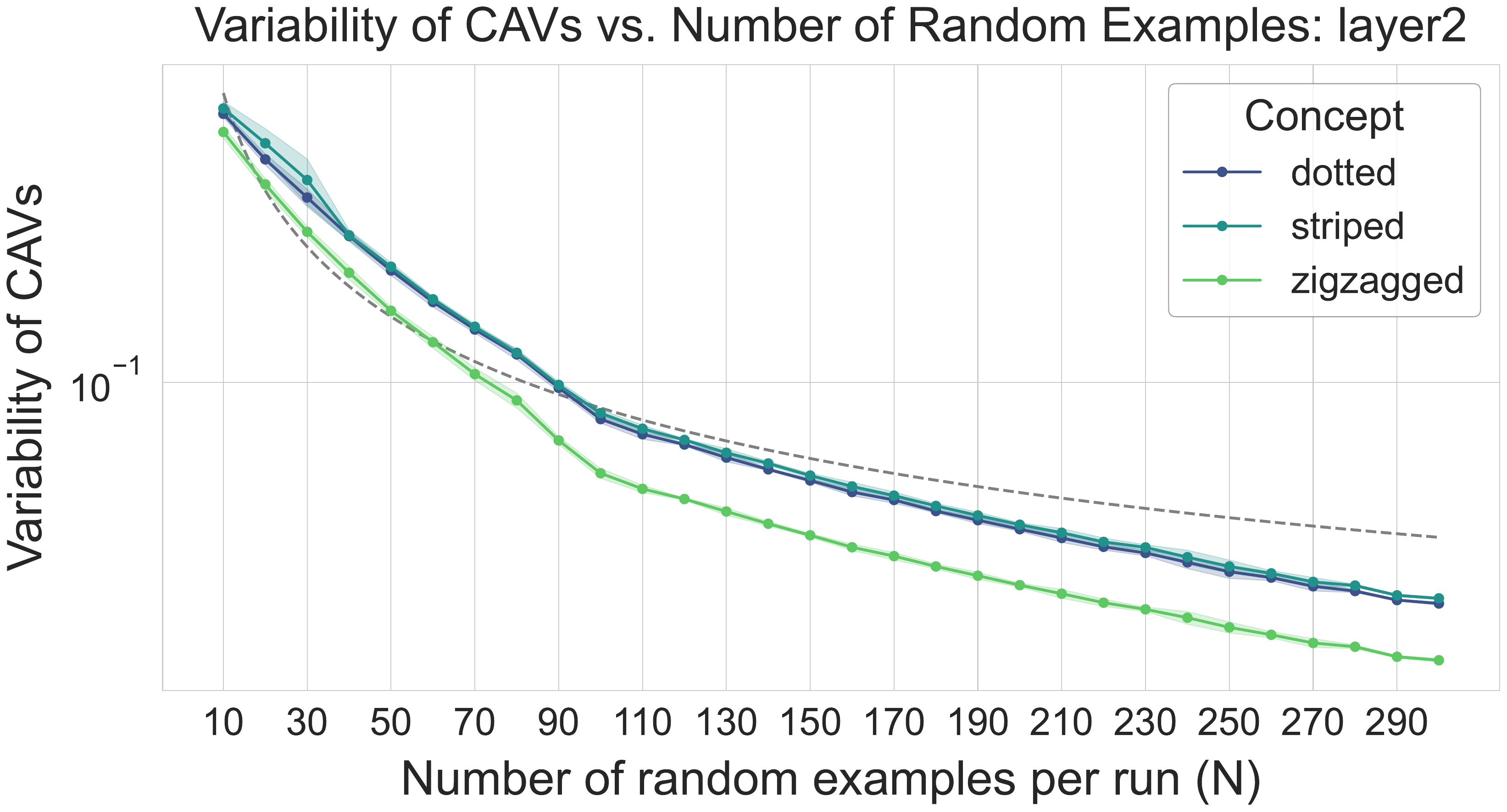}
    \end{minipage}
    \hfill
    \begin{minipage}{0.48\textwidth}
        \centering
        \includegraphics[width=\linewidth]{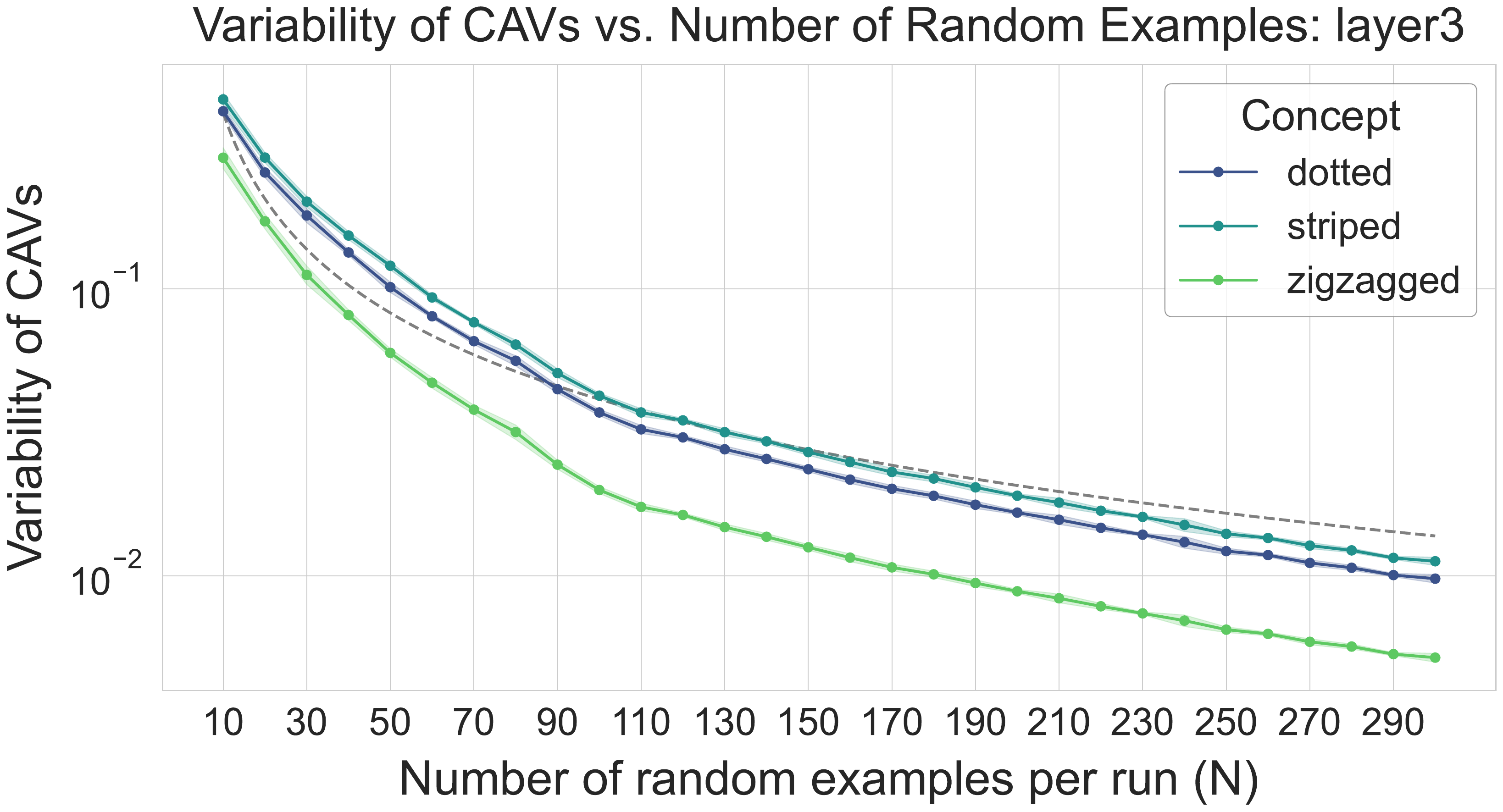}
    \end{minipage}
    \caption[Variance of CAVs (Difference of Means) on Imagenet]{
    Mean variability of \text{\textsc{Cav}\xspace}s in dependance of the number of random examples per run $N$ for \textbf{ResNet50 model}. The \text{\textsc{Cav}\xspace}s were generated using \textbf{difference-of-means}. Error bars indicate $\pm 1$ SD; the $y$-axis is log-scaled. Variance is estimated by the sum of per-feature variances across five independent runs. We fitted a curve of the form $f(N) = a/N + b$ to it. For \texttt{layer2} the parameters were $a=7.24, b=0.0106$, for \texttt{layer3} they were $a=4.13, b=3.18\times 10^{-8}$.
}
    \label{fig:variance_of_cavs_dom_images}
\end{figure}

\begin{figure}[H]
    \centering
    \begin{minipage}{0.48\textwidth}
        \centering
        \includegraphics[width=\linewidth]{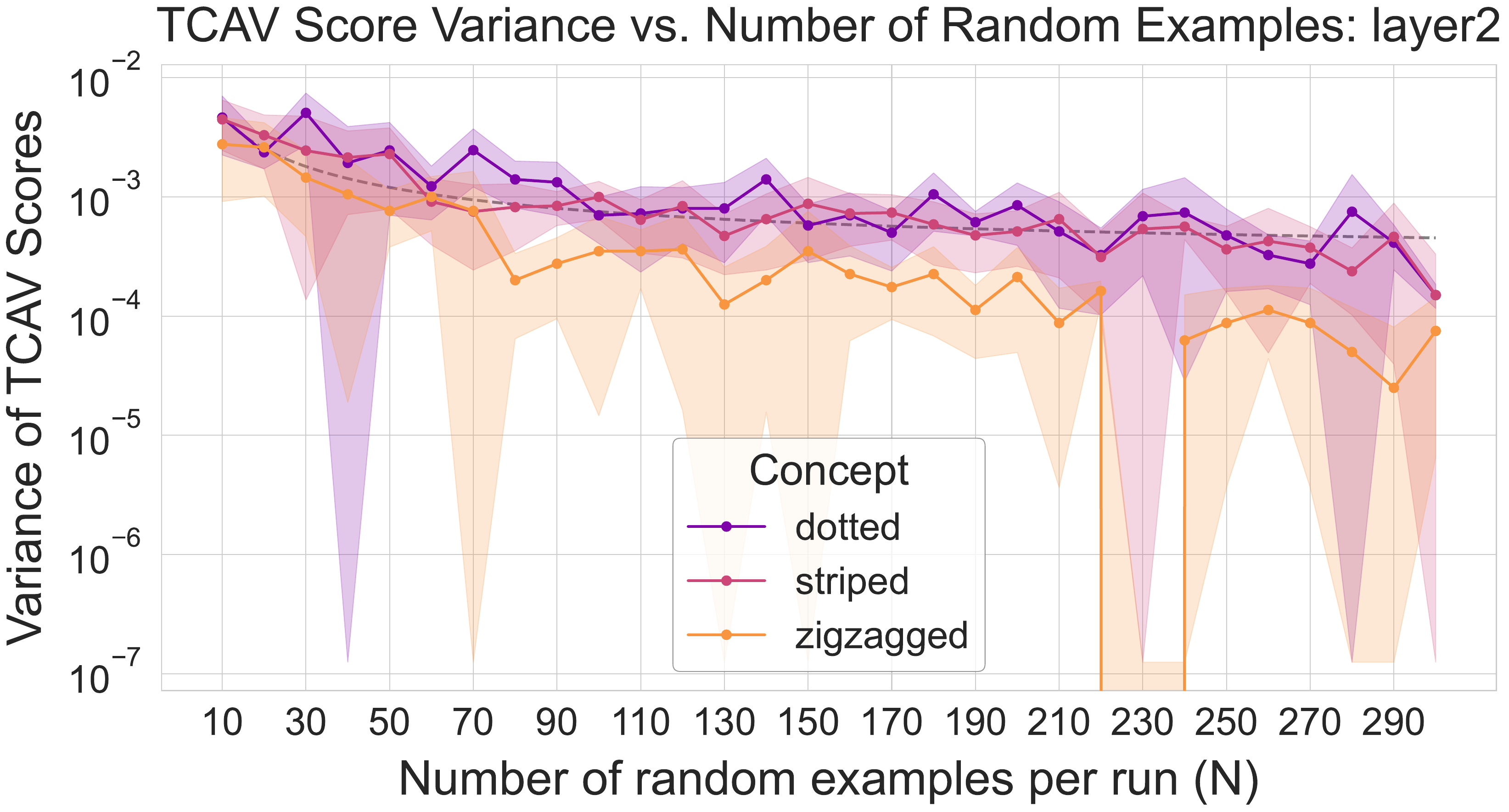}
    \end{minipage}
    \hfill 
    \begin{minipage}{0.48\textwidth}
        \centering
        \includegraphics[width=\linewidth]{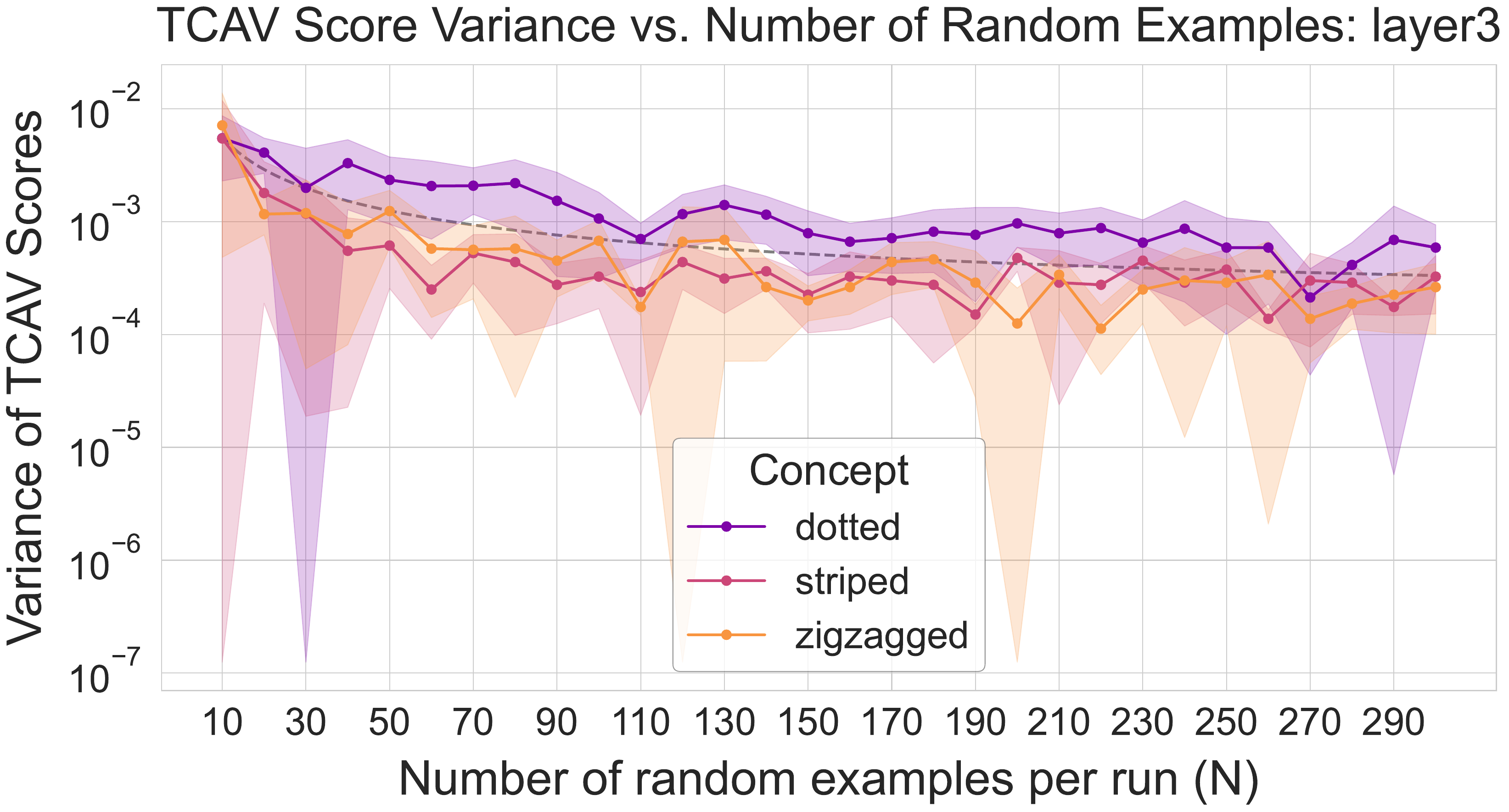}
    \end{minipage}

    \caption[Variance of TCAV scores (Difference of Means) on Imagenet]{
    Variance of \text{\textsc{Tcav}\xspace} scores at \texttt{'layer2'} and \texttt{'layer3'} of the \textbf{ReseNet50 model} vs. the number $N$ of random examples used. The underlying \text{\textsc{Cav}\xspace}s were generated using \textbf{difference-of-means}. Error bars denote $\pm 1$ standard deviation. We fitted a curve of the form $f(N) = a/N + b$ to it. For \texttt{layer2} the parameters were $a=0.0429, b=3.29\times 10^{-4}$, for \texttt{layer3} they were $a=0.0548, b=1.5\times 10^{-4}$.
}
    \label{fig:variance_of_tcavs_dom_images}
\end{figure}

\subsection{TCAV for Tabular}
\label{appendix:tabular}
Because this tabular task is straightforward, we train several simple models from scratch. Using the Adult dataset\cite{becker_adult_1996}, we define two concepts from the gender field—male and female. In this setting, the results align best with our theory.

\subsubsection{Empirical Findings with Binary Cross-Entropy Loss}
First, we analyze the results for tabular data, beginning with the binary cross-entropy loss from the logistic regression classifier. 
\begin{figure}[H]
    \centering
    \begin{minipage}{0.48\textwidth}
        \centering
        \includegraphics[width=\linewidth]{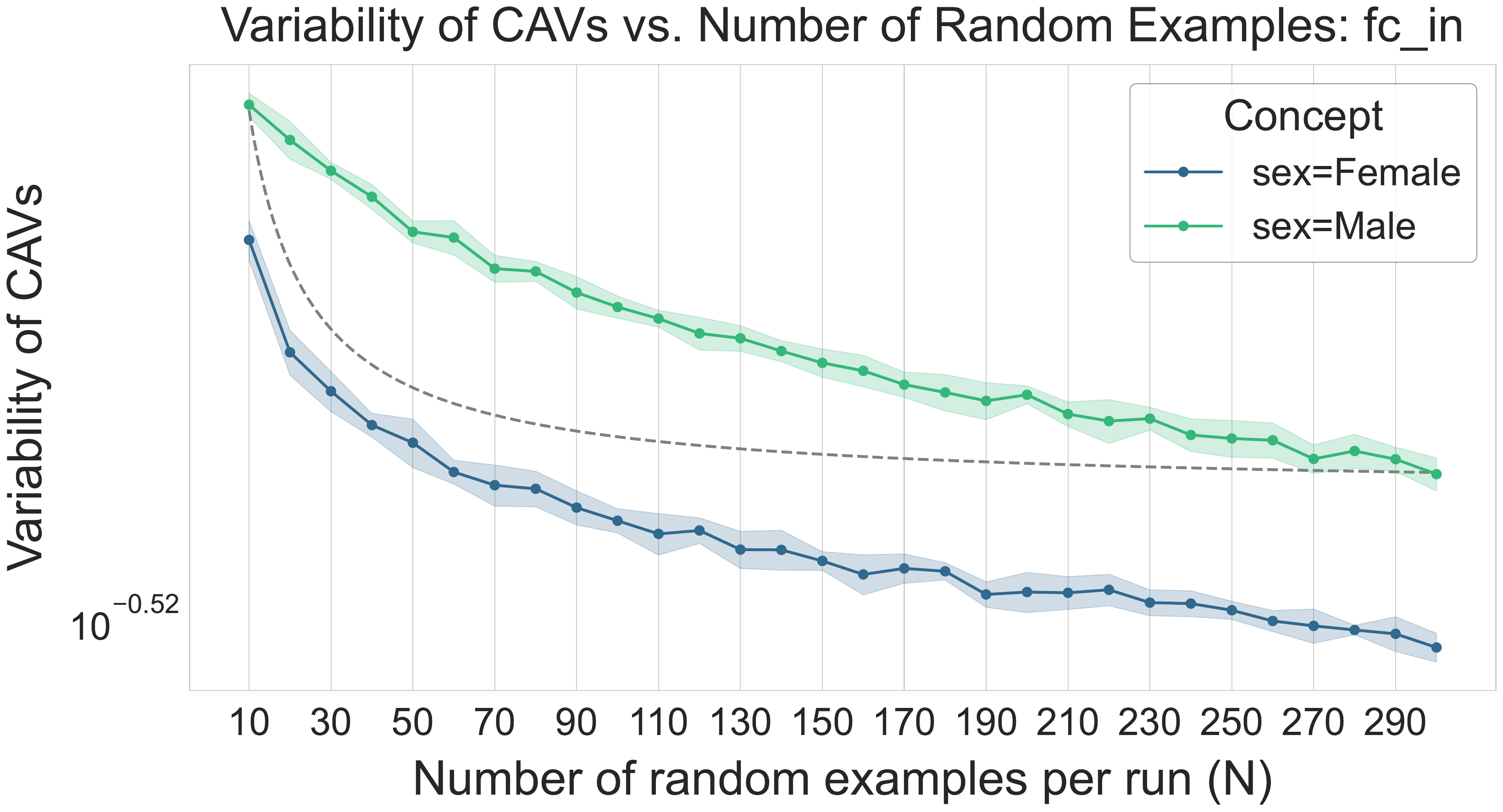}
    \end{minipage}
    \hfill
    \begin{minipage}{0.48\textwidth}
        \centering
        \includegraphics[width=\linewidth]{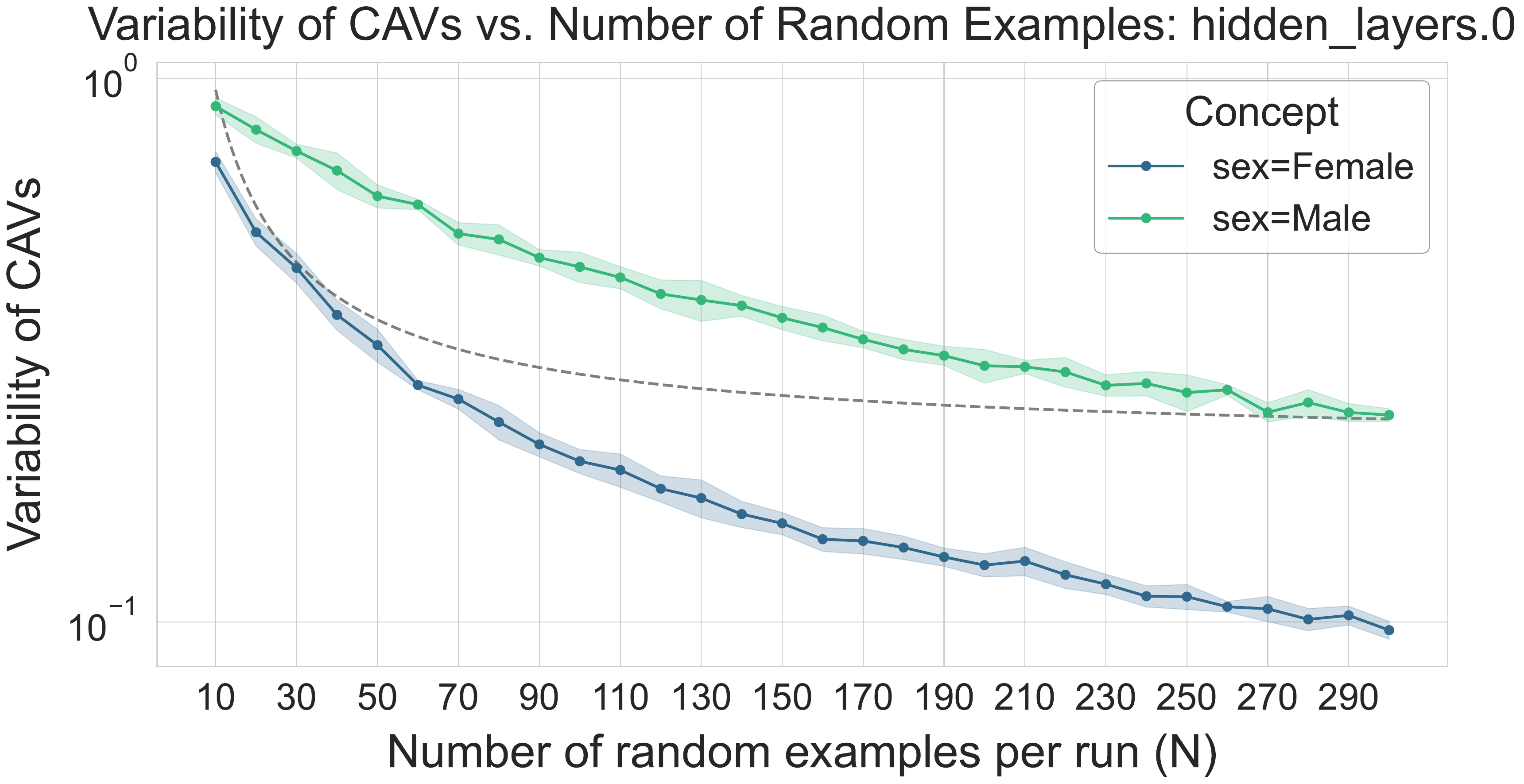}
    \end{minipage}
    \caption[Variance of CAVs (Log Loss) on UCI Adult]{
    Mean variability of \text{\textsc{Cav}\xspace}s for the demographic concepts ``male'' and ``female'' as a function of the number of random examples per run ($N$) on the \textbf{UCI Adult dataset}. Results are shown for two different hidden layers. The linear classifiers for \text{\textsc{Cav}\xspace} generation were trained using \textbf{binary cross-entropy loss}. Error bars indicate $\pm 1$ SD; the $y$-axis is log-scaled. Variance is estimated by the sum of per-feature variances across ten independent runs. We fitted a curve of the form $f(N) = a/N + b$ to it. For \texttt{fc\_in} the parameters were $a=4.65, b=0.391$, for \texttt{hidden\_layers\_0} they were $a=7.42, b=0.212$.}
    \label{fig:variance_of_cavs_log_tab}
\end{figure}
\begin{figure}[H]
    \centering
    \begin{minipage}{0.48\textwidth}
        \centering
        \includegraphics[width=\linewidth]{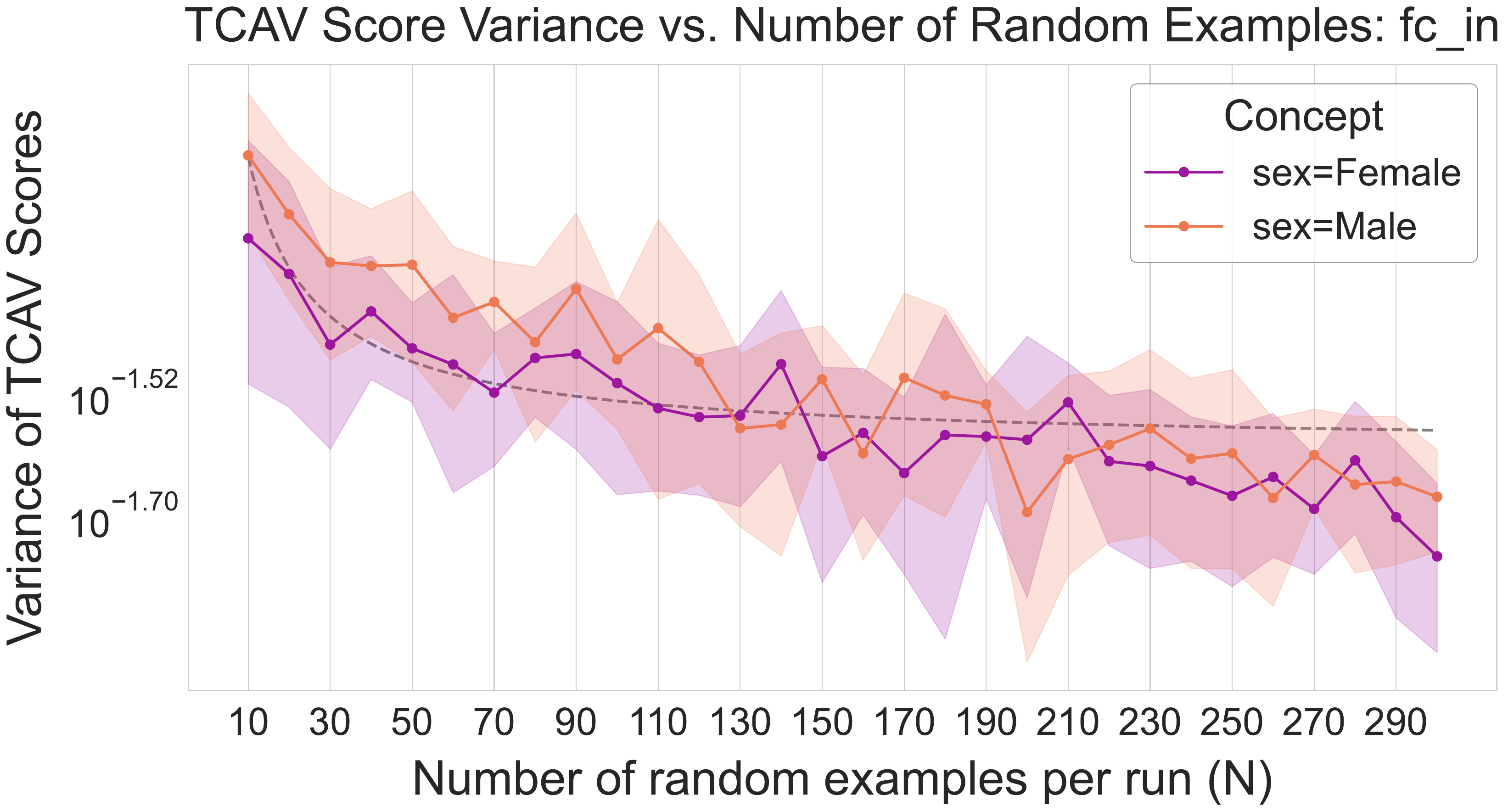}
    \end{minipage}
    \hfill
    \begin{minipage}{0.48\textwidth}
        \centering
        \includegraphics[width=\linewidth]{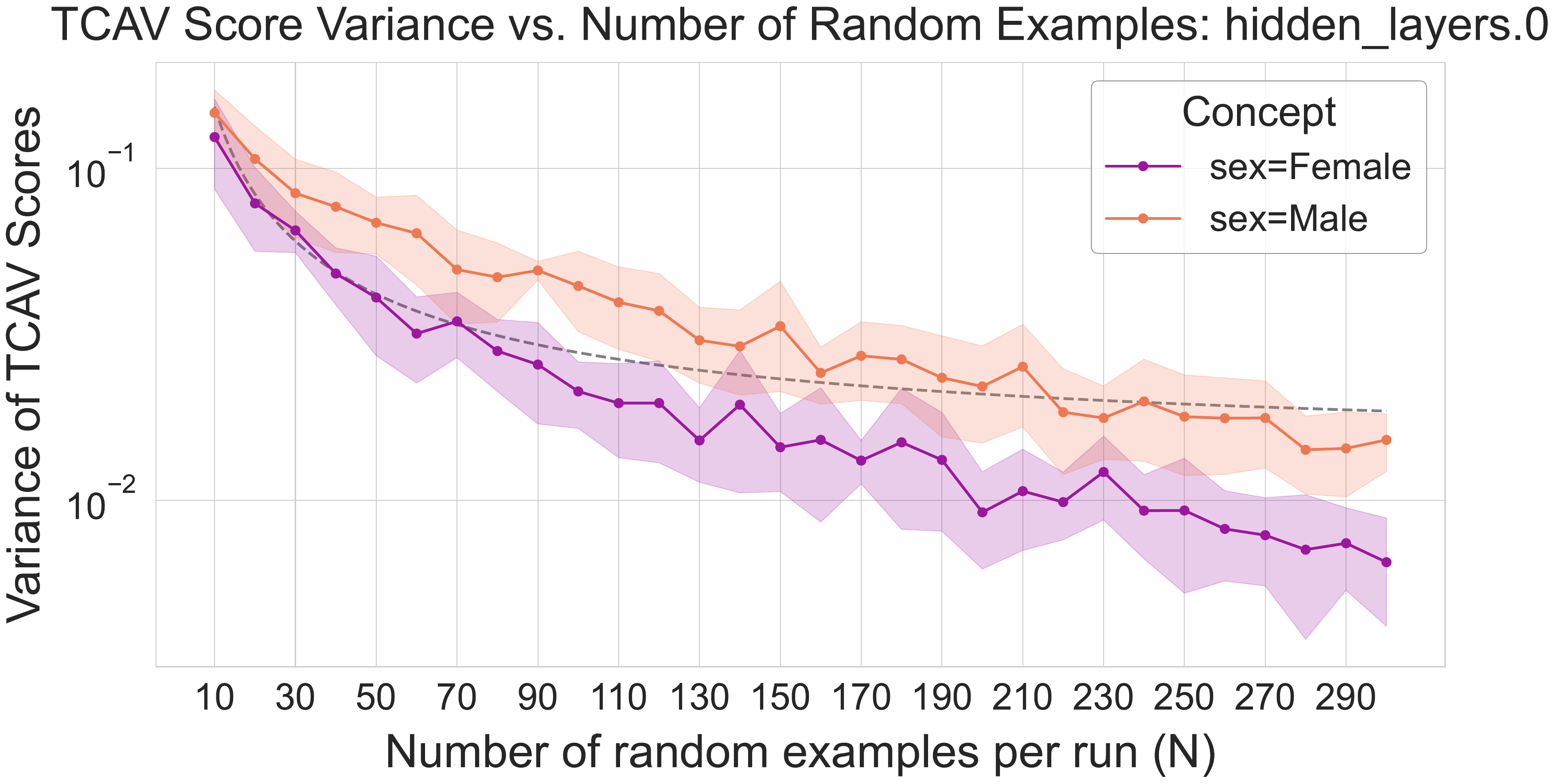}
    \end{minipage}
\caption[Variance of TCAV scores (Log Loss) on UCI Adult]{
    Variance of \text{\textsc{Tcav}\xspace} scores  vs. the number of examples per concept set ($N$) for the concepts ``male'' and ``female'' on the \textbf{UCI Adult dataset}. The underlying \text{\textsc{Cav}\xspace}s were trained using \textbf{binary cross-entropy loss}. Error bars denote $\pm 1$ standard deviation. We fitted a curve of the form $f(N) = a/N + b$ to it. For \texttt{fc\_in} the parameters were $a=0.407, b=0.0254$, for \texttt{hidden\_layers\_0} they were $a=1.4, b=0.0139$.
}
    \label{fig:variance_of_tcavs_log_tab}
\end{figure}

 \subsubsection{Empirical Findings with Hinge Loss}
Following this, we examine the findings with hinge loss.

\begin{figure}[H]
  \centering
  \begin{minipage}{0.48\textwidth}
    \centering
    \includegraphics[width=\linewidth]{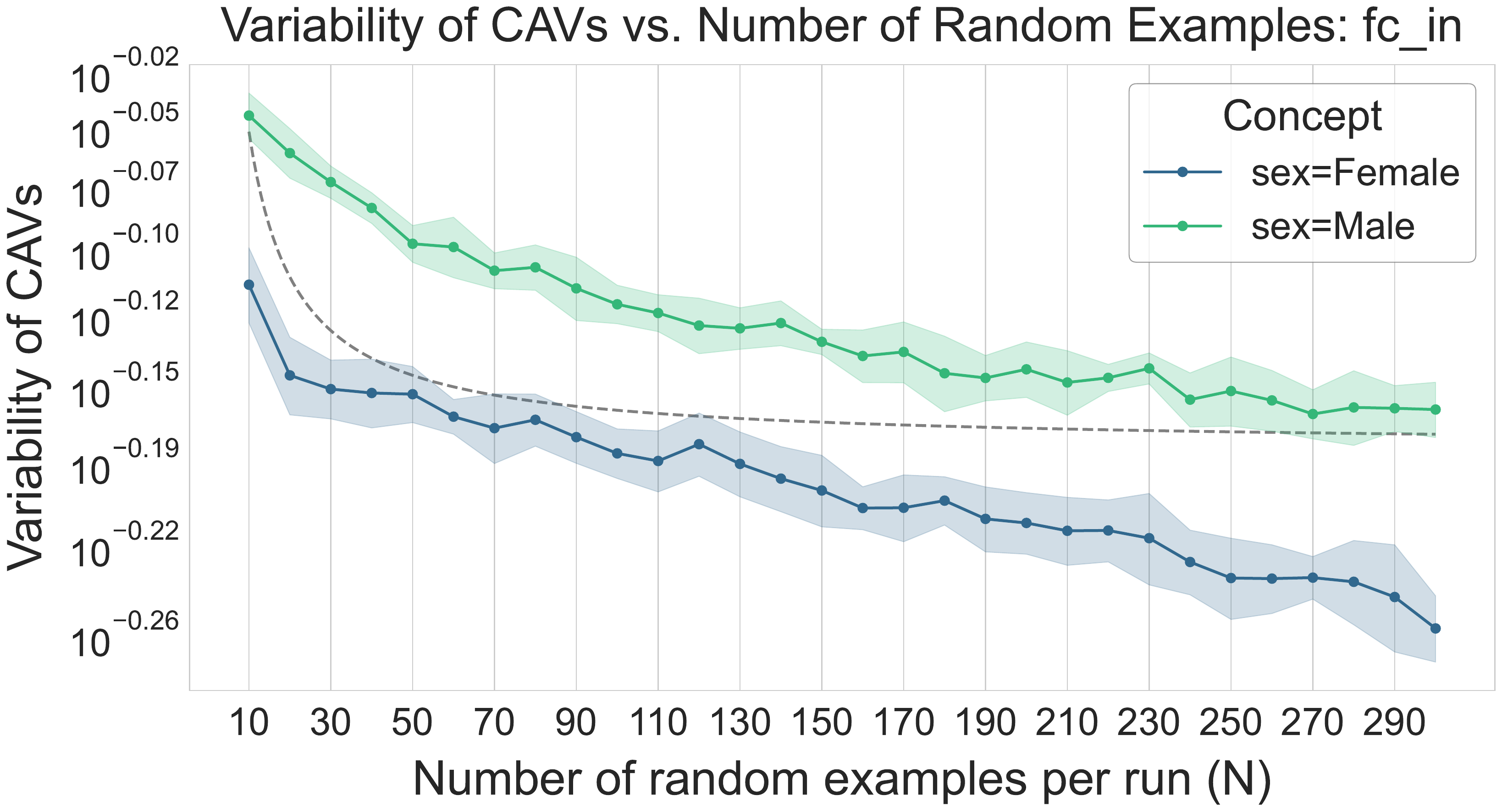}
  \end{minipage}\hfill
  \begin{minipage}{0.48\textwidth}
    \centering
    \includegraphics[width=\linewidth]{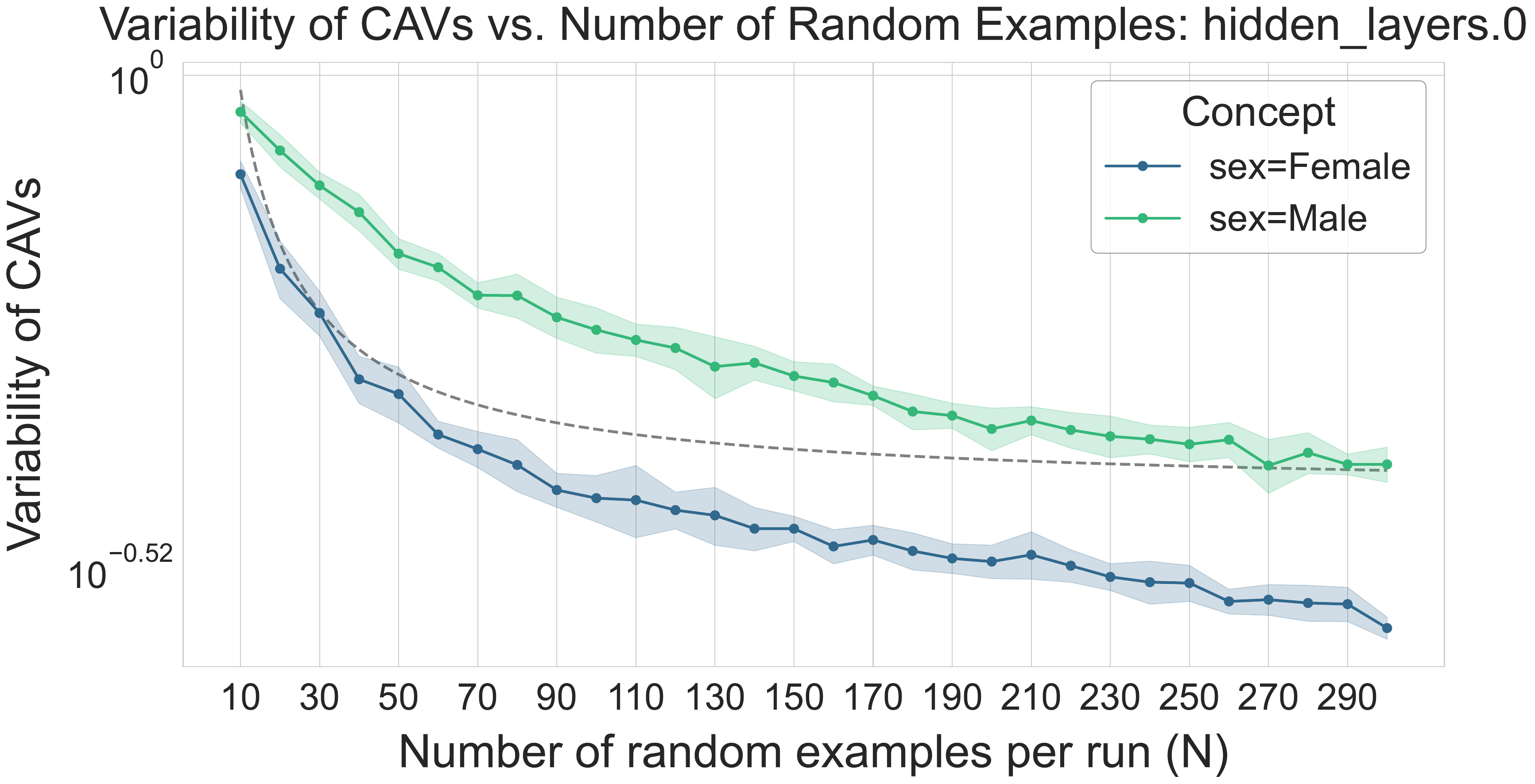}
  \end{minipage}
  \caption[Variance of CAVs (Hinge Loss) on UCI Adult]{
    Mean variability of \textsc{CAV}s as a function of the number of random examples per run ($N$) on the \textbf{UCI Adult dataset}. Results are shown for two different hidden layers. The linear classifiers for \textsc{CAV} generation were trained using \textbf{hinge loss}. Variance is estimated by the sum of per-feature variances across ten independent runs. We fitted a curve of the form $f(N) = a/N + b$ to it. For \texttt{fc\_in} the parameters were $a=2.36, b=0.662$, for \texttt{hidden\_layers\_0} they were $a=6.04, b=0.362$.}
  \label{fig:variance_of_cavs_hinge_tab}
\end{figure}

\begin{figure}[H]
  \centering
  \begin{minipage}{0.48\textwidth}
    \centering
    \includegraphics[width=\linewidth]{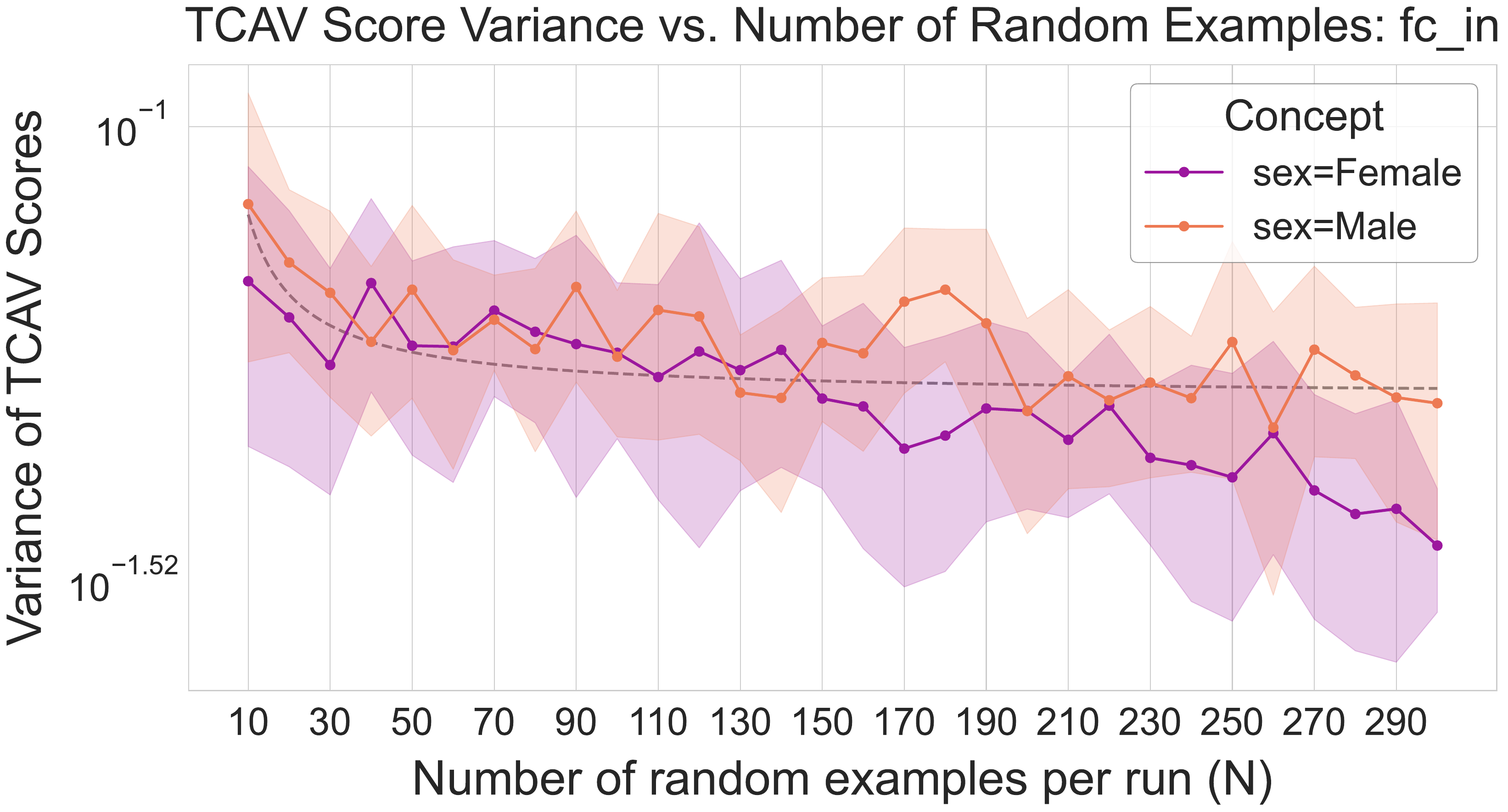}
  \end{minipage}\hfill
  \begin{minipage}{0.48\textwidth}
    \centering
    \includegraphics[width=\linewidth]{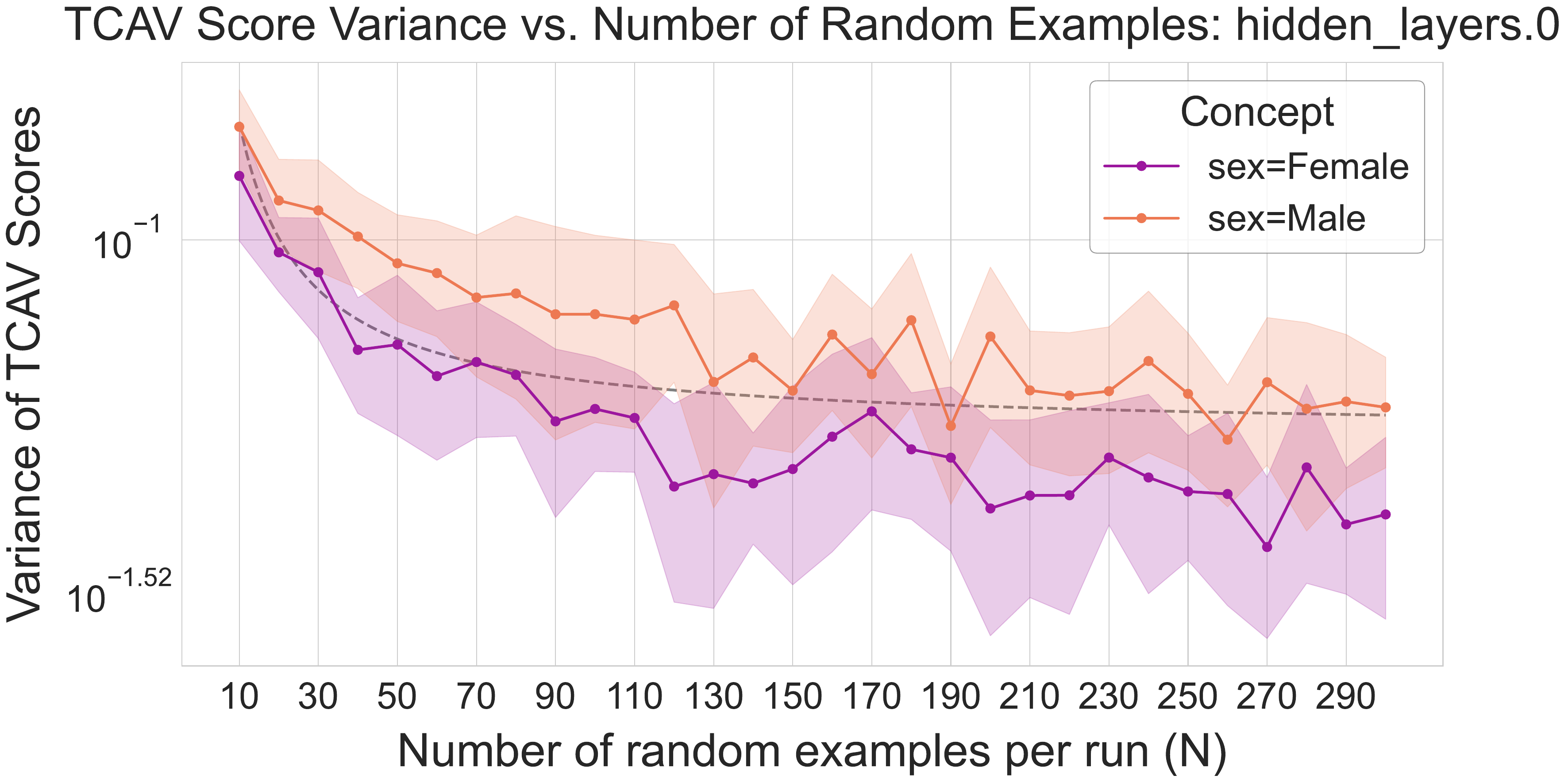}
  \end{minipage}
  \caption[Variance of TCAV scores (Hinge Loss) on UCI Adult]{
    Variance of \textsc{TCAV} scores vs.\ the number of examples per concept set ($N$) for the concepts ``male'' and ``female'' on the \textbf{UCI Adult dataset}. The underlying \textsc{CAV}s were trained using \textbf{hinge loss}. Error bars denote $\pm 1$ standard deviation. We fitted a curve of the form $f(N) = a/N + b$ to it. For \texttt{fc\_in} the parameters were $a=0.302, b=0.049$, for \texttt{hidden\_layers\_0} they were $a=0.976, b=0.0518$.}
  \label{fig:variance_of_tcavs_hinge_tab}
\end{figure}

\subsubsection{Empirical Findings with Difference of Means}
When \text{\textsc{Cav}\xspace}s are computed via the \emph{Difference of Means}-method, the variance decline best resembles an $\mathcal{O}(1/N)$ rate, compared to the other two methods we evaluated.

\begin{figure}[H]
    \centering
    \begin{minipage}{0.48\textwidth}
        \centering
        \includegraphics[width=\linewidth]{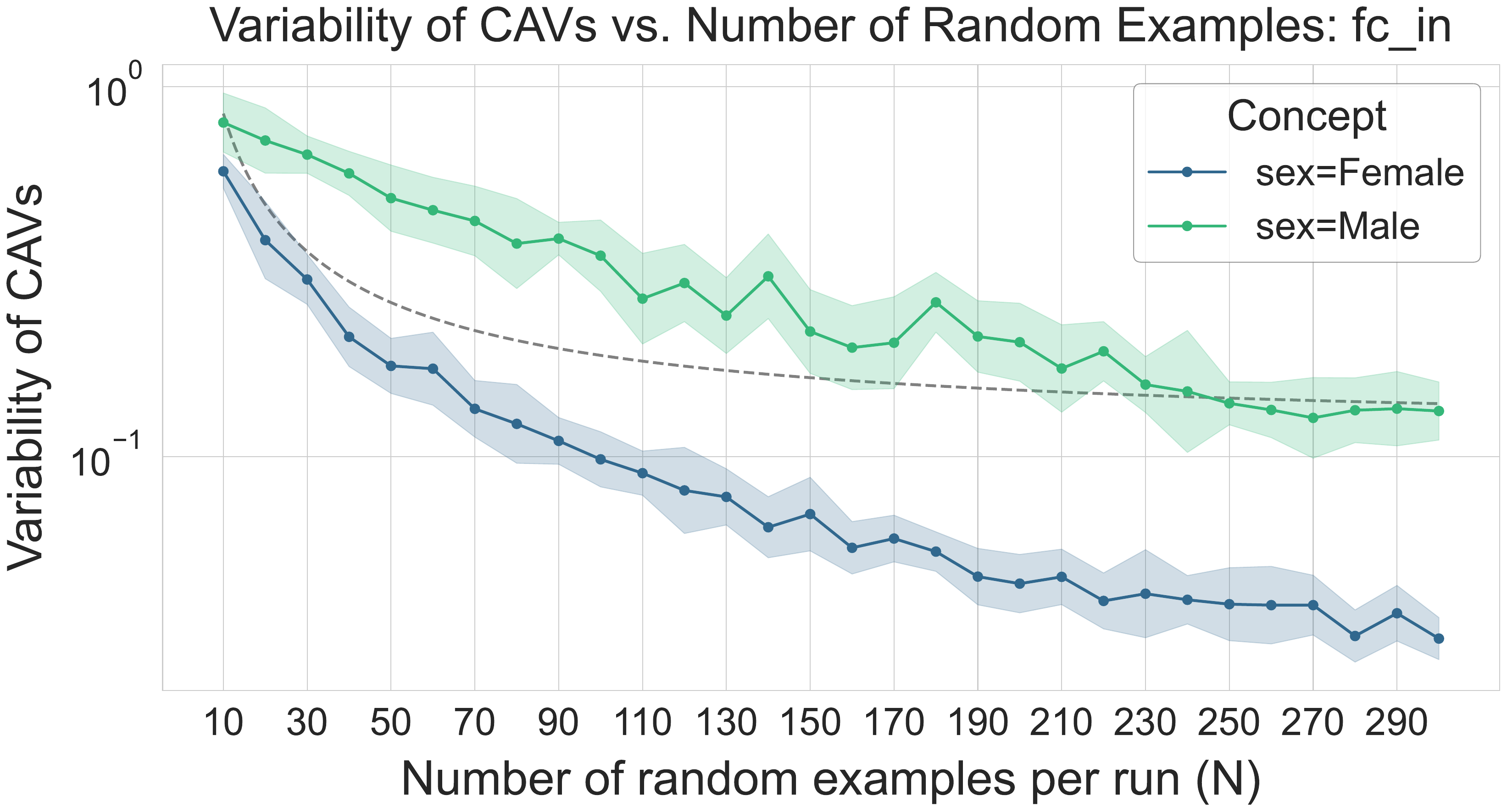}
    \end{minipage}
    \hfill
    \begin{minipage}{0.48\textwidth}
        \centering
        \includegraphics[width=\linewidth]{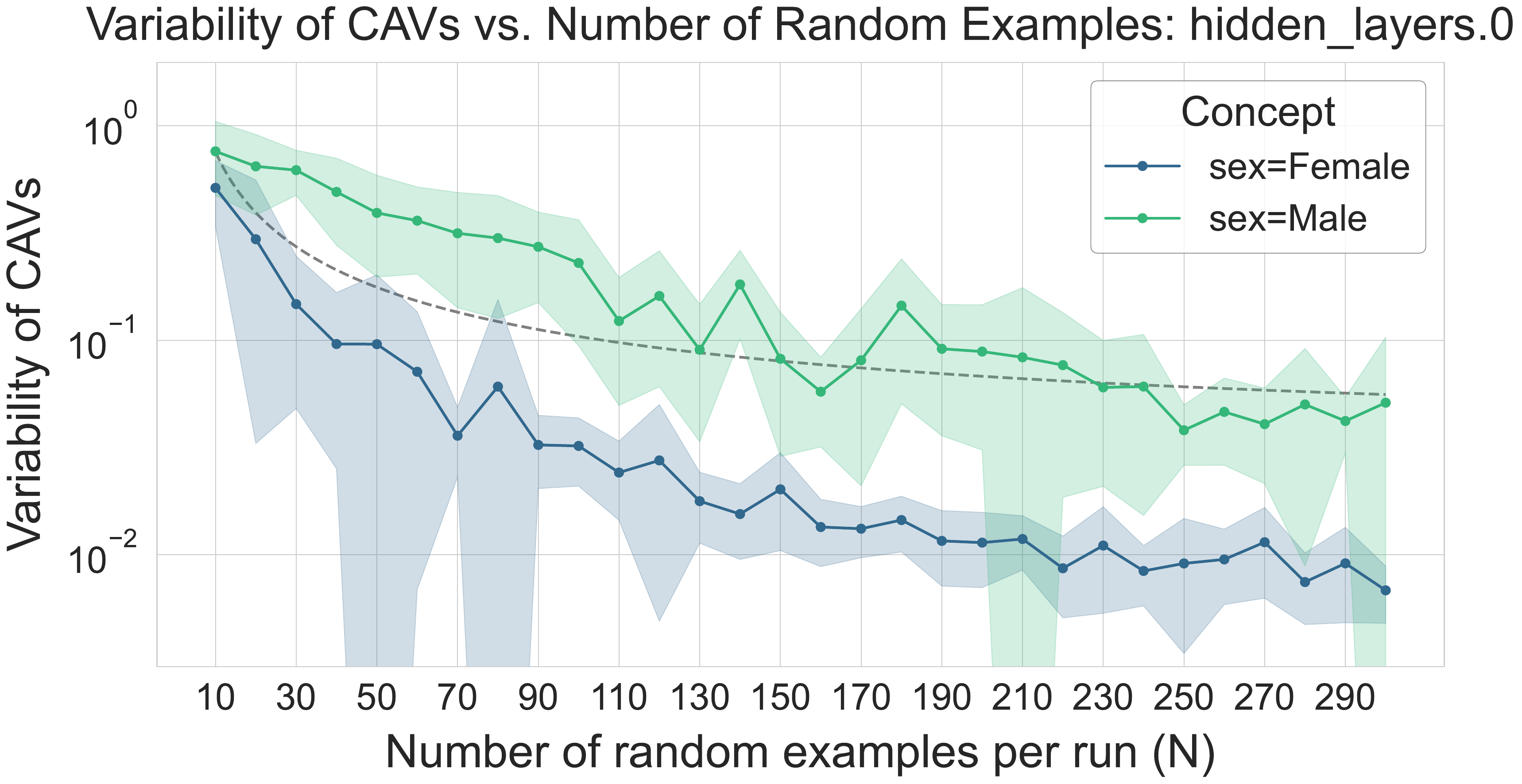}
    \end{minipage}
    \caption[Variability of CAVs (Difference of Means) on UCI Adult]{The linear classifiers for \text{\textsc{Cav}\xspace} generation were trained using \textbf{difference-of-means}. Error bars indicate $\pm 1$ SD; the $y$-axis is log-scaled. Variance is estimated by the sum of per-feature variances across ten independent runs.  We fitted a curve of the form $f(N) = a/N + b$ to it. For \texttt{fc\_in} the parameters were $a=7.31, b=0.115$, for \texttt{hidden\_layers\_0} they were $a=7.24, b=0.0317$. Moreover, for \texttt{hidden\_layers\_0} we clipped below $3 \times 10^{-3}$ to avoid visual distortion from extreme lows.
    \label{fig:variance_of_cavs_dom_tab}}
\end{figure}

\begin{figure}[H]
    \centering
    \begin{minipage}{0.48\textwidth}
        \centering
        \includegraphics[width=\linewidth]{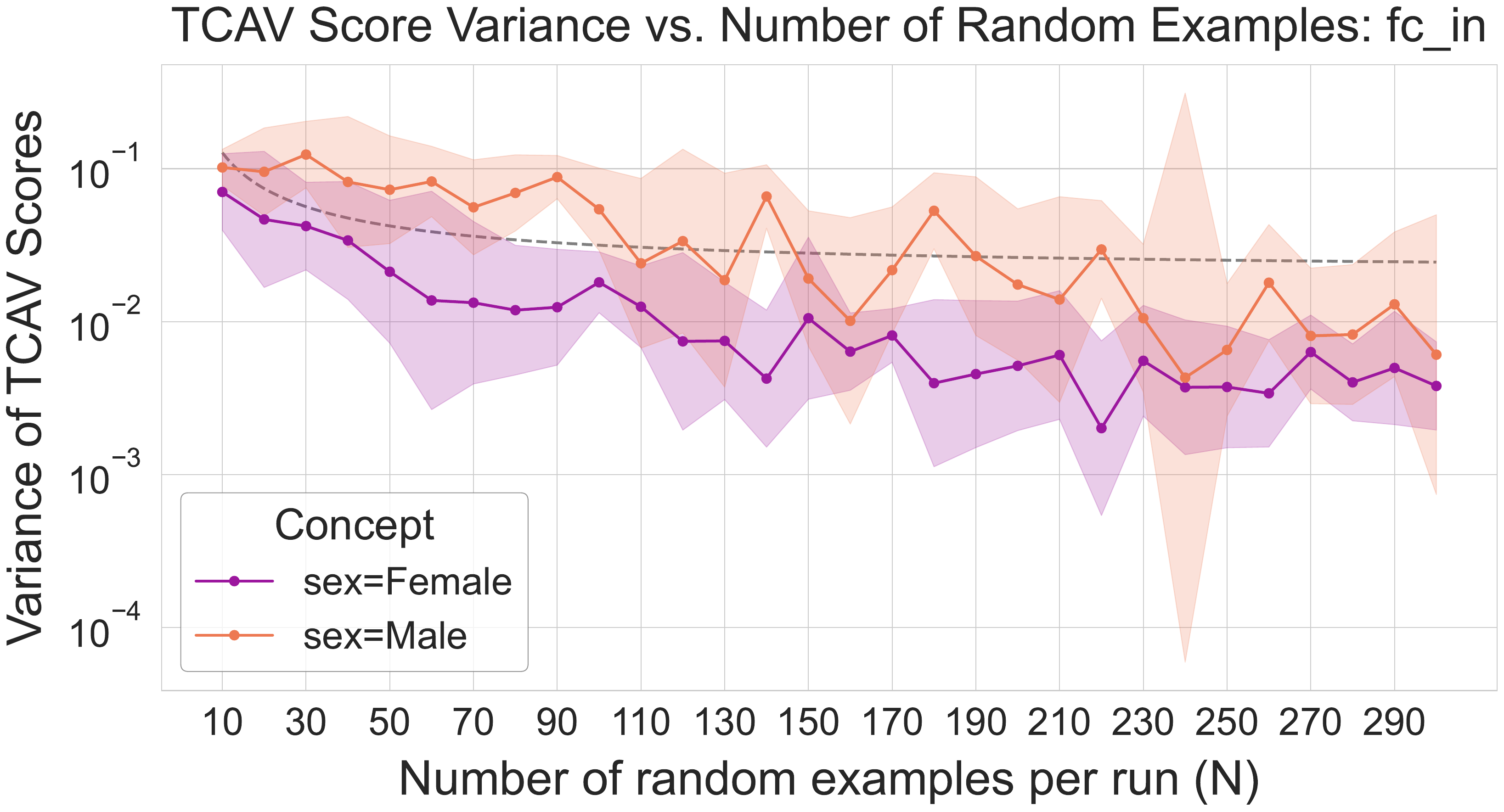}
    \end{minipage}
    \hfill 
    \begin{minipage}{0.48\textwidth}
        \centering
        \includegraphics[width=\linewidth]{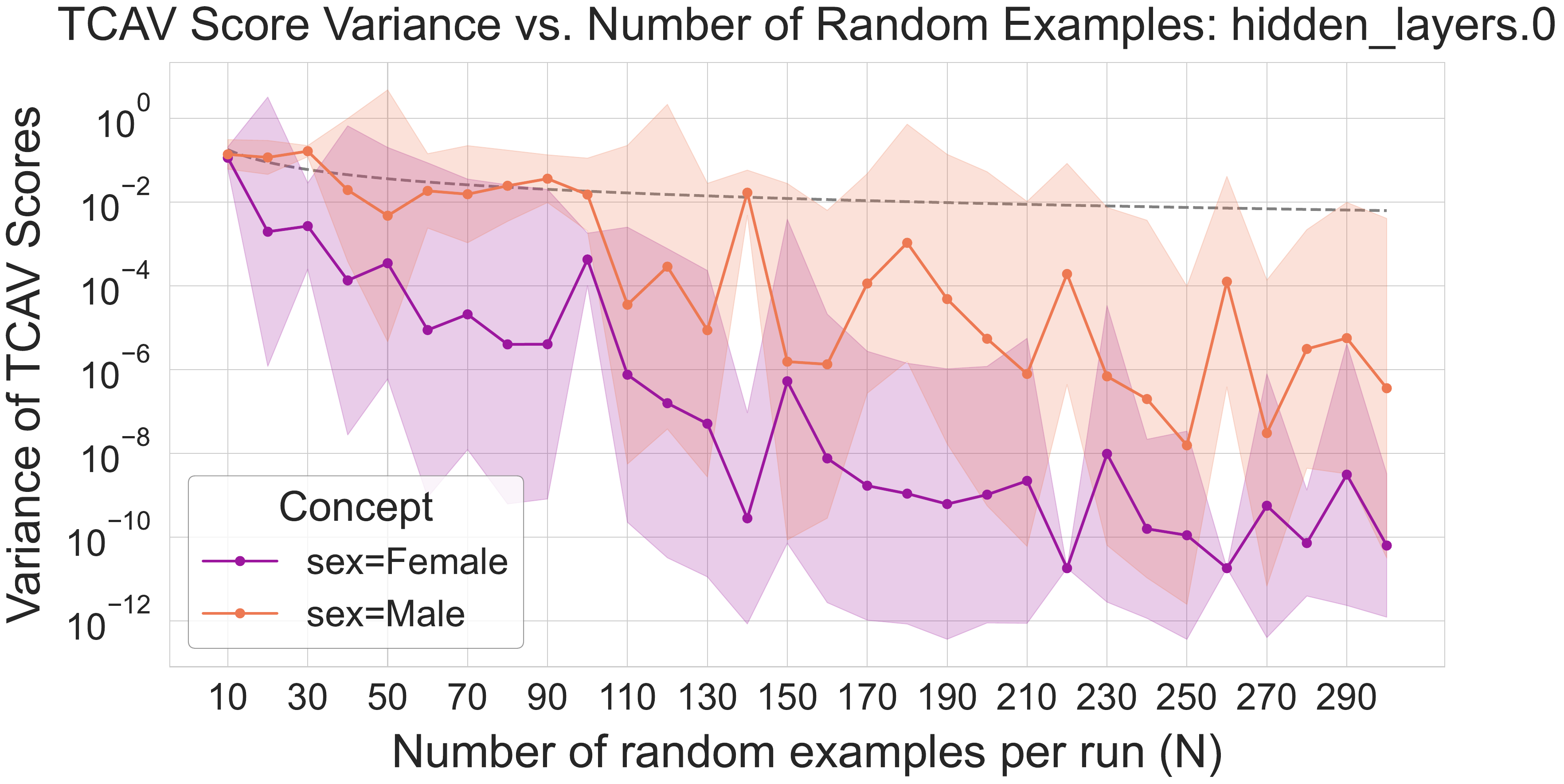}
    \end{minipage}
    \caption[Variance of TCAV scores (Difference of Means) on UCI Adult]{
    Mean geometrical variance of \text{\textsc{Tcav}\xspace} scores vs. the number of examples per concept set ($N$) for the concepts ``male'' and ``female'' on the \textbf{UCI Adult dataset}. The underlying \text{\textsc{Cav}\xspace}s were generated using \textbf{difference-of-means}. Error bars denote $\pm 1$ geometrical standard deviation.  We fitted a curve of the form $f(N) = a/N + b$ to it. For \texttt{fc\_in} the parameters were $a=1.06, b=0.0209$, for \texttt{hidden\_layers\_0} they were $a=1.81, b=3.05\times 10^{-4}$.
}
    \label{fig:variance_of_tcavs_dom_tab}
\end{figure}
\newpage
\subsection{TCAV for Text}
\label{appendix:text}
\subsubsection{Empirical Findings with Binary Cross-Entropy Loss}
For the text data (\textbf{IMDB}), we first report the results from the logistic regression classifier using binary cross-entropy loss.
\begin{figure}[H]
    \centering
    \begin{minipage}{0.48\textwidth}
        \centering
        \includegraphics[width=\linewidth]{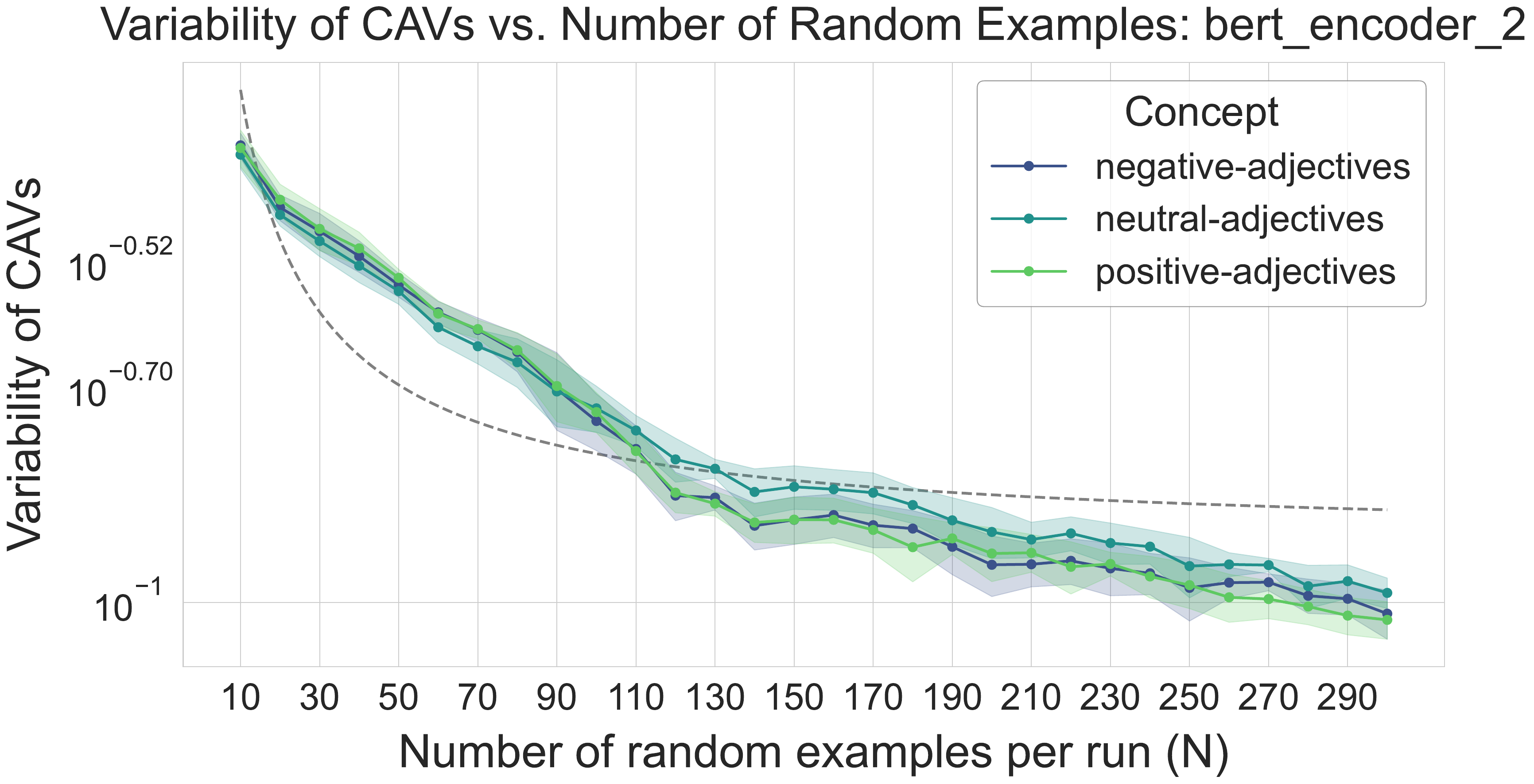}
    \end{minipage}
    \hfill 
    \begin{minipage}{0.48\textwidth}
        \centering
        \includegraphics[width=\linewidth]{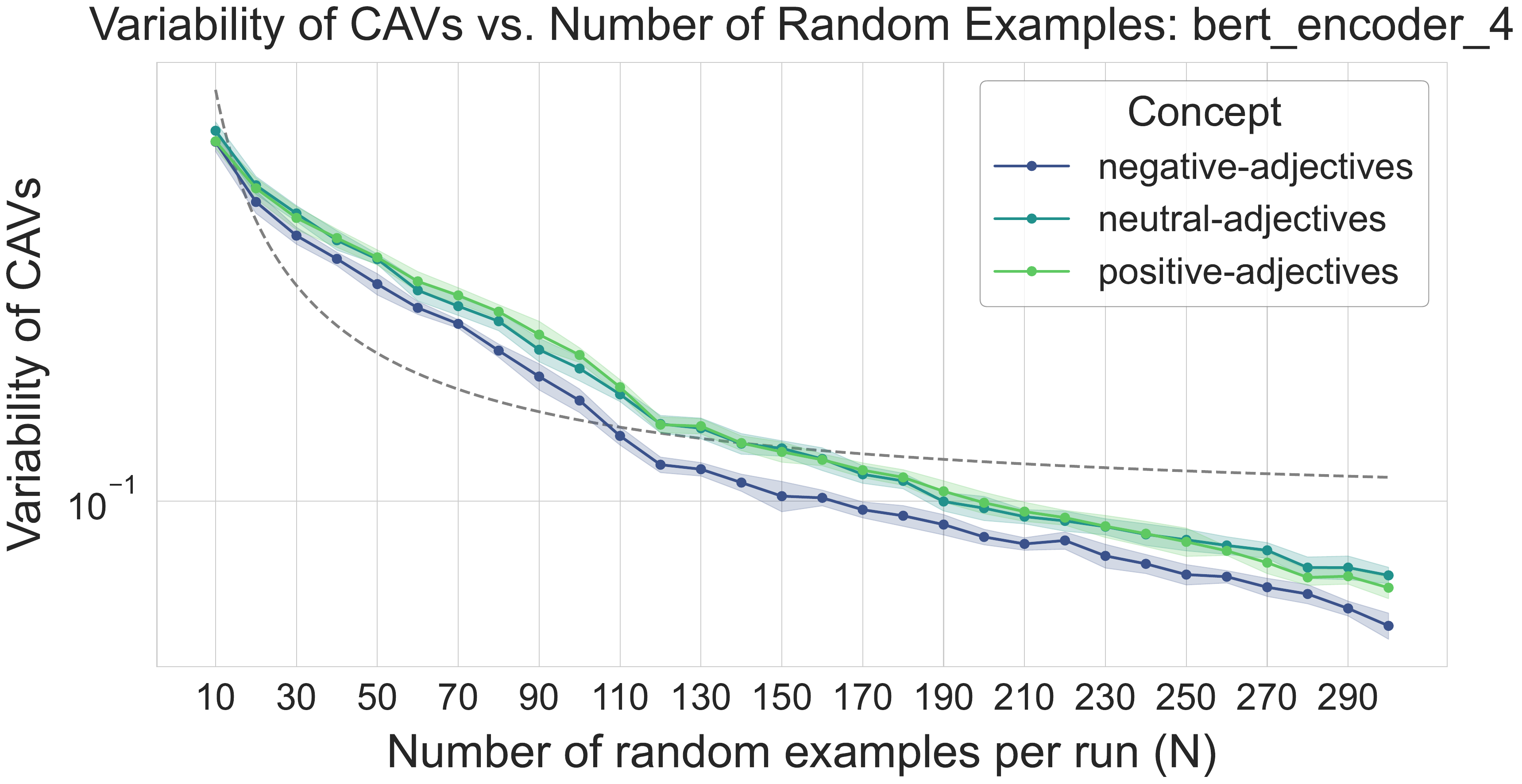}
    \end{minipage}

    \caption[Variability of CAVs vs. Number of Samples]{
        Mean variability of \text{\textsc{Cav}\xspace}s as a function of the number of random examples per run ($N$), shown for concepts at two different layers (\texttt{bert\_encoder\_2} on the left, \texttt{bert\_encoder\_4} on the right). Error bars indicate $\pm 1$ SD; the $y$-axis is log-scaled. Variance is estimated by the sum of per-feature variances across ten runs. We fitted a curve of the form $f(N) = a/N + b$ to it. For \texttt{bert\_encoder\_2} the parameters were $a=4.02, b=0.121$, for \texttt{bert\_encoder\_4} they were $a=4.27, b=0.0958$.
    }
    \label{fig:variance_of_cavs_log_text}
\end{figure}
\begin{figure}[h!]
    \centering
    \begin{minipage}{0.48\textwidth}
        \centering
        \includegraphics[width=\linewidth]{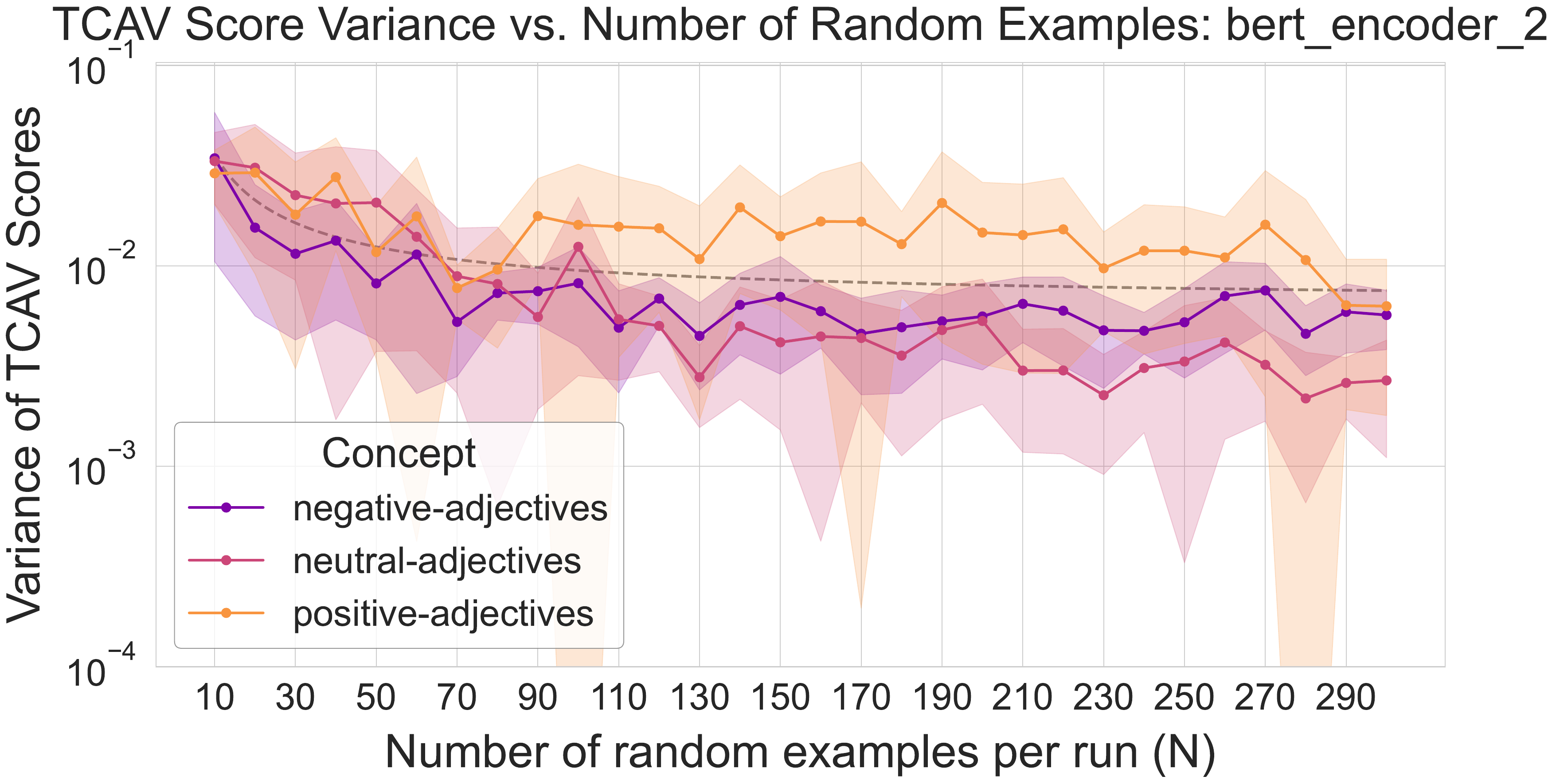}
    \end{minipage}
    \hfill 
    \begin{minipage}{0.48\textwidth}
        \centering
        \includegraphics[width=\linewidth]{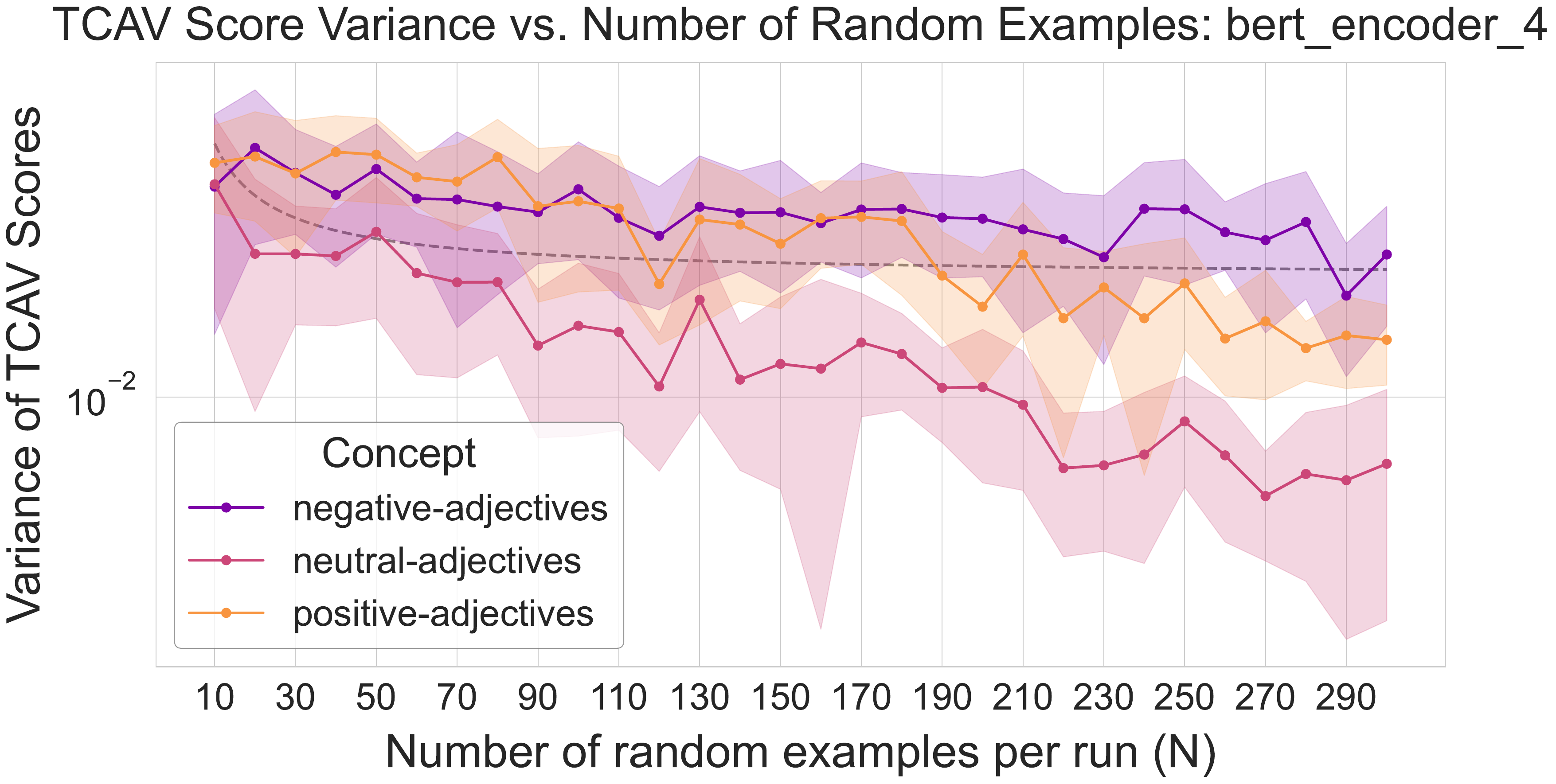}
    \end{minipage}
    \caption[Variance of TCAV scores vs. Number of Samples]{
        Variance of \text{\textsc{Tcav}\xspace} scores at layers \texttt{bert\_encoder\_2} and \texttt{bert\_encoder\_4} vs. the number of examples per concept set ($N$) for ``positive''-, ``negative''-, and ``neutral''-adjective concepts on the IMDB dataset; error bars denote $\pm 1$ standard deviation. We fitted a curve of the form $f(N) = a/N + b$ to it. For \texttt{bert\_encoder\_2} the parameters were $a=0.295, b=0.00654$, for \texttt{bert\_encoder\_4} they were $a=0.329, b=0.0226$. For \texttt{bert\_encoder\_2} we cut the $log‑y$-axis at $\geq 10^{-4}$ for readability to omit tiny standard deviation values.
    }
    \label{fig:variance_of_tcavs_log_text}
\end{figure}

\paragraph{}

\newpage
 \subsubsection{Empirical Findings with Hinge Loss}
We now present the results for the \texttt{SGDLinearModel}, which uses  hinge loss.
\begin{figure}[h!]
    \centering
    \begin{minipage}{0.48\textwidth}
        \centering
        \includegraphics[width=\linewidth]{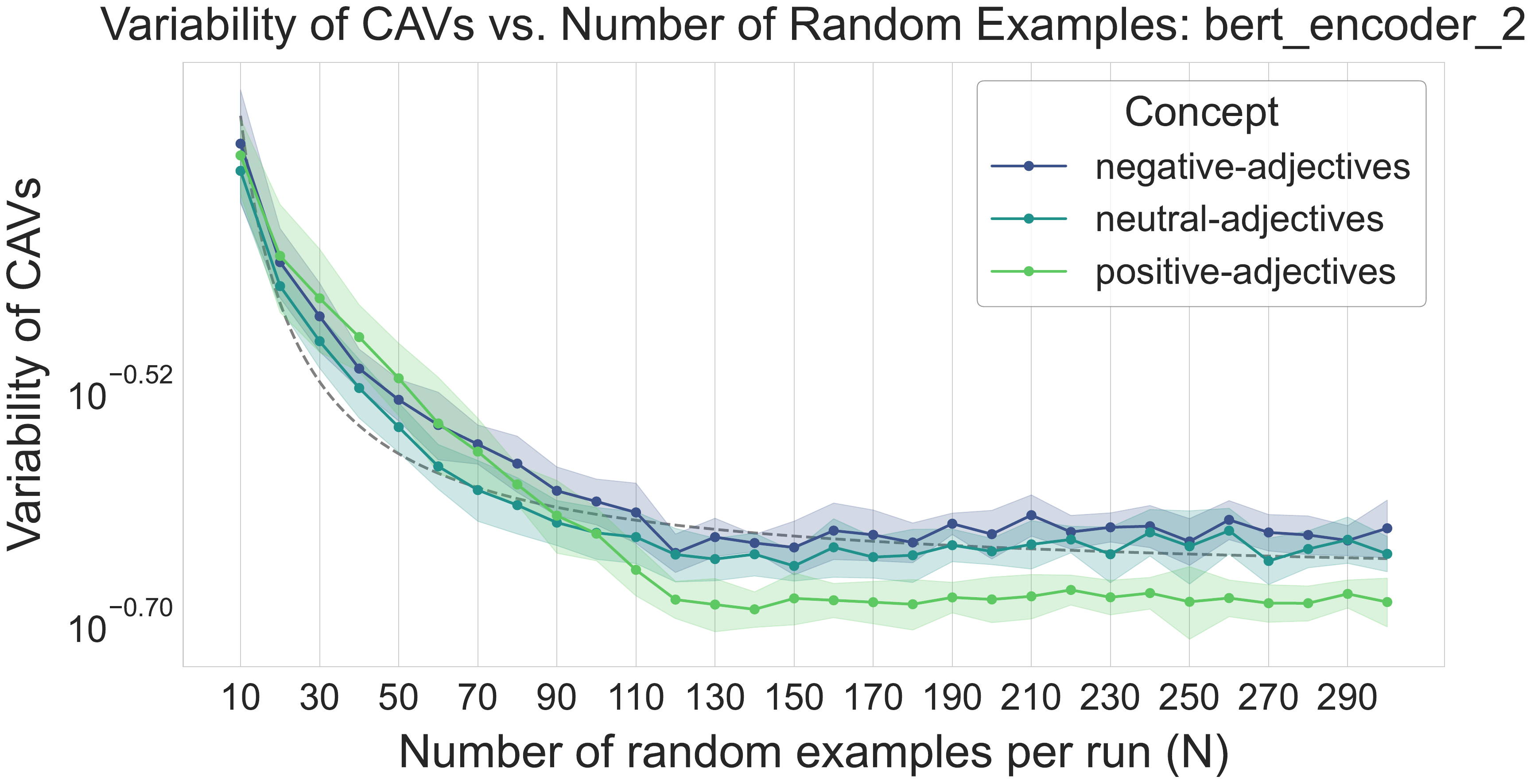}
    \end{minipage}
    \hfill
    \begin{minipage}{0.48\textwidth}
        \centering
        \includegraphics[width=\linewidth]{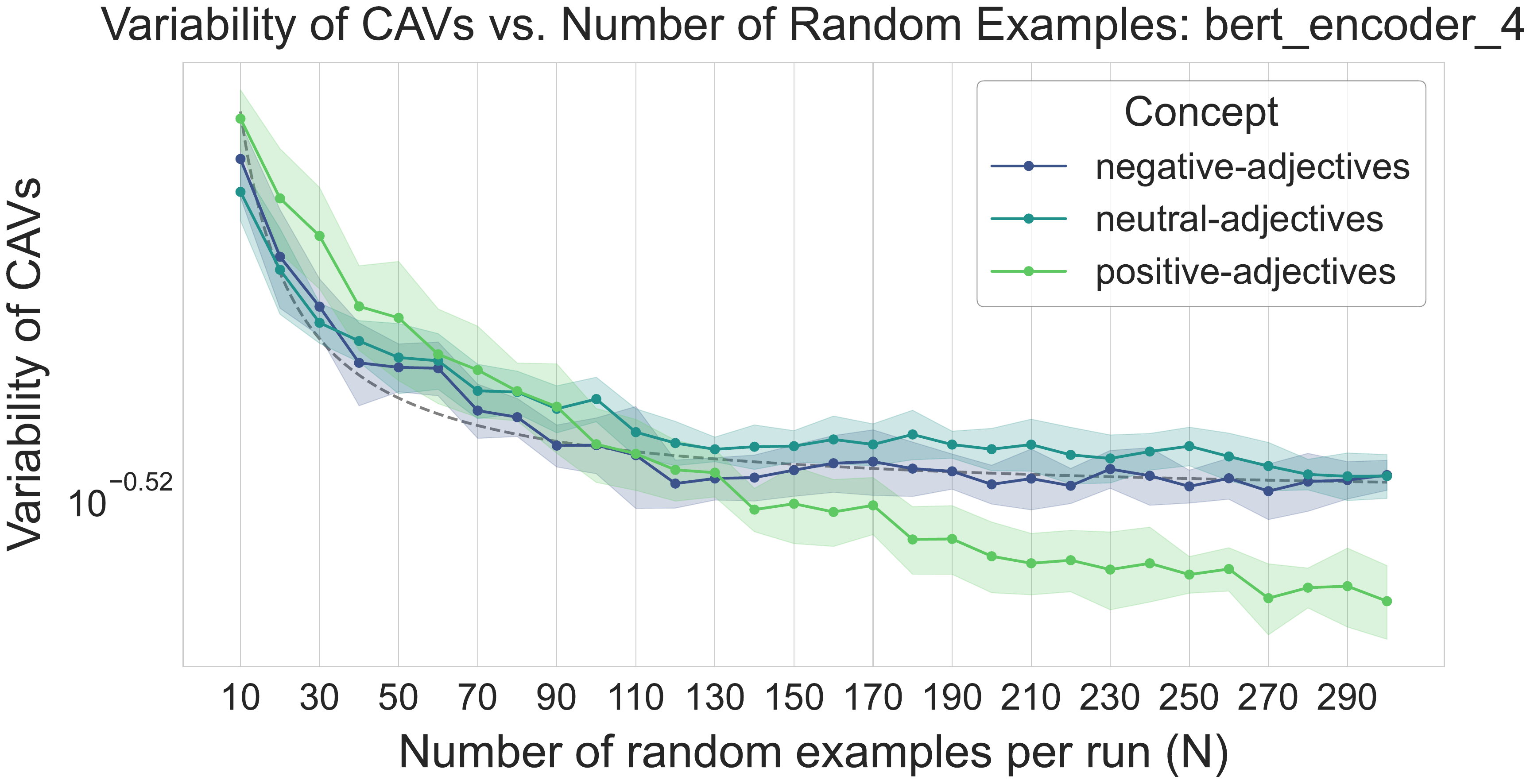}
    \end{minipage}
    \caption[Variability of CAVs (Hinge Loss) on UCI Adult]{
        Mean variability of \text{\textsc{Cav}\xspace}s as a function of the number of random examples per run ($N$), shown for concepts at two different layers. Error bars indicate $\pm 1$ SD; the $y$-axis is log-scaled. Variance is estimated by the sum of per-feature variances across ten runs. We fitted a curve of the form $f(N) = a/N + b$ to it. For \texttt{bert\_encoder\_2} the parameters were $a=2.69, b=0.215$, for \texttt{bert\_encoder\_4} they were $a=3.02, b=0.298$.
    }
    \label{fig:variance_of_cavs_hinge_text}
\end{figure}

\begin{figure}[H]
    \centering
    \begin{minipage}{0.48\textwidth}
        \centering
        \includegraphics[width=\linewidth]{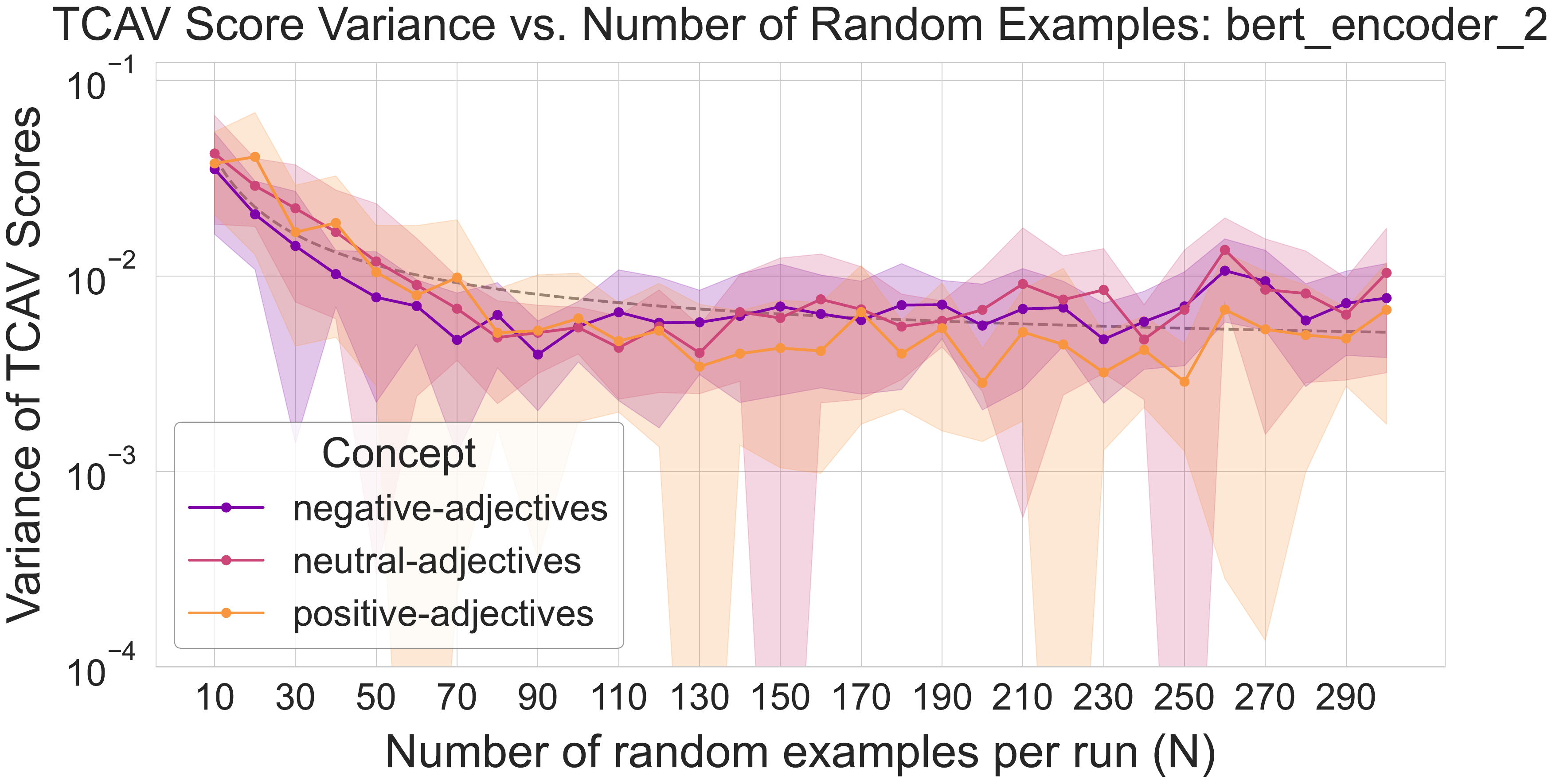}
    \end{minipage}
    \hfill 
    \begin{minipage}{0.48\textwidth}
        \centering
        \includegraphics[width=\linewidth]{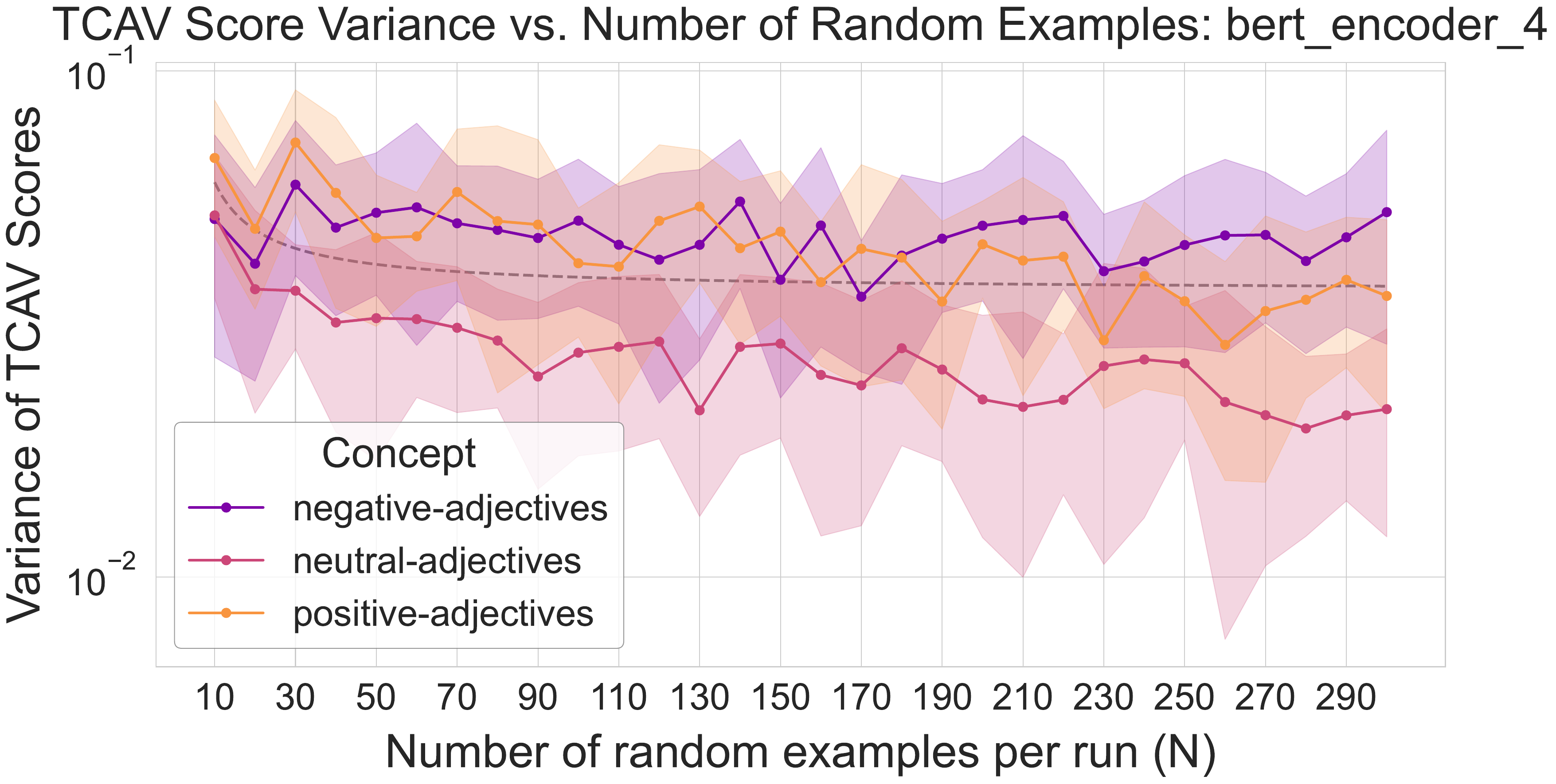}
    \end{minipage}
    \caption[Variance of TCAV scores vs. Number of Samples]{
        Variance of \text{\textsc{Tcav}\xspace} scores at layers \texttt{bert\_encoder\_2} and \texttt{bert\_encoder\_4} vs. the number of examples per concept set ($N$) for ``positive''-, ``negative''-, and ``neutral''-adjective concepts on the IMDB dataset; error bars denote $\pm 1$ standard deviation. We fitted a curve of the form $f(N) = a/N + b$ to it. For \texttt{bert\_encoder\_2} the parameters were $a=0.372, b=0.00391$ and cut the log scale at $\geq 10^{-4}$ so near‑zero bands do not dominate. For \texttt{bert\_encoder\_4} the fitted parameters were $a=0.235$ and $b=0.0368$.
    }
    \label{fig:variance_of_tcavs_hinge_text}
\end{figure}

\subsubsection{Empirical Findings with Difference of Means}
Again, we found that  \text{\textsc{Cav}\xspace}s computed via the \emph{Difference of Means}-method most closely follow a variance decline of $\mathcal{O}(1/N)$.

\begin{figure}[H]
    \centering
    \begin{minipage}{0.48\textwidth}
        \centering
        \includegraphics[width=\linewidth]{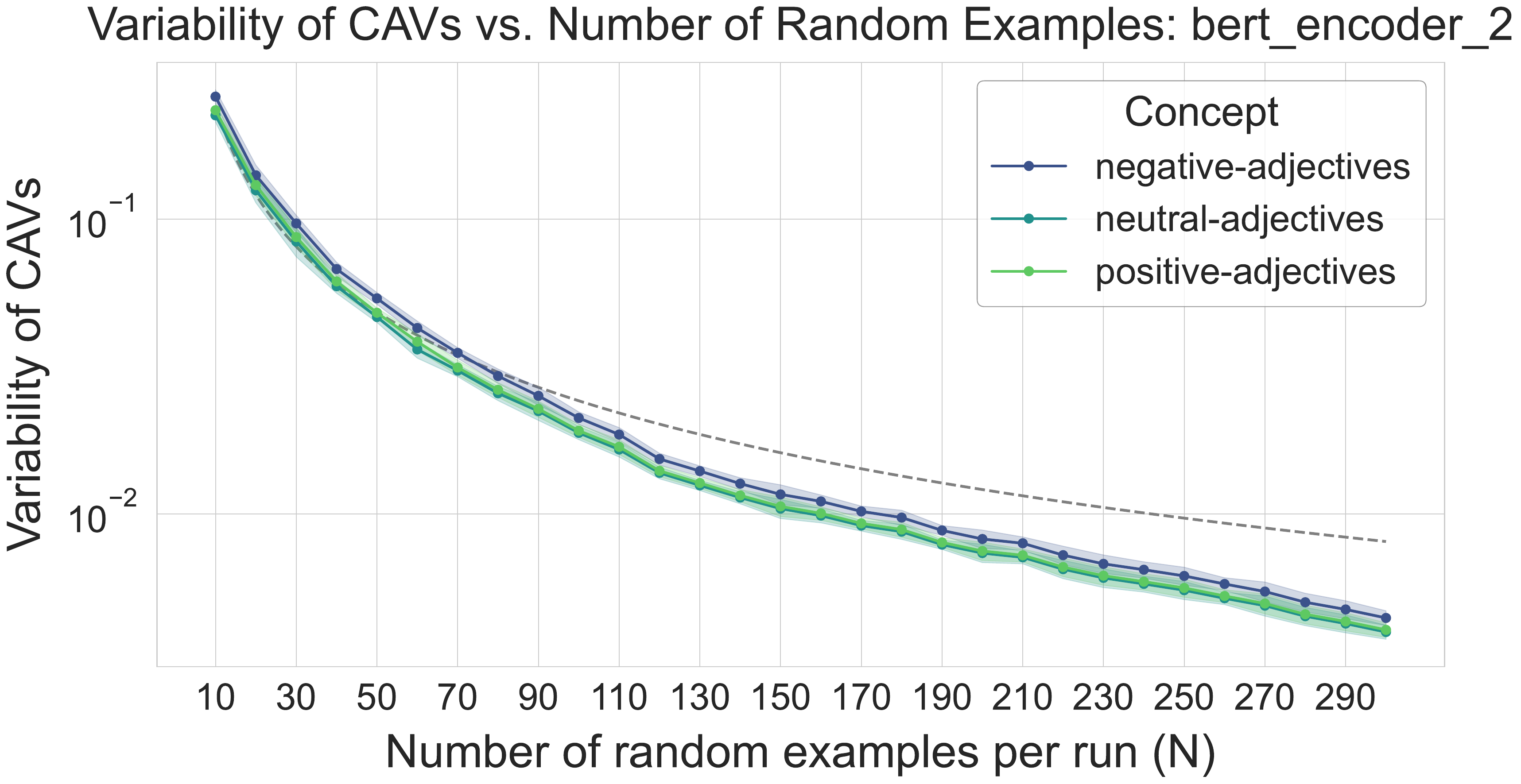}
    \end{minipage}
    \hfill
    \begin{minipage}{0.48\textwidth}
        \centering
        \includegraphics[width=\linewidth]{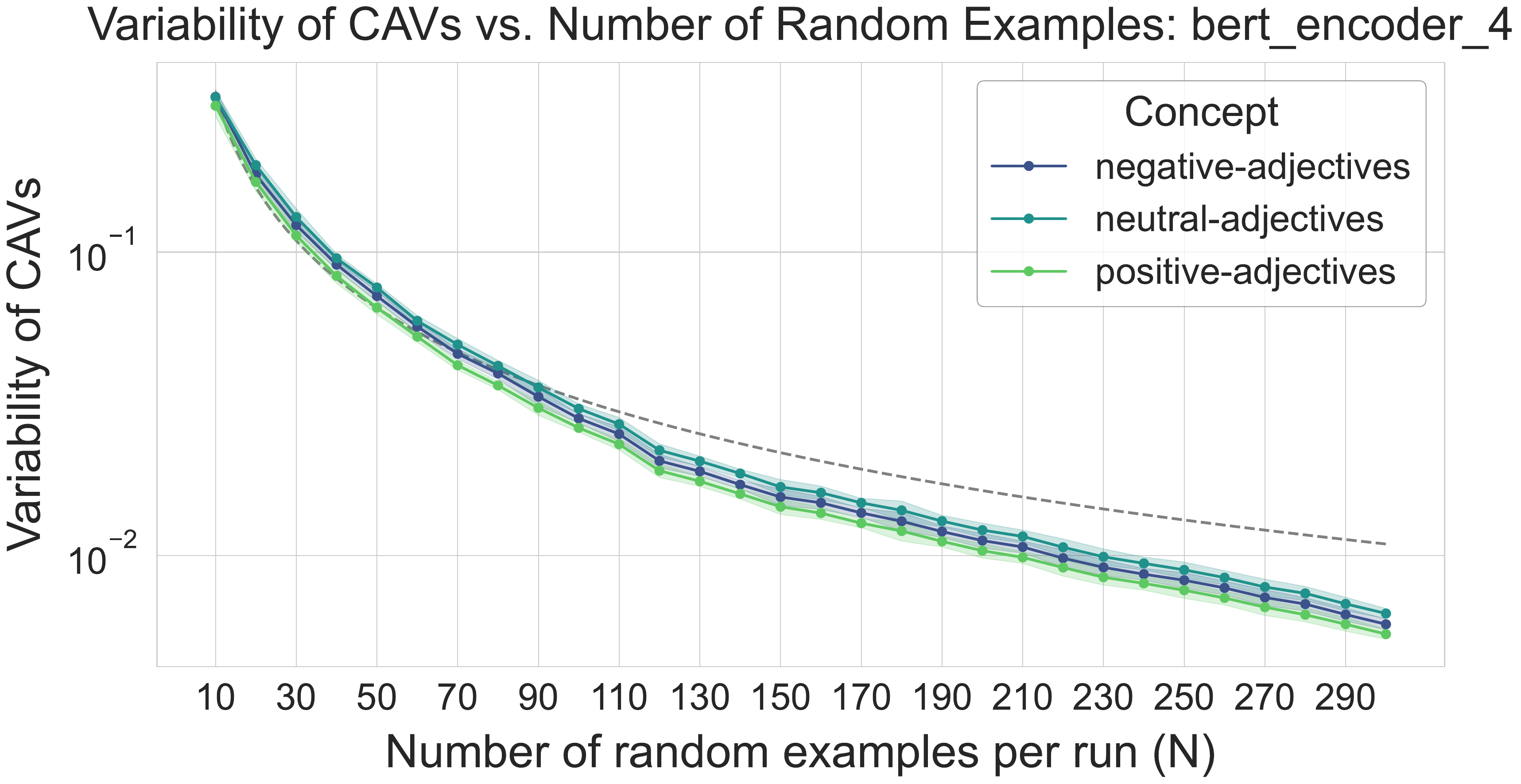}
    \end{minipage}

    \caption[Variability of CAVs (Difference of Means) on UCI Adult]{
        Mean variability of \text{\textsc{Cav}\xspace}s for the concepts ``positive'', ``negative'' and ``neutral'' adjectives as a function of the number of random examples per run ($N$). Results are shown for two different hidden layers. The \text{\textsc{Cav}\xspace} are generated using \textbf{difference-of-means}. Error bars indicate $\pm 1$ SD; the $y$-axis is log-scaled. Variance is estimated by the sum of per-feature variances across ten independent runs. We fitted a curve of the form $f(N) = a/N + b$ to it. For \texttt{bert\_encoder\_2} the parameters were $a=2.42, b=1.19\times 10^{-7}$, for \texttt{bert\_encoder\_4} they were $a=3.27, b=3.24\times 10^{-10}$.}
    \label{fig:variance_of_cavs_dom_text}
\end{figure}
\begin{figure}[H]
    \centering
    
    \begin{minipage}{0.48\textwidth}
        \centering
        \includegraphics[width=\linewidth]{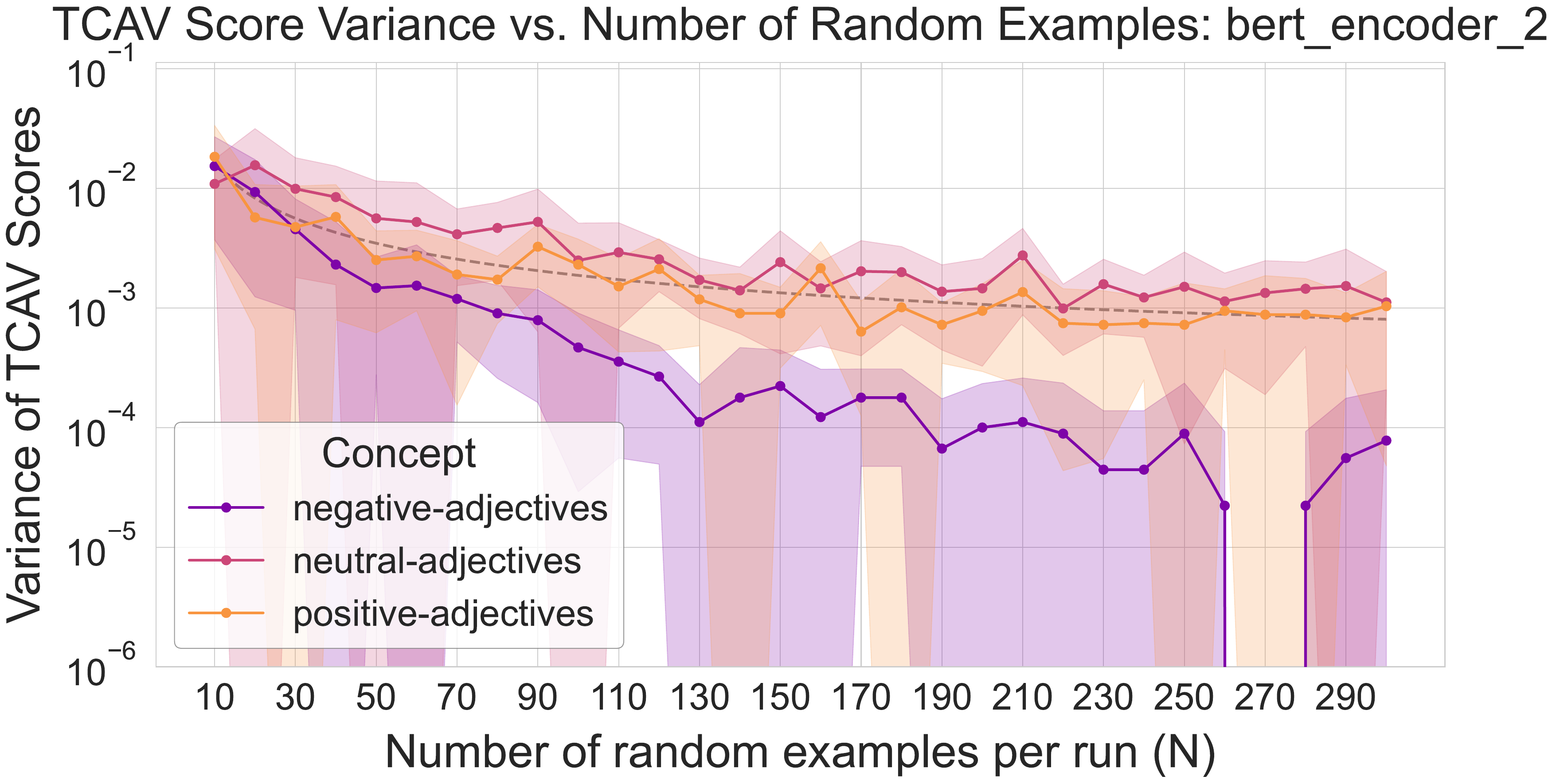}
    \end{minipage}
    \hfill
    \begin{minipage}{0.48\textwidth}
        \centering
        \includegraphics[width=\linewidth]{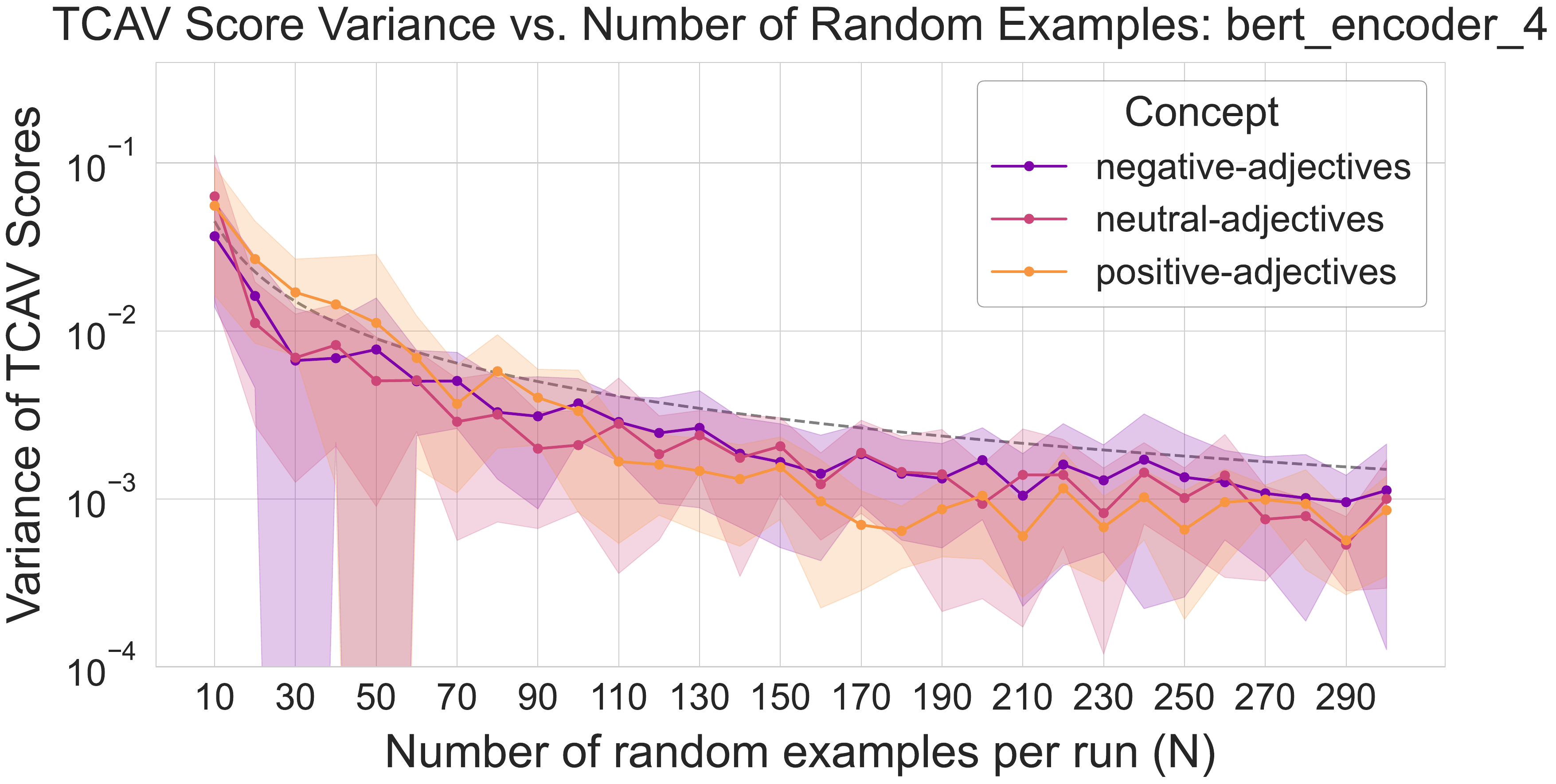}
    \end{minipage}
    \caption[Variance of TCAV scores (Hinge Loss) on UCI Adult]{
    Variance of \text{\textsc{Tcav}\xspace} scores  vs. the number of examples per concept set ($N$) for the concepts ``positive'', ``negative'' and ``neutral'' adjectives. The underlying \text{\textsc{Cav}\xspace}s were generated using \textbf{difference-of-means}. Error bars denote $\pm 1$ standard deviation. We fitted a curve of the form $f(N) = a/N + b$ to it. For \texttt{bert\_encoder\_2} the parameters were $a=0.16, b=2.7\times 10^{-4}$, for \texttt{bert\_encoder\_4} they were $a=0.449, b=1.59\times 10^{-8}$. We clipped below $10^{-6}$ respectively $10^{-4}$ to avoid visual distortion from extreme lows.}
    \label{fig:variance_of_tcavs_dom_text}
\end{figure}

\newpage
\section{Appendix B: Proofs of the Main Results}
\label{sec:appendix-proof}

\subsection{Asymptotic Normality with Binary Cross-Entropy Loss}
\label{appendix:cav_log}

In this section we present the proofs of the main results. 

\subsubsection{Setting and Notation.}

Recall that, for any $N >0$, we set $\alpha_N\in\Reals$ and $\beta_N$ the unique minimizers of the regularized loss (Eq.~\eqref{eq:objective-function}). 
We also set $A_N\defeq N\exps{\alpha_N}$. 
Remember the following assumptions. 

\SurroundedMean*

Given this assumption, we can derive, analogous to \citeauthor{owen_2007}~\shortcite{owen_2007}, two consequences. 
\begin{corollary}
\label{assumption:surround_eta}
For describing this condition, we let $\Omega = \{\omega \in \mathbb{R}^d \mid \omega^\top\omega = 1\}$ be the unit sphere in $\mathbb{R}^d$. There is $\eta > 0$ such that
\begin{equation}
\label{eq:assumption_eta}
\inf_{\omega \in \Omega} \int_{(z-\bar x)^\top\omega \ge 0} \Diff F_0(z) \ge \eta > 0 \quad \forall \omega \in \Omega.
\, 
\end{equation}
\end{corollary}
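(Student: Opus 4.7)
The corollary is a strict weakening of Assumption~\ref{assumption:surround_general}: replacing the threshold $\varepsilon > 0$ by $0$ and the strict inequality by a non-strict one can only enlarge the integration domain, so the measure can only grow. The entire content of the statement is therefore a monotone-measure argument, together with the observation that the uniformity in $\omega$ already present in Assumption~\ref{assumption:surround_general} transfers directly to the infimum in~\eqref{eq:assumption_eta}.

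\textbf{Plan of the argument.} First, I would fix an arbitrary $\omega \in \Omega$ and record the set inclusion
\begin{equation*}
\{z \in \Reals^d : (z-\bar x)^\top\omega > \varepsilon\} \;\subseteq\; \{z \in \Reals^d : (z-\bar x)^\top\omega \ge 0\},
\end{equation*}
which is immediate from $\varepsilon > 0$. Monotonicity of the probability measure $F_0$ then yields the pointwise bound
\begin{equation*}
\int_{(z-\bar x)^\top\omega \ge 0} \Diff F_0(z) \;\ge\; \int_{(z-\bar x)^\top\omega > \varepsilon} \Diff F_0(z) \;>\; \delta,
\end{equation*}
where the last inequality is exactly Assumption~\ref{assumption:surround_general}. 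Since the constant $\delta$ from Assumption~\ref{assumption:surround_general} is uniform in $\omega \in \Omega$, the same constant bounds the infimum over the unit sphere, and it suffices to set $\eta := \delta$ (or any $0 < \eta \le \delta$) to obtain~\eqref{eq:assumption_eta}.

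\textbf{Main obstacle.} There is essentially none. The corollary exists to put Assumption~\ref{assumption:surround_general} into a form (closed half-space, non-strict inequality, explicit positive infimum $\eta$) that matches the formulation used in \citeauthor{owen_2007}~\shortcite{owen_2007} and that is more convenient when bounding integrals of $\sigma$ later in the proof of Theorem~\ref{thm:asymptotic_normality_cav}. The only subtle point worth flagging is that no compactness or continuity argument on $\Omega$ is needed to pass to the infimum: because the lower bound $\delta$ from the assumption is already uniform in the direction $\omega$, the infimum inherits the bound trivially. In particular, one does \emph{not} need $\omega \mapsto \int_{(z-\bar x)^\top \omega \ge 0} \Diff F_0(z)$ to be continuous on $\Omega$ for the statement to hold.
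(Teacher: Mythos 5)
Your proof is correct and is exactly the intended argument: the paper states this corollary without proof (citing the analogy to \citeauthor{owen_2007}~\shortcite{owen_2007}), and the one-line monotonicity argument with $\eta := \delta$, using that $\delta$ in Assumption~\ref{assumption:surround_general} is already uniform over $\omega \in \Omega$, is the natural and evidently intended derivation. Your remark that no compactness or continuity of $\omega \mapsto \int_{(z-\bar x)^\top\omega \ge 0}\,\Diff F_0(z)$ is needed is a correct and worthwhile clarification.
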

\begin{corollary}
\label{consequence:surround_gamma}
Again let $\Omega = \{\omega \in \mathbb{R}^d \mid \omega^\top\omega = 1\}$ be the unit sphere in $\mathbb{R}^d$. Then there is $\gamma > 0$ such that
\begin{equation}
\label{eq:assumption_surround_gamma}
\inf_{\omega \in \Omega} \int \left[ (z-\bar x)^\top \omega \right]_+ \,dF_0(z) \ge \gamma > 0.
\end{equation}
\end{corollary}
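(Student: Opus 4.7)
The plan is to obtain the lower bound directly from Assumption~\ref{assumption:surround_general} by restricting the integration domain to the half-space cap where the linear functional $(z-\bar x)^\top\omega$ is bounded below by a fixed positive constant. The key observation is that the positive part $[(z-\bar x)^\top\omega]_+$ is non-negative everywhere and coincides with $(z-\bar x)^\top\omega$ on the set $\{z : (z-\bar x)^\top\omega > 0\}$, so truncating the domain of integration to the smaller set $\{z : (z-\bar x)^\top\omega > \varepsilon\}$ only decreases the value of the integral.

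First, I would fix an arbitrary $\omega \in \Omega$ and invoke the constants $\varepsilon > 0$ and $\delta > 0$ furnished uniformly in $\omega$ by Assumption~\ref{assumption:surround_general}. Restricting the integral to the cap $\{z : (z-\bar x)^\top\omega > \varepsilon\}$ and using that the integrand is pointwise at least $\varepsilon$ on this set, I get the chain
\begin{equation*}
\int [(z-\bar x)^\top\omega]_+ \Diff F_0(z) \;\ge\; \int_{(z-\bar x)^\top\omega > \varepsilon} (z-\bar x)^\top\omega \; \Diff F_0(z) \;\ge\; \varepsilon \int_{(z-\bar x)^\top\omega > \varepsilon} \Diff F_0(z) \;\ge\; \varepsilon\delta.
\end{equation*}

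Second, since $\varepsilon$ and $\delta$ are chosen uniformly over the unit sphere by Assumption~\ref{assumption:surround_general}, the bound $\varepsilon\delta$ does not depend on $\omega$. Taking the infimum over $\omega \in \Omega$ then yields the conclusion with $\gamma \defeq \varepsilon\delta > 0$.

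No real obstacle is anticipated: this is essentially a one-line deduction from the assumption, mirroring the analogous step in \citeauthor{owen_2007}~\shortcite{owen_2007}. The only subtlety worth noting is the uniformity of $\gamma$ in $\omega$, which must survive the infimum; this is automatic because the cap-measure lower bound $\delta$ and the truncation threshold $\varepsilon$ are both $\omega$-independent by hypothesis.
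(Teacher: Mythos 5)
Your proof is correct and is exactly the standard one-line deduction from the Surrounded Mean assumption that the paper intends (it gives no explicit proof, deferring to the analogous step in \citeauthor{owen_2007}~\shortcite{owen_2007}): restrict to the cap $\{(z-\bar x)^\top\omega>\varepsilon\}$, bound the integrand below by $\varepsilon$, and use the uniform mass bound $\delta$ to get $\gamma=\varepsilon\delta$. The uniformity of $\varepsilon$ and $\delta$ over $\Omega$ is indeed the only point to check, and you handle it correctly.
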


Example visualization of Assumption \ref{assumption:surround_general} for the tabular setting is given in Figure \ref{fig:surround_condition_tabular}. We provide the code to check this assumption in our notebooks.  
\begin{figure}[h!]
    \centering
    \begin{minipage}{0.48\textwidth}
        \centering
        \includegraphics[width=\linewidth]{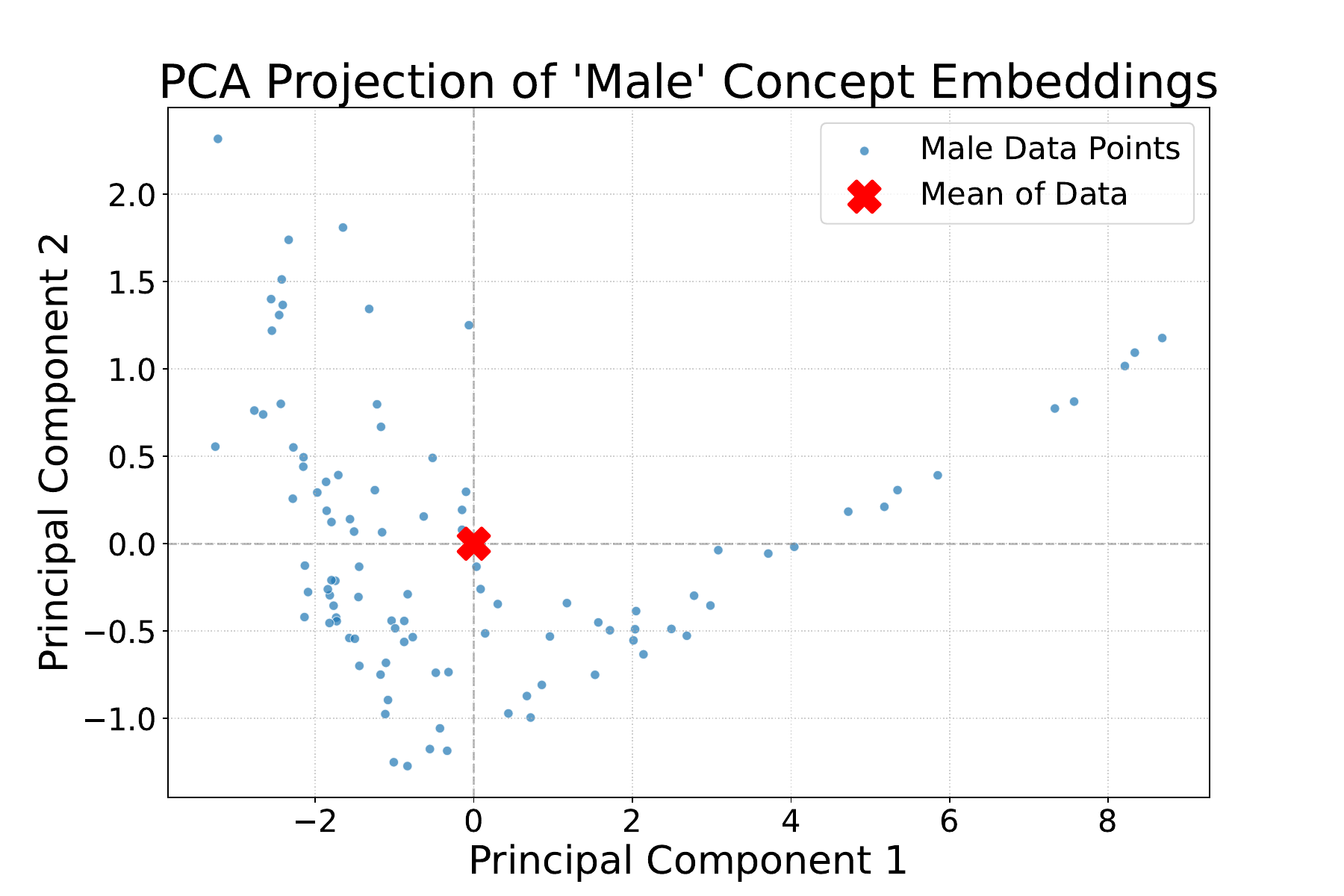}
    \end{minipage}
    \hfill 
    \begin{minipage}{0.48\textwidth}
        \centering
        \includegraphics[width=\linewidth]{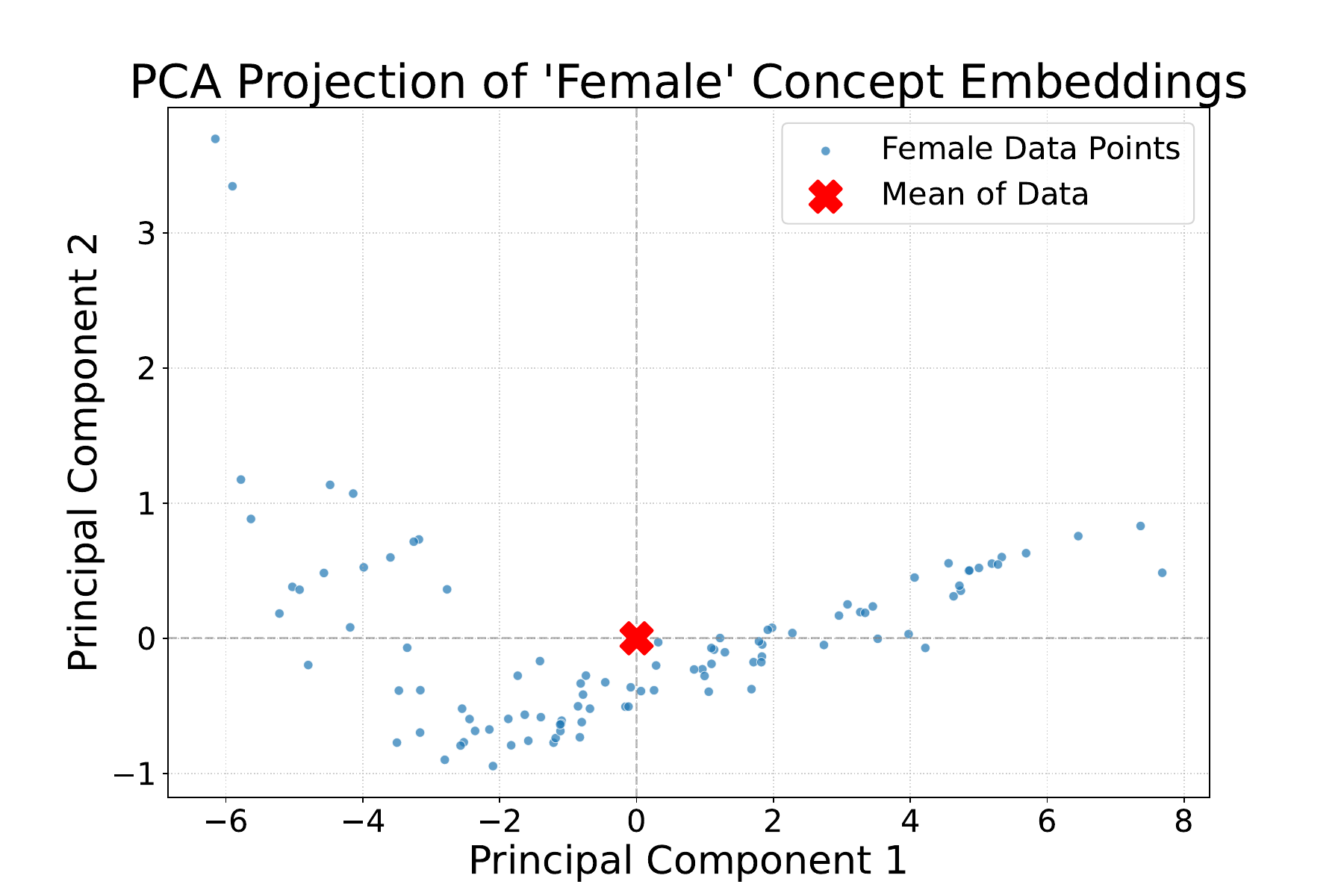}
    \end{minipage}
    \caption[Geometric visualization of the surround assumption]{
        Geometric visualization of the surround assumption (Assumption~\ref{assumption:surround_general}) on the latent space. The $32$-dimensional embeddings of the ``Male'' and ``Female'' concept are projected onto their first two principal components. The resulting scatter plot shows that the data points are not confined to a single half-space relative to their center, as they are distributed on all sides of the sample mean (red \texttt{'X'}).
    }
    \label{fig:surround_condition_tabular}
\end{figure}

The second assumption concerns the limit Hessian. 

\LimitHessian*
This ensures that the limiting objective function is strictly convex at the solution, which is a standard requirement for proving asymptotic normality.

Furthermore, we must assume the convergence of our parameters so that an asymptotic consideration even takes on meaning. 
Unlike $\beta$, the intercept $\alpha$ is not consistent. However, we can adapt a weaker statement from \citeauthor{owen_2007} \shortcite{owen_2007} to our case.

\begin{lemma}[Bounding for $\alpha_N$ (Lemma~6 in \cite{owen_2007})]
Assume that Assumption~\ref{assumption:surround_eta} holds. 
For any $N \ge 2n/\eta$, we have $\exps{\alpha_N} \le 2 \frac{n }{N\eta}$. 
\label{lemma:boundedness_of_alpha_N}
\end{lemma}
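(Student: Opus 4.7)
The plan is to combine the first-order optimality condition in $\alpha$ with the surround property of the reference distribution. First, differentiating the regularized objective~\eqref{eq:objective-function} with respect to $\alpha$, and noting that the $L^2$ penalty does not involve $\alpha$, the score equation evaluated at $(\alpha_N,\beta_N)$ reads
\begin{equation*}
\sum_{j=1}^N \sigma(\alpha_N + \beta_N^\top z_j) \;=\; \sum_{i=1}^n \bigl(1 - \sigma(\alpha_N + \beta_N^\top x_i)\bigr) \;\le\; n,
\end{equation*}
since every term on the right belongs to $[0,1]$. This is the only place where the first-order condition enters.

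Second, I would turn Corollary~\ref{assumption:surround_eta} into a lower bound on the left-hand side. If $\beta_N \ne 0$, set $\omega_N \defeq \beta_N/\|\beta_N\|$; the surround corollary applied to this particular direction ensures that at least a fraction $\eta$ of the reference points $z_j$ satisfy $\beta_N^\top (z_j - \bar x) \ge 0$, i.e., $\beta_N^\top z_j \ge \beta_N^\top \bar x$. Monotonicity of $\sigma$ then gives $\sigma(\alpha_N + \beta_N^\top z_j) \ge \sigma(\alpha_N + \beta_N^\top \bar x)$ on this subset, and the case $\beta_N = 0$ is trivial since every summand reduces to $\sigma(\alpha_N)$. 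In both situations, combining with the FOC bound yields
\begin{equation*}
N\eta \cdot \sigma(\alpha_N + \beta_N^\top \bar x) \;\le\; n.
\end{equation*}

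Third, I would invert this inequality. The hypothesis $N \ge 2n/\eta$ forces $\sigma(\alpha_N + \beta_N^\top \bar x) \le 1/2$, so that $\alpha_N + \beta_N^\top \bar x \le 0$. On the nonpositive range the elementary bound $\sigma(u) \ge e^u/2$ applies, and rearranging gives $\exp(\alpha_N + \beta_N^\top \bar x) \le 2n/(N\eta)$. This is the estimate stated in the lemma, under the convention---implicit in the surrounding text and explicit in Owen's original formulation---that the intercept $\alpha_N$ has absorbed the centering term $\beta_N^\top \bar x$ (equivalently, that the concept embeddings are shifted so $\bar x = 0$).

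The main obstacle I anticipate is conceptual rather than computational: Corollary~\ref{assumption:surround_eta} is a statement about the population law $F_0$, whereas the first-order condition involves a fixed empirical sample $\{z_j\}_{j=1}^N$. Making the count ``at least $\lceil N\eta\rceil$ of the $z_j$'' rigorous requires a uniform concentration argument over the unit sphere $\Omega$, which follows from the finite VC dimension of the class of half-spaces through $\bar x$: the empirical mass of every such half-space stays within $\eta/2$ of the population mass, uniformly in $\omega$, almost surely for all $N$ large enough. Incorporating this rigorously only costs constants (for instance, replacing $\eta$ by $\eta/2$ and the threshold by $N \ge 4n/\eta$), and I would follow the route of \cite{owen_2007} to discharge it cleanly.
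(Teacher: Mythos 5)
Your proposal is correct and follows essentially the same route as the paper's proof: both exploit the first-order condition in $\alpha$ (the $L^2$ penalty being independent of the intercept), discard the nonnegative concept-sample terms, and lower-bound the reference-sample contribution by applying the surround condition in the direction $\beta_N/\norm{\beta_N}$ together with monotonicity of the logistic function, before inverting the resulting inequality under $N \ge 2n/\eta$. The only substantive difference is that the paper writes the reference contribution directly as $N\int(\cdot)\,\Diff F_0(z)$, silently working with the population objective, whereas you keep the empirical sum and correctly flag that converting the population surround bound into a count of sample points requires a uniform concentration step over half-spaces --- a gap the paper's version does not acknowledge.
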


\begin{proof}
Fix any $\beta_N \in \mathbb R^d$. 
Then
\[
\nabla_\alpha \mathcal{L}_N^{(\lambda)}
=n
\;-\;
\sum_{i=1}^n
\frac{A_N\,N^{-1}e^{(x_i- \bar x)^\top\beta_N}}
     {1 + A_N\,N^{-1}e^{(x_i-\bar x)^\top\beta_N}}
\;-\;
N\int
\frac{A_N\,N^{-1}e^{(z-\bar x)^\top\beta_N}}
     {1 + A_N\,N^{-1}e^{(z-\bar x)^\top\beta_N}}
\,\Diff F_0(z).
\]

By the same bounding steps as in the un‐penalized case \cite{owen_2007},
\[
\nabla_{\alpha_N} \mathcal{L}_N^{(\lambda)}
\;\le\;
n
\;-\;
\frac{A_N\,\eta}{1 + A_N/N}.
\]
If $N\ge2n/\eta$ but $A_N>2n/\eta$, then
\[
\frac{A_N\,\eta}{1 + A_N/N}
>
\frac{A_N\,\eta}{1 + (A_N\,\eta)/(2n)}
>
n,
\]
so $\nabla_{\alpha_N} \mathcal{L}_N^{(\lambda)} <0$.  Concavity in $\alpha_N$ then forces
\[
\alpha_N
<\log\!\bigl(2n/\eta\bigr)-\log N
\quad\Longrightarrow\quad
e^{\alpha_N}\le\frac{2n}{N\,\eta}.
\]
Since this holds for every $\beta_N$, the result follows.
\end{proof}
\begin{assumption}[Intercept Scaling Limit]
\label{assumption:limit_A}
The limit $A_0 = \lim_{N\to\infty} N e^{\alpha_N}$ exists. 
\end{assumption}
This holds in practice as $A_0$ stabilizes with increasing $N$.
For $\beta$, unlike $\alpha$, we can assume that $\beta$ is consistent. 
\begin{assumption}[Consistency of $\beta$]
\label{assumption:beta_alpha_consistency}
The log-likelihood function from \eqref{eq:logistic-regression-model}, denoted by $\mathcal{L}_N(a, \beta)$, converges to a non-random, limiting function $\mathcal{L}_0(a, \beta)$ as $N \to \infty$. This limiting function is assumed to have an unique slope parameter, $\beta_0$. Furthermore, the finite-sample maximizer, $\beta_N$ converges to the  true parameter $\beta_0$, \emph{i.e.}, 
$ \beta_N \xrightarrow{p} \beta_0$.
\end{assumption} 

Given that $\beta_N$ converges to the true unique slope parameter $\beta_0$ we can bound it asymptotically. 
\begin{lemma}[Bounding $\beta_N$ (Lemma~7 in \cite{owen_2007})]
Assume that Assumption~\ref{assumption:surround_general} holds. 
Then 
\[
\limsup_{N\to\infty} \norm{\beta_N} < \infty
\, .
\]
\label{lemma:boundedness_of_beta_N}
\end{lemma}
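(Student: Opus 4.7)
The plan is to argue by contradiction, adapting Owen's Lemma~7 to account for the $L^{2}$-penalty term $\lambda\beta_N$ that now appears on the right-hand side of the $\beta$-gradient equation. Assume, for contradiction, that $\limsup_{N\to\infty}\|\beta_N\|=\infty$, so there exists a subsequence $(N_k)_{k\ge 1}$ with $\|\beta_{N_k}\|\to\infty$. Set $\omega_k:=\beta_{N_k}/\|\beta_{N_k}\|$; by compactness of the unit sphere $\Omega\subset\Reals^d$, we may pass to a further subsequence along which $\omega_k\to\omega_\star$ for some $\omega_\star\in\Omega$. The goal is to use the surround condition along the limit direction $\omega_\star$ to rule out this divergence.

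Writing $p_i:=\sigma(\alpha_{N_k}+\beta_{N_k}^\top x_i)$ and $p_j:=\sigma(\alpha_{N_k}+\beta_{N_k}^\top z_j)$, the vanishing of the $\alpha$-gradient (which carries no penalty) yields the balance $\sum_{i=1}^n(1-p_i)=\sum_{j=1}^{N_k}p_j\le n$, and the $\beta$-gradient, after subtracting $\bar x$ times this balance, rewrites as
\[
\lambda\,\beta_{N_k}=\sum_{i=1}^n(1-p_i)(x_i-\bar x)-\sum_{j=1}^{N_k}p_j\,(z_j-\bar x).
\]
I would then project this identity onto $\omega_k$. The left-hand side becomes $\lambda\|\beta_{N_k}\|\to\infty$, whereas the first sum on the right is uniformly bounded by $n\,\max_i\|x_i-\bar x\|$. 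Therefore the second sum is forced to satisfy $\sum_j p_j\,\omega_k^\top(z_j-\bar x)\to-\infty$.

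To contradict this, I would show the second sum is asymptotically non-negative up to a vanishing error. Splitting according to the sign of $\omega_k^\top(z_j-\bar x)$: the positive-side contribution is non-negative term by term; on the negative side, $\beta_{N_k}^\top(z_j-\bar x)\le 0$ forces $p_j$ to be of order $e^{\alpha_{N_k}+\beta_{N_k}^\top\bar x}$, and the bound $A_{N_k}:=N_k e^{\alpha_{N_k}}=O(1)$ from Lemma~\ref{lemma:boundedness_of_alpha_N} controls the prefactor. The surround condition (Corollary~\ref{consequence:surround_gamma}) then guarantees that the positive-side mass does not collapse in the limit direction $\omega_\star$. Combining the two estimates gives $\sum_j p_j\,\omega_k^\top(z_j-\bar x)\ge -o(1)$, contradicting the required divergence to $-\infty$.

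The main obstacle is the exponential factor $e^{\beta_{N_k}^\top\bar x}$ that slips into the control of the negative side: unless $\bar x$ is aligned with $\omega_k$, this factor may grow with $\|\beta_{N_k}\|$ and destroy the estimate. Handling it requires either an exponential-moment assumption on $F_0$---automatic when latent activations are bounded, as is the case for standard networks on normalized inputs---or a simultaneous tracking of the $\alpha$- and $\beta$-gradient conditions that sidesteps the decomposition through $\bar x$ entirely. This coupling between the intercept and slope optimality equations is the delicate technical step of the proof.
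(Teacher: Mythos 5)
Your approach is genuinely different from the paper's, and it contains a gap that you yourself identify but do not close. You adapt Owen's unpenalized Lemma~7 by contradiction through the first-order conditions, projecting the $\beta$-score equation onto $\omega_k$ and invoking the surround condition; the negative-side estimate then breaks down exactly where you say it does. Bounding $p_j=\sigma(\alpha_{N_k}+\beta_{N_k}^\top z_j)$ for the points with $\omega_k^\top(z_j-\bar x)<0$ produces the prefactor $e^{\alpha_{N_k}+\beta_{N_k}^\top\bar x}=e^{\alpha_{N_k}}\,e^{\|\beta_{N_k}\|\,\omega_k^\top\bar x}$, and since $\|\beta_{N_k}\|\to\infty$ under the contradiction hypothesis, this factor is uncontrolled whenever $\omega_\star^\top\bar x>0$. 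Neither of your proposed fixes rescues the estimate: an exponential-moment condition on $F_0$ does not help because the analogous quantity $\mathbb{E}\bigl[e^{\beta_{N_k}^\top Z}\bigr]$ itself diverges as $\|\beta_{N_k}\|\to\infty$ (for Gaussian $F_0$ it equals $e^{\|\beta\|^2/2}$), and the ``simultaneous tracking'' of the $\alpha$- and $\beta$-equations is precisely the missing step, not a sketchable detail. Note also that Owen's own Lemma~7 is not proved through the score equations but through a comparison of likelihood values, with Corollary~\ref{consequence:surround_gamma} supplying a lower bound on the loss increase that grows linearly in $\|\beta\|$. As written, the proposal is therefore an incomplete sketch rather than a proof.

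The paper's argument is far simpler and sidesteps all of this: it compares the penalized objective at $(\alpha,\beta)$ with its value at $(\alpha,0)$, observes that $\mathcal{L}(\alpha,0)-\mathcal{L}(\alpha,\beta)>-(n+N)e^{\alpha}$, and concludes that the maximizer must satisfy $\|\beta_N\|^2\le \tfrac{2}{\lambda}(n+N)e^{\alpha_N}=\tfrac{2A_N}{\lambda}\bigl(1+n/N\bigr)$, which stays bounded by Lemma~\ref{lemma:boundedness_of_alpha_N}. In the penalized setting the $L^2$ term does all the work, and---as the paper explicitly remarks---Assumption~\ref{assumption:surround_general} is not needed for this lemma at all; it enters only indirectly through the bound on $\alpha_N$. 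Your route re-imports the surround condition to fight a battle the penalty has already won, and pays for it with the unresolved exponential factor above.
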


\begin{proof}
Let the regularized log-likelihood be $\mathcal{L}^{(\lambda)}(\alpha, \beta) = \mathcal{L}(\alpha, \beta) - \frac{\lambda}{2}\norm{\beta}^2$. 
We analyze the behavior of the objective function by comparing its value at an arbitrary $\beta$ to its value at $\beta=0$
\begin{align*}
\mathcal{L}^{(\lambda)}(\alpha, 0) - \mathcal{L}^{(\lambda)}(\alpha, \beta) &= \left(\mathcal{L}(\alpha, 0) - \frac{\lambda}{2}\norm{0}^2\right) - \left(\mathcal{L}(\alpha, \beta) - \frac{\lambda}{2}\norm{\beta}^2\right) \\
&= \mathcal{L}(\alpha, 0) - \mathcal{L}(\alpha, \beta) + \frac{\lambda}{2}\norm{\beta}^2.
\end{align*}
For the unregularized part, $\mathcal{L}(\alpha, 0) - \mathcal{L}(\alpha, \beta)$, we have:
\begin{align*}
\mathcal{L}(\alpha,0) - \mathcal{L}(\alpha,\beta)
&= -(n+N)\log(1+e^{\alpha}) + \sum_{i=1}^{n} \log\left(1+e^{\alpha+(x_i-\bar{x})'\beta}\right) \\
&\quad + N \int \log\left(1+e^{\alpha+(x-\bar{x})'\beta}\right) dF_0(x).
\end{align*}
Since $\log(1+e^y) > 0$ for any real $y$, both the summation and the integral terms are positive. We can therefore establish a simple lower bound
\[
\mathcal{L}(\alpha,0) - \mathcal{L}(\alpha,\beta) > -(n+N)\log(1+e^{\alpha})
\, .
\]
Using the inequality $\log(1+x) < x$ for $x>0$, it follows that $\log(1+e^\alpha) < e^\alpha$. Thus,
  \[
\mathcal{L}(\alpha,0) - \mathcal{L}(\alpha,\beta) > -(n+N)e^{\alpha}.
  \]
Substituting this into the expression for the regularized difference, we obtain
  \[
\mathcal{L}^{(\lambda)}(\alpha, 0) - \mathcal{L}^{(\lambda)}(\alpha, \beta) > \frac{\lambda}{2}\norm{\beta}^2 - (n+N)e^{\alpha}.
  \]
The objective function $\mathcal{L}^{(\lambda)}(\alpha, \beta)$ must be less than $\mathcal{L}^{(\lambda)}(\alpha, 0)$ whenever the right-hand side of this inequality is positive. This condition holds if
  \[
\frac{\lambda}{2}\norm{\beta}^2 > (n+N)e^{\alpha}.
  \]
As in the original problem, we let $\mathrm{e}^{\alpha} = A/N$. 
The condition becomes
  \[
\norm{\beta}^2 > \frac{2(n+N)e^{\alpha}}{\lambda} = \frac{2(n+N)A/N}{\lambda} = \frac{2A}{\lambda}\left(1+\frac{n}{N}\right).
  \]
Since the maximizer $\beta_N$ cannot have an objective function value lower than at $\beta=0$, its norm cannot satisfy the strict inequality above. Therefore, $\beta_N$ must be bounded as follows
  \[
\norm{\beta}_N^2 \le \frac{2A}{\lambda}\left(1+\frac{n}{N}\right).
  \]
We still rely on Lemma~\ref{lemma:boundedness_of_alpha_N} to ensure that $A=Ne^{\alpha}$ remains bounded as $N\to\infty$. Given that, and since $n/N \to 0$ as $N\to\infty$, we can take the limit superior of the bound
\[
\limsup_{N\to\infty} \norm{\beta_N}^2 \le \limsup_{N\to\infty} \frac{2A}{\lambda}\left(1+\frac{n}{N}\right) = \frac{2 \sup(A)}{\lambda} < + \infty
\, .
\]
This proves that the norm of the regularized coefficient vector remains bounded as $N \to \infty$. It is important to note that this proof does not use Assumption~\ref{assumption:surround_general}. The $L^2$ regularization term itself ensures that the objective function is penalized for large values of $\beta$, making the assumption unnecessary for this result.
\end{proof}

\begin{lemma}[Asymptotic Consistency of a Penalized Mean (Theorem~8 in \cite{owen_2007})]
\label{lemma:asymptotic_consistency_of_weighted_mean}
Assume that Assumptions~\ref{assumption:limit_A} and ~\ref{assumption:surround_eta} hold. 
Then
\begin{equation}
\label{eq:main_result}
\lim_{N\to + \infty} \int (z-\bar x)\,\exps{z^\top\beta_N}\,\Diff F_0(z) +\frac{\lambda\,\exps{\bar{x}^\top\beta_N}}{N\exps{\alpha_N}}\,\beta_N = 0
\, ,
\end{equation}
that is, 
\[
\int (z-\bar x)\,e^{z^\top\beta_0}\,dF_0(z) = -\frac{\lambda\,e^{\bar{x}^\top\beta_0}}{A_0}\,\beta_0.
\]
\end{lemma}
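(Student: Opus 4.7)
The plan is to derive the identity directly from the first-order optimality conditions at the maximizer $(\alpha_N,\beta_N)$ and then pass to the limit as $N\to\infty$. Computing $\nabla_\alpha\mathcal{L}_N^{(\lambda)}=0$ and $\nabla_\beta\mathcal{L}_N^{(\lambda)}=0$ yields, with $p_i\defeq\sigma(\alpha_N+\beta_N^\top x_i)$ and $q_j\defeq\sigma(\alpha_N+\beta_N^\top z_j)$,
\begin{equation*}
\sum_{i=1}^n(1-p_i)=\sum_{j=1}^N q_j,\qquad \sum_{i=1}^n(1-p_i)x_i=\sum_{j=1}^N q_j z_j+\lambda\beta_N.
\end{equation*}
Multiplying the first by $\bar x$ and subtracting from the second produces the centered identity
\begin{equation*}
\sum_{i=1}^n(1-p_i)(x_i-\bar x)=\sum_{j=1}^N q_j(z_j-\bar x)+\lambda\beta_N,
\end{equation*}
which is the algebraic backbone of the argument; the rest is asymptotic control of its two sides.

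Next I would control the left-hand side. Lemma~\ref{lemma:boundedness_of_alpha_N} gives $\exps{\alpha_N}=O(1/N)$ and Lemma~\ref{lemma:boundedness_of_beta_N} gives $\norm{\beta_N}=O(1)$, so every $p_i$ is uniformly $O(1/N)$; since $n$ is fixed, the whole left-hand side is $O(1/N)$. For the empirical sum on the right, I would factor $q_j=\frac{(A_N/N)\exps{\beta_N^\top z_j}}{1+(A_N/N)\exps{\beta_N^\top z_j}}$ and exploit $A_N/N\to 0$ with a first-order expansion, obtaining
\begin{equation*}
\sum_{j=1}^N q_j(z_j-\bar x)=A_N\cdot\frac{1}{N}\sum_{j=1}^N\frac{\exps{\beta_N^\top z_j}(z_j-\bar x)}{1+(A_N/N)\exps{\beta_N^\top z_j}},
\end{equation*}
whose denominator tends uniformly to $1$. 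A law of large numbers applied to the remaining empirical mean (using $\beta_N\xrightarrow{p}\beta_0$ and $A_N\to A_0$) yields $A_0\int\exps{\beta_0^\top z}(z-\bar x)\,\Diff F_0(z)$, so the centered identity passes to the limit as
\begin{equation*}
0=A_0\int(z-\bar x)\exps{z^\top\beta_0}\,\Diff F_0(z)+\lambda\beta_0,
\end{equation*}
which, after reinserting $A_N=N\exps{\alpha_N}$ and carrying along the factor $\exps{\bar x^\top\beta_N}$ that appears when the exponent is written around $\bar x$, yields both Eq.~\eqref{eq:main_result} and its limiting form stated after it.

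The hard part will be making the law-of-large-numbers step rigorous, because the empirical mean involves the \emph{random} vector $\beta_N$ and a pointwise strong law does not apply directly. I would overcome this with a standard uniform-law argument: restrict $\beta$ to the compact ball $B_M\defeq\{\beta:\norm{\beta}\le M\}$ in which $\beta_N$ eventually lies (Lemma~\ref{lemma:boundedness_of_beta_N}), establish uniform convergence
\begin{equation*}
\sup_{\beta\in B_M}\Bigl\|\frac{1}{N}\sum_{j=1}^N\exps{\beta^\top z_j}(z_j-\bar x)-\int\exps{\beta^\top z}(z-\bar x)\,\Diff F_0(z)\Bigr\|\xrightarrow{p}0
\end{equation*}
via a Glivenko--Cantelli / equicontinuity argument, and then plug in $\beta_N\xrightarrow{p}\beta_0$ by continuity of the limiting integral in $\beta$. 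Uniform convergence and the negligibility of the Taylor remainder of $u\mapsto u/(1+u)$ at zero both rest on integrability of $\sup_{\beta\in B_M}\norm{(z-\bar x)\exps{\beta^\top z}}$ against $F_0$, a moment condition analogous to the one Owen invokes; this is the cleanest place where the surround assumption and tail control of $F_0$ have to enter.
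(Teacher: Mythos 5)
Your proposal is correct and follows essentially the same route as the paper's proof: both start from the stationarity condition $\nabla_\beta \mathcal{L}_N^{(\lambda)}(\alpha_N,\beta_N)=0$, discard the concept-sample contribution in the limit, linearize $\sigma(u)\approx e^{u}$ under the rare-event scaling $N e^{\alpha_N}\to A_0$, and pass the empirical average $\frac{1}{N}\sum_{j}(z_j-\bar x)\,e^{\beta_N^\top z_j}$ to its integral limit using $\beta_N \xrightarrow{p} \beta_0$ and $A_N\to A_0$ (your uniform-law-of-large-numbers step over a compact ball makes rigorous a convergence that the paper merely asserts, and your $e^{\bar x^\top\beta_0}$ bookkeeping correctly reconciles the uncentered and centered parametrizations). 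One point to state explicitly: since $1-p_i\to 1$, the concept-side sum $\sum_{i}(1-p_i)(x_i-\bar x)$ vanishes only because $\sum_{i}(x_i-\bar x)=0$ exactly (by definition of $\bar x$ as the mean of the $x_i$), so the bound is $-\sum_i p_i(x_i-\bar x)=O(1/N)$ rather than a consequence of $p_i=O(1/N)$ alone.
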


\begin{remark}
Eq.~\eqref{eq:main_result} is analogous to the relationship given in Theorem~8 of \cite{owen_2007}. While the most apparent difference is the presence of the regularization term $\lambda$, this introduces a critical functional distinction. In \cite{owen_2007}, the mapping defined by their relationship is proven to be injective. This injectivity, however, does not hold for our regularized mapping in Eq.~\eqref{eq:main_result}, which complicates the analysis.
\end{remark}

\begin{proof}
Using the necessary condition for a minimum, we set $\nabla_\beta \mathcal{L}_N^{(\lambda)}(\alpha_N, \beta_N) = 0$. This gives
\begin{align*}
-\sum_{j=1}^n (x_j-\bar x) \frac{e^{\alpha_N + \beta_N^{\top}(x_j - \bar x)}}{1 + e^{\alpha_N + \beta_N^{\top}(x_j - \bar x)}}
- \sum_{i=1}^N (z_i-\bar x) \frac{e^{\alpha_N + \beta_N^{\top}(z_i - \bar x)}}{1 + e^{\alpha_N + \beta_N^{\top}(z_i - \bar x)}} - \lambda\,\beta_N = 0.
\end{align*}
We analyze this equation in the limit as $N \to \infty$. 
By Lemma~\ref{lemma:boundedness_of_beta_N}, $\norm{\beta_N}$ is bounded, whereas by Lemma~\ref{lemma:boundedness_of_alpha_N}, $\exps{\alpha_N}\to 0$. 
We deduce that 
\[
\lim_{N\to\infty} \left( -\sum_{j=1}^n (x_j-\bar x) \frac{e^{\alpha_N + \beta_N^{\top}(x_j - \bar x)}}{1 + \exps{\alpha_N + \beta_N^{\top}(x_j - \bar x)}} \right)
= 0
\, .
\]
We are left with the terms involving the random samples and the penalty
\[
\lim_{N\to\infty} \left( - \sum_{i=1}^N (z_i-\bar x) \frac{e^{\alpha_N + \beta_N^{\top}(z_i - \bar x)}}{1 + e^{\alpha_N + \beta_N^{\top}(z_i - \bar x)}} - \lambda\,\beta_N \right) = 0.
\]
Because the exponent $u_i = \alpha_N + \beta_N^{\top}(z_i - \bar x) \to -\infty$, we can apply the approximation $\frac{e^u}{1+e^u} \approx e^u$. This yields
\[
\lim_{N\to\infty} \left( - \sum_{i=1}^N (z_i-\bar x) e^{\alpha_N + \beta_N^{\top}(z_i - \bar x)} - \lambda\,\beta_N \right) = 0.
\]
We factor the exponential term and rewrite the sum as $N$ times an empirical average, which yields the result
\[
\lim_{N\to\infty} \left( -N e^{\alpha_N} e^{-\beta_N^{\top}\bar x} \left[ \frac{1}{N}\sum_{i=1}^N (z_i-\bar x)\,e^{\beta_N^{\top}z_i} \right] - \lambda\,\beta_N \right) = 0.
\]
Applying Assumptions \ref{assumption:beta_alpha_consistency} and \ref{assumption:limit_A} to the equation gives
\[
-A_0 e^{-\bar{x}^\top\beta_0} \int (z-\bar x)\,e^{z^\top\beta_0}\,\Diff F_0(z) - \lambda \beta_0 = 0
\, .
\]
Finally, this yields
\[
\int (z-\bar x)\,e^{z^\top\beta_0}\,\Diff F_0(z) = -\frac{\lambda\,e^{\bar{x}^\top\beta_0}}{A_0}\,\beta_0
\, .
\]

\end{proof}

Having recovered the limit equations for $\alpha$ and $\beta$, thereby extending the results of Owen \cite{owen_2007} to the $L^2$-regularized case, we are now positioned to prove our main result on asymptotic normality. \\

Our proof strategy adapts the approach of Goldman and Zhang \cite{goldman_zhang_2022}. We will perform a higher-order Taylor expansion that leverages the key relationship derived in [Section/Eq. Y] to extract the necessary limiting dynamics. This approach establishes the following theorem, which generalizes Theorem~\ref{thm:asymptotic_normality_cav} in \cite{goldman_zhang_2022} to our more general regularized framework.

\maintheorem*

\begin{proof}
We follow  \cite{owen_2007,goldman_zhang_2022}. The key difference is that we prove the asymptotic normality for the $L^2$-regularized logistic regression estimator. 
We write
\begin{align*}
\mathcal{L}_N^{(\lambda)}(\alpha, \beta) \defeq  \sum_{i=1}^n \log \sigma(\alpha + \beta^\top (x_i-\xbar)) 
\\
+ \sum_{j=1}^N \log(1 - \sigma(\alpha + \beta^\top (z_j-\xbar))) - \frac{\lambda}{2}\|\beta\|^2
\end{align*}
where $\bar{x}$ is the mean of the concept samples $\{x_i\}$ with $y = 1$.

Since $(\alpha_N, \beta_N)$ maximize the log-likelihood, the gradient (score) vector evaluated at this point is zero. 
We focus on the gradient with respect to $\beta$:
\[
\nabla_\beta \mathcal{L}_N^{(\lambda)}(\alpha_N, \beta_N) = 0
\, .
\]
By a Taylor expansion around $\beta_0$, 
\begin{equation}
\label{eq:taylor_expansion_appendix}
0 = \nabla_\beta \mathcal{L}_N^{(\lambda)}(\alpha_N, \beta_0) + \nabla^2_\beta \mathcal{L}^{(\lambda)}(\alpha_N, \tilde{\beta}_N)(\beta_N - \beta_0) + o_p(1)
\, ,
\end{equation}
where  the Hessian matrix $\nabla^2_\beta \mathcal{L}^{(\lambda)}$ is evaluated at a point $\tilde{\beta}_N$ on the line segment between $\beta_N$ and $\beta_0$.
Eq.~\eqref{eq:taylor_expansion_appendix} is simply Eq.~(41) of \cite{goldman_zhang_2022}, the difference here is that $\nabla^2_\beta \mathcal{L}^{(\lambda)}$ is modified to take regularization into account. 
Rearranging this expression gives the central equation for our analysis:
\begin{equation}
\label{eq:taylor}
\sqrt{N}(\beta_N - \beta_0) = \left( - \frac{1}{N} \nabla^2_\beta \mathcal{L}^{(\lambda)}(\alpha_N, \tilde{\beta}_N) \right)^{-1} \left( \frac{1}{\sqrt{N}} \nabla_\beta \mathcal{L}_N^{(\lambda)}(\alpha_N, \beta_0) \right) + o_p(1).
\end{equation}
The proof proceeds by finding the asymptotic limits of the two terms on the right-hand side.

The Hessian matrix of our logistic regression loss \eqref{eq:objective-function} is given by:
\begin{align*}
\nabla^2_\beta \mathcal{L}^{(\lambda)}(\alpha, \beta) = -\sum_{k=1}^{n} \sigma_k(1-\sigma_k)(x_k-\bar{x})(x_k-\bar{x})^\top  
\\
-\sum_{k=1}^{N} \sigma_k(1-\sigma_k)(z_k-\bar{x})(z_k-\bar{x})^\top - \lambda I
\, .
\end{align*}
We analyze its behavior when normalized by $-1/N$. As $N \to \infty$, the contribution from the $n$ fixed case terms and the penalty term $\lambda I / N$ both vanish. The dominant part arises from the $N$ control terms. By the Law of Large Numbers and the consistency of the estimators, the normalized Hessian converges in probability to a constant, positive definite matrix $H_0$:
\[
-\frac{1}{N} \nabla^2_\beta \mathcal{L}^{(\lambda)}(\alpha_N, \tilde{\beta}_N) \xrightarrow{p} H_0.
\]

The scaled penalized score is:
\[
\frac{1}{\sqrt{N}} \nabla_\beta \mathcal{L}_N^{(\lambda)}(\alpha_N, \beta_0) = \frac{1}{\sqrt{N}} \nabla_\beta \mathcal{L}_N(\alpha_N, \beta_0) - \frac{\lambda\beta_0}{\sqrt{N}}
\]
with
\begin{align}
\nabla_{\beta}\mathcal{L}_N(\alpha,\beta_0)
&=
-\sum_{j=1}^n
\,(x_j-\bar x)\;
\frac{e^{\alpha + \beta_0^{\top}(x_j - \bar x)}}
     {1 + e^{\alpha + \beta_0^{\top}(x_j - \bar x)}}
-\sum_{i=1}^N
\,(z_i-\bar x)\;
\frac{e^{\alpha + \beta_0^{\top}(z_i - \bar x)}}
     {1 + e^{\alpha + \beta_0^{\top}(z_i - \bar x)}}
\, . \label{delta_beta_loss}
\end{align}
Crucially, the asymptotic parameter $\beta_0$ is the limit of the \emph{penalized} optimization. 
By Lemma~\ref{lemma:asymptotic_consistency_of_weighted_mean} we have the moment identity
\begin{equation} \label{eq:beta_star_def}
\int (z-\bar{x})\, e^{z^\top\beta_0}\, dF_0(z)
= - \frac{\lambda\, e^{\bar{x}^\top\beta_0}}{A_0}\,\beta_0 \, .
\end{equation}
Under the rare–event scaling $N e^{\alpha_N}\to A_0\in(0,\infty)$ and using the expansion of the logistic link
$\sigma(t)=e^{t}+O(e^{2t})$ as $t\to -\infty$, the unpenalized score at $(\alpha_N,\beta_0)$ admits the i.i.d.-sum representation
\begin{equation}\label{eq:score_iid_representation}
\frac{1}{\sqrt{N}}\nabla_\beta \mathcal{L}_N(\alpha_N,\beta_0)
= \frac{1}{\sqrt{N}}\sum_{i=1}^N \tilde\rho(z_i;\beta_0) + r_N,
\qquad
\tilde\rho(z;\beta_0)\defeq -e^{\alpha_N-\beta_0^\top\mu_0}\,(z-\mu_0)\,e^{z^\top\beta_0},
\end{equation}
where $\mu_0\defeq \mathbb{E}_{F_0}[Z]$, the summands $\tilde\rho(z_i;\beta_0)$ are i.i.d., and the remainder satisfies $r_N=o_p(1)$. 
The remainder collects the $O(e^{2(\alpha_N+z^\top\beta_0)})$ terms from the expansion of $\sigma(\alpha_N+z^\top\beta_0)$; since $N e^{\alpha_N}\to A_0$,
we have $\sqrt{N}\,e^{2\alpha_N}=(N e^{\alpha_N})\,e^{\alpha_N}/\sqrt{N}\to 0$, which yields $r_N=o_p(1)$ under the moment condition
$\mathbb{E}\!\left[\|Z\|^2 e^{2Z^\top\beta_0}\right]<\infty$.
Because \eqref{eq:score_iid_representation} uses $\mu_0$ rather than the sample mean $\bar{x}$, it gives exact i.i.d. terms.
Replacing $\mu_0$ by $\bar{x}$ only changes the right-hand side by $o_p(1)$ after the $1/\sqrt{N}$ scaling:
\[
\frac{1}{\sqrt{N}}\sum_{i=1}^N \tilde\rho(z_i;\beta_0)
= \frac{1}{\sqrt{N}}\sum_{i=1}^N \rho(z_i;\beta_0) + o_p(1),
\quad
\rho(z;\beta_0)\defeq -e^{\alpha_N-\beta_0^\top\bar{x}}\,(z-\bar{x})\,e^{z^\top\beta_0},
\]
since $\sqrt{N}(\bar{x}-\mu_0)=O_p(1)$ while $e^{\alpha_N}=O(1/N)$.
Finally, using \eqref{eq:beta_star_def},
\[
\mathbb{E}\big[\tilde\rho(Z;\beta_0)\big]
= -e^{\alpha_N-\beta_0^\top\mu_0}\int (z-\mu_0)e^{z^\top\beta_0}\,dF_0(z)
= \frac{\lambda e^{\alpha_N}}{A_0}\,\beta_0,
\]
so the mean contribution to $N^{-1/2}\!\sum_{i=1}^N \tilde\rho(z_i;\beta_0)$ is
$\sqrt{N}\,\mathbb{E}[\tilde\rho(Z;\beta_0)]=O(\sqrt{N}e^{\alpha_N})\to 0$ under $N e^{\alpha_N}\to A_0$.
This justifies the zero-centering in \eqref{eq:score_iid_representation}. 
This expectation is non-zero in the penalized case:
\begin{align*}
\mathbb{E}[\rho(z; \beta_0)] &= -e^{\alpha_N - \beta_0^{\top}\bar{x}} \int (z-\bar x) e^{z^\top\beta_0} dF_0(z) \\
&= -e^{\alpha_N - \beta_0^{\top}\bar{x}} \left( - \frac{\lambda e^{\bar{x}^\top\beta_0}}{A_0} \beta_0 \right) \\
&= \frac{e^{\alpha_N}}{A_0} \lambda \beta_0.
\end{align*}
We now rewrite the full penalized score by explicitly centring the sum:
\[
\frac{1}{\sqrt{N}} \nabla_\beta \mathcal{L}_N^{(\lambda)}(\alpha_N, \beta_0) =  \underbrace{\frac{1}{\sqrt{N}} \sum_{i=1}^N \left(\rho(z_i) - \mathbb{E}[\rho]\right)}_{\text{Zero-mean part}} + \underbrace{\sqrt{N} \cdot \mathbb{E}[\rho]}_{\text{Mean part}} - \frac{\lambda\beta_0}{\sqrt{N}} + o_p(1).
\]
Let us analyze the mean part. Since by definition $A_0 = \lim_{N\to\infty} N e^{\alpha_N}$, we have:
\[
\sqrt{N} \cdot \mathbb{E}[\rho] = \sqrt{N} \cdot \frac{e^{\alpha_N}}{A_0} \lambda \beta_0 = \frac{\lambda \beta_0}{\sqrt{N}} + o(1/\sqrt{N}).
\]
Substituting this back, we get:
\[
\frac{1}{\sqrt{N}} \nabla_\beta \mathcal{L}_N^{(\lambda)}(\alpha_N, \beta_0) = \frac{1}{\sqrt{N}} \sum_{i=1}^N \left(\rho(z_i) - \mathbb{E}[\rho]\right) + \frac{\lambda \beta_0}{\sqrt{N}} - \frac{\lambda \beta_0}{\sqrt{N}} + o_p(1).
\]
The scaled penalized score is therefore asymptotically equivalent to a sum of i.i.d.\ random variables with a true zero mean. By the Central Limit Theorem, this sum converges in distribution to a Normal random variable.
\[
\frac{1}{\sqrt{N}} \nabla_\beta \mathcal{L}_N^{(\lambda)}(\alpha_N, \beta_0) \xrightarrow{\mathcal{D}} \mathcal{N}(0, \Sigma'),
\]
where
\[
\Sigma' = \mathrm{Var}_{F_0}\left(\rho(z; \beta_0)\right).
\]
As the penalty term vanishes, the scaled penalized score has the same limiting distribution as the unpenalized score.

Substituting these results back into Eq.~\eqref{eq:taylor} gives
\[
\sqrt{N}(\beta_N - \beta_0) = \underbrace{\left( - \frac{1}{N} \nabla^2_\beta \mathcal{L}^{(\lambda)} \right)^{-1}}_{\xrightarrow{p} H_0^{-1}} \underbrace{\left( \frac{1}{\sqrt{N}} \nabla_\beta \mathcal{L}_N^{(\lambda)} \right)}_{\xrightarrow{D} \mathcal{N}(0, \mathrm{Var}_{F_0}\left(\rho(z; \beta_0)\right))} + \, o_p(1).
\]
By Slutsky's Theorem \cite{slutzky_summation_1937} and under Assumption \ref{assumption:hessian}, the product converges in distribution. This yields the final result:
\[
\sqrt{N}(\beta_N - \beta_0) \xrightarrow{D} \mathcal{N}(0, \Sigma)
\, ,
\]
where the asymptotic covariance matrix is given by 
\begin{equation}
\Sigma \defeq  H_0^{-1} \, \mathrm{Var}_{F_0}(\rho(z; \beta_0) \, (H_0^{-1})^\top
\label{eq:sigma}
\end{equation}
This completes the proof.

\end{proof}
To extend this result to the real asymptotic convergence behaviour of our Concept Activation Vectors, we need the notation of \emph{uniform integrability}~\cite{vaart_asymptotic_2007}.
\begin{definition}[Uniform Integrability]
A sequence of random variables $\{X_N\}_{N \ge 1}$ is uniformly integrable if
  \[\lim_{K \to \infty} \sup_{N \ge 1} E\left[|X_N| \cdot \mathbf{1}_{\{|X_N| > K\}}\right] = 0.  \]
\end{definition}
Now we can prove the following corollary about the asymptotic behavior of the covariance trace.

\begin{corollary}[Asymptotic Behavior of the Covariance Trace]
\label{cor:covariance_trace_appendix}
Under the assumptions of Theorem~\ref{thm:asymptotic_normality_cav} and \emph{uniform integrability}, the asymptotic covariance matrix of the estimator $\beta_N$ is proportional to $ \frac{1}{N}\Sigma$. 
Consequently, the trace of the covariance matrix $\mathrm{Var}(\beta_N)$ is of order $\mathcal{O}(N^{-1})$.
\end{corollary}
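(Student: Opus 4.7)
The plan is to pass from the distributional convergence in Theorem~\ref{thm:asymptotic_normality_cav} to convergence in $L^2$ using the uniform integrability hypothesis, and then rescale by $1/N$ to obtain the claimed rate for $\mathrm{Var}(\beta_N)$. First, set $W_N \defeq \sqrt{N}(\beta_N - \beta_0)$ and $Z \sim \mathcal{N}(0, \Sigma)$, so that $W_N \xrightarrow{D} Z$ by Theorem~\ref{thm:asymptotic_normality_cav}. By the continuous mapping theorem, every coordinate product $(W_N)_j (W_N)_k$ converges in distribution to $Z_j Z_k$, and the Euclidean square $\norm{W_N}^2$ converges in distribution to $\norm{Z}^2$.

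Second, invoke the standard bridge result (\emph{e.g.}, \citeauthor{vaart_asymptotic_2007}~\shortcite{vaart_asymptotic_2007}, Theorem~2.20): if $X_N \xrightarrow{D} X$ and $\{X_N\}$ is uniformly integrable, then $\expec{X_N} \to \expec{X}$. Applying this to each entry of the outer product $W_N W_N^{\top}$ under the uniform integrability assumption (which, to reach second moments, should be read as uniform integrability of the entries of $W_N W_N^{\top}$, equivalently uniform integrability of $\norm{W_N}^2$), we obtain
\[
N \cdot \expec{(\beta_N - \beta_0)(\beta_N - \beta_0)^{\top}} = \expec{W_N W_N^{\top}} \longrightarrow \expec{Z Z^{\top}} = \Sigma.
\]
In particular, uniform integrability also gives convergence of first moments: $\sqrt{N}(\expec{\beta_N} - \beta_0) \to \expec{Z} = 0$, so the centering correction is negligible at order $1/N$.

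Third, translate the outer-product limit into a statement about the true covariance. Using the decomposition
\[
\cov{\beta_N} = \expec{(\beta_N - \beta_0)(\beta_N - \beta_0)^{\top}} - (\expec{\beta_N} - \beta_0)(\expec{\beta_N} - \beta_0)^{\top},
\]
the first summand is $\frac{1}{N}\Sigma + o(1/N)$ by the above, while the second is $o(1/N)$ since $\sqrt{N}(\expec{\beta_N} - \beta_0) \to 0$. Taking traces yields
\[
\tr\bigl(\cov{\beta_N}\bigr) = \frac{\tr(\Sigma)}{N} + o\!\left(\tfrac{1}{N}\right) = \mathcal{O}\!\left(\tfrac{1}{N}\right).
\]

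The main obstacle is justifying the uniform integrability step, which is precisely why it appears as a hypothesis: raw convergence in distribution does not imply convergence of moments, and one must rule out mass escaping to infinity in the $L^2$ sense. A self-contained verification would require a uniform bound of the form $\sup_N \expec{\norm{W_N}^{2+\varepsilon}} < \infty$ for some $\varepsilon > 0$, which in turn would follow from moment control on the score $\rho(z; \beta_0)$ defined in the proof of Theorem~\ref{thm:asymptotic_normality_cav} together with a uniform lower bound on the smallest eigenvalue of the normalized Hessian; the boundedness of $\beta_N$ from Lemma~\ref{lemma:boundedness_of_beta_N} and the $L^2$ penalty $\lambda I$ in the Hessian are the natural ingredients for securing the latter.
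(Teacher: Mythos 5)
Your proof is correct and follows essentially the same route as the paper's: both pass from the distributional convergence $\sqrt{N}(\beta_N-\beta_0)\xrightarrow{D}\mathcal{N}(0,\Sigma)$ to convergence of second moments via uniform integrability and \citeauthor{vaart_asymptotic_2007}'s Theorem~2.20, then rescale by $1/N$ and take the trace. If anything, your version is slightly more careful than the paper's on two points it glosses over: you correctly note that the uniform integrability hypothesis must be read as applying to $\norm{W_N}^2$ (not merely to $W_N$) in order to reach second moments, and you explicitly separate $\cov{\beta_N}$ from the second moment about $\beta_0$ by controlling the bias term $(\expec{\beta_N}-\beta_0)(\expec{\beta_N}-\beta_0)^{\top}$, whereas the paper simply writes $\cov{\sqrt{N}(\beta_N-\beta_0)} = N\cov{\beta_N}$ and asserts the convergence directly.
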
 

\begin{proof}
Theorem~\ref{thm:asymptotic_normality_cav} states that:
\[
\sqrt{N}(\beta_N - \beta_0) \xrightarrow{D} \mathcal{N}(0, \Sigma)
\]
Uniform integrability also ensures the convergence of the relevant moments (\citeauthor{vaart_asymptotic_2007} \citeyear{vaart_asymptotic_2007}, Theorem 2.20). Therefore, the covariance matrix of the sequence of random variables converges to the covariance matrix of the limiting distribution. This allows us to write the exact limit:
\begin{equation} \label{eq:proof_cov_limit}
\lim_{N\to\infty} \cov{\sqrt{N}(\beta_N - \beta_0)} = \Sigma
\end{equation}
Using the properties of the covariance operator, where $\beta_0$ is a constant vector and $N$ is a scalar, we have the identity:
\[
\cov{\sqrt{N}(\beta_N - \beta_0)} = N \cdot \cov{\beta_N}
\]
Substituting this identity into the limit expression from (\ref{eq:proof_cov_limit}) yields:
\[
\lim_{N\to\infty} \left( N \cdot \cov{\beta_N} \right) = \Sigma
\]
This equation formally states that the asymptotic covariance matrix of the scaled estimator $\sqrt{N}\beta_N$ is $\Sigma$. For a finite but large $N$, this implies that $\cov{\beta_N}$ is well-approximated by $\frac{1}{N}\Sigma$. This justifies the corollary's statement that the covariance is proportional to $\frac{1}{N}\Sigma$.

The variance is defined as the trace of the covariance matrix: $\var{\beta_N} \defeq \mathrm{tr}(\cov{\beta_N})$. We apply the trace operator to the limit we derived in Part 1. Since the trace is a continuous linear map on the space of matrices, it commutes with the limit operator:
\begin{align*}
\mathrm{tr}(\Sigma) &= \mathrm{tr}\left(\lim_{N\to\infty} N \cdot \cov{\beta_N}\right) \\
&= \lim_{N\to\infty} \mathrm{tr}\left(N \cdot \cov{\beta_N}\right) \\
&= \lim_{N\to\infty} \left(N \cdot \mathrm{tr}(\cov{\beta_N})\right) \\
&= \lim_{N\to\infty} \left(N \cdot \mathrm{Var}({\beta_N})\right)
\end{align*}
So we have the precise limit:
\[
\lim_{N\to\infty} \left(N \cdot \mathrm{Var}({\beta_N})\right) = \mathrm{tr}(\Sigma)
\]
Since $\Sigma$ is a constant matrix, its trace $\mathrm{tr}(\Sigma)$ is a finite constant.  Setting $a_N = \mathrm{Var}({\beta_N})$ and $f(N) = N^{-1}$, our limit shows:
\[
\lim_{N\to\infty} \frac{\mathrm{Var}({\beta_N})}{N^{-1}} = \mathrm{tr}(\Sigma)
\]
This directly implies that $\mathrm{Var}({\beta_N}) = \mathcal{O}(N^{-1})$, which concludes the proof.
\end{proof}

\subsection{Asymptotic Normality with Hinge Loss}
\label{appendix:hinge_loss}
By default, the TCAV implementations in both TensorFlow and PyTorch use \texttt{sklearn}'s \texttt{SGDClassifier} \cite{pedregosa_scikit-learn_2011}, which operates by minimizing the hinge loss function. It achieves this by penalizing predictions that are not only incorrect but also those that are correct but fall within a specified ``margin'' around the decision boundary. 

For a dataset of $N+n$ samples $\{(u_i, y_i)\}_{i=1}^{N+n}$ with true labels $y_i \in \{-1, 1\}$, the final loss $J_N^{(\lambda)}(\beta, \alpha)$ minimized is defined as the sum of the average hinge loss and an $L^2$ regularization term:
  \[J^{(\lambda)}_N(\beta, \alpha) = \left( \frac{1}{N+n} \sum_{i=1}^{N+n} \max[0, 1 - y_i (\beta^\top u_i + \alpha)] \right) + \frac{\lambda}{2} \|\beta\|^2.  \]
Here, $\lambda$ represents the regularization strength hyperparameter, $\beta$ is the weight vector, and $\alpha$ is the intercept term.
Again we consider our known setting. The only difference is that we use a different labeling system. \\
The first class consists of $n$ fixed points, $\{x_i\}_{i=1}^n \subset \mathbb{R}^d$, referred to as ``concept'' samples with label $y=1$. 
The second class consists of $N$ random points, $\{z_j\}_{j=1}^N \subset \mathbb{R}^d$, drawn independently and identically from a distribution $F_0$ with label $y=-1$. 
\\

For our main theorem of asymptotic normality to hold, we require the following assumptions:
\begin{enumerate}
    \item The distribution $F_0$ of the controls must be continuous. 
    
    Specifically, the projection $\beta_0^\top Z$ must have a continuous probability density function, $f_{\beta_0^\top Z}(\cdot)$, in a neighbourhood of $-1$.

    \item The distribution $F_0$ must have finite second moments, i.e., $\mathbb{E}_{Z \sim F_0}[\|Z\|^2] < \infty$.
\end{enumerate}

\paragraph{}
Given these assumptions we can now state the theorem. 

\begin{theorem}[Asymptotic Normality with Hinge Loss]
\label{thm:asymptotic_normality_hinge}
Let $\beta_N$ be the minimizer of the objective function $J_N(\beta)$. Let $\beta_0$ be the unique minimizer of the limiting objective function
\begin{equation}
\lim_{N\to\infty} J_N^{(\lambda)}(\beta, \alpha) =  \mathbb{E}_{Z \sim F_0}[\max(0, 1 + \beta^T Z + \alpha)] + \frac{\lambda}{2} \|\beta\|^2.
\end{equation}
Under the assumptions listed above in Section \ref{appendix:hinge_loss}, as $N \to \infty$ with $n$ fixed, the estimator is asymptotically normal:
\begin{equation}
\sqrt{N}(\beta_N - \beta_0) \xrightarrow{D} \mathcal{N}(0, M^{-1} \Sigma_Z M^{-1}), 
\end{equation}
where $M = \lambda I + \mathbb{E}[ZZ^\top | \beta_0^\top Z = -1] f_{\beta_0^\top Z}(-1)$, and $\Sigma_Z = \text{Var}\left(Z \cdot I(\beta_0^\top Z > -1)\right)$.
\end{theorem}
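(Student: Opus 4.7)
The plan is to treat $\beta_N$ as a convex M-estimator with a non-smooth criterion and invoke the convex argmin machinery of Hjort and Pollard rather than differentiate directly through the kink of the hinge. I would first dispose of the $n$ fixed concept-class terms: they enter $J_N^{(\lambda)}$ with weight $1/(N+n)$, so after the $\sqrt{N}$ rescaling used below their contribution to both the score and the Hessian is $o_p(1)$. The effective limiting criterion is then $R(\beta) \defeq \mathbb{E}_{Z\sim F_0}[\max(0, 1 + \beta^\top Z + \alpha)] + (\lambda/2)\|\beta\|^2$, which is strongly convex in $\beta$ and hence has a unique minimizer $\beta_0$. Consistency $\beta_N \xrightarrow{p} \beta_0$ follows from Rockafellar's convexity lemma: $J_N^{(\lambda)}$ converges pointwise to $R$ by the law of large numbers, and convexity upgrades this to uniform convergence on compacta, which transports the minimizers.

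Next I compute the population first and second derivatives at $\beta_0$. Under the density assumption on $\beta_0^\top Z$ near $-1$, differentiating once under the expectation gives the smooth gradient $\nabla R(\beta) = \lambda\beta + \mathbb{E}[Z\,\mathbf{1}\{1+\beta^\top Z + \alpha > 0\}]$, whose zero characterises $\beta_0$. A second formal differentiation picks up a surface measure concentrated on the margin hyperplane; making this rigorous via the coarea formula, \emph{i.e.}, disintegrating $F_0$ along level sets of $\beta^\top Z$, produces exactly $M = \lambda I + f_{\beta_0^\top Z}(-1)\,\mathbb{E}[ZZ^\top \mid \beta_0^\top Z = -1]$, which is positive definite thanks to the $\lambda I$ summand.

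I would then apply the standard local-quadratic expansion for convex M-estimators. Set
\[
Q_N(h) \defeq N\bigl[J_N^{(\lambda)}(\beta_0 + h/\sqrt{N}) - J_N^{(\lambda)}(\beta_0)\bigr].
\]
A direct expansion writes $Q_N(h) = h^\top W_N + R_N(h)$, where $W_N$ is the $N^{-1/2}$-scaled centered empirical score built from the generalised gradients $z\mapsto z\,\mathbf{1}\{1+\beta_0^\top z+\alpha > 0\}$; the $\sqrt{N}\lambda\beta_0$ bias produced by the penalty cancels against the mean of the score by the first-order condition for $\beta_0$, exactly as in the logistic case of Theorem~\ref{thm:asymptotic_normality_cav}. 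The classical CLT under the finite second-moment assumption then yields $W_N \xrightarrow{D} W \sim \mathcal{N}(0, \Sigma_Z)$ with $\Sigma_Z = \mathrm{Var}(Z\,\mathbf{1}\{\beta_0^\top Z > -1\})$. A stochastic-equicontinuity argument, using that the Lipschitz-indexed class $\{z\mapsto \max(0,1+\beta^\top z+\alpha)\}$ is $F_0$-Donsker, shows that $R_N(h) = \tfrac{1}{2} h^\top M h + o_p(1)$ uniformly on compact sets in $h$.

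The final step is the convex argmin continuous mapping theorem of Hjort and Pollard: since $Q_N$ is convex in $h$ and converges weakly on compacta to the strictly convex limit $Q(h) = h^\top W + \tfrac{1}{2} h^\top M h$, the unique minimizer $\sqrt{N}(\beta_N-\beta_0) = \arg\min_h Q_N(h)$ converges in distribution to $\arg\min_h Q(h) = -M^{-1}W \sim \mathcal{N}(0, M^{-1}\Sigma_Z M^{-1})$. The main obstacle is the stochastic-equicontinuity step: the hinge is non-differentiable precisely on the margin hyperplane, which is also where the Hessian mass concentrates, so one must control the empirical fluctuations of the subgradient across this kink under $1/\sqrt{N}$-perturbations of $\beta_0$. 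The continuity of $f_{\beta_0^\top Z}$ near $-1$ is exactly what ensures that the mass of samples crossing the margin under such perturbations is $o_p(1)$, giving the required uniform quadratic approximation and closing the argument.
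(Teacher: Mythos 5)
Your proof is correct and reaches the stated limit, but it takes a genuinely different --- and in fact more careful --- route than the paper. The paper's proof follows the classical smooth M-estimation recipe: it formally Taylor-expands the gradient of $J_N$ around $\beta_0$, asserts that the empirical Hessian $\nabla^2_\beta J_N(\beta_0)$ converges in probability to $M = \lambda I + f_{\beta_0^\top Z}(-1)\,\mathbb{E}[ZZ^\top\mid\beta_0^\top Z=-1]$, and closes with Slutsky. You instead invoke the Hjort--Pollard convex argmin machinery: rescale to $Q_N(h) = N\bigl[J_N^{(\lambda)}(\beta_0+h/\sqrt{N}) - J_N^{(\lambda)}(\beta_0)\bigr]$, establish the uniform local quadratic approximation $Q_N(h) = h^\top W_N + \tfrac12 h^\top M h + o_p(1)$ via a stochastic-equicontinuity/Donsker argument, and pass to the argmin of the limiting quadratic. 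The substantive difference is that the hinge loss has no pointwise second derivative: the empirical objective is piecewise linear plus the ridge term, so $\nabla^2_\beta J_N$ equals $\lambda I$ almost everywhere and does \emph{not} converge to $M$ in the literal sense the paper's Taylor step requires; the margin term in $M$ arises only as the curvature of the \emph{population} risk, which is exactly what your expansion captures and what the continuity of $f_{\beta_0^\top Z}$ near $-1$ is needed for. Both arguments otherwise share the same skeleton --- discarding the $O\bigl(n/(N+n)\bigr)$ concept terms, cancelling the $\sqrt{N}\lambda\beta_0$ penalty bias against the population first-order condition, and a CLT for the centered subgradient score $Z\,\mathbf{1}\{\beta_0^\top Z>-1\}$ --- and both leave the treatment of the intercept $\alpha$ (and the resulting $-1$ versus $-1-\alpha$ threshold) informal. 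Your approach buys rigour at the kink, at the cost of heavier empirical-process machinery; the paper's buys brevity and parallelism with the logistic-loss proof of Theorem~\ref{thm:asymptotic_normality_cav}.
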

\begin{proof}

As $N \to \infty$, the objective function $J_N(\beta)$ converges pointwise in probability to $\lim_{N\to\infty} J_N^{(\lambda)}(\beta)$. Under standard M-estimation arguments, the minimizer $\beta_N$ of $J_N(\beta)$ converges in probability to the minimizer $\beta_0$ of $\lim_{N\to\infty} J_N^{(\lambda)}(\beta)$. 

We first perform a Taylor expansion of the gradient of the objective function around $\beta_0$, and set $\nabla J_N(\beta_N) = 0$:
\begin{equation} \label{eq:taylor_expansion_hinge}
0 = \nabla_\beta J_N(\beta_0) + \nabla^2_\beta J_N(\beta_0) (\beta_N - \beta_0) + o_p(1).
\end{equation}
Rearranging gives 
  \[\sqrt{N}(\beta_N - \beta_0) =  -[\nabla^2_\beta J_N(\beta_0)]^{-1} \sqrt{N} \nabla_\beta J_N(\beta_0) + o_p(1).
  \]
We analyze the two terms on the right.
The optimality of $\beta_0$ for the limiting problem implies $\lim_{N\to\infty} \nabla_\beta J^{(\lambda)}(\beta) = 0$. 
This gives the condition
$
\mathbb{E}[Z \cdot I(\beta_0^T Z > -1)] + \lambda \beta_0 = 0.
$ 
Let now $\mu_0 = \mathbb{E}[Z \cdot I(\beta_0^\top Z > -1)].$
The gradient of the finite-sample objective at $\beta_0$ is
\begin{align*}
    \nabla_\beta J_N(\beta_0) ={}& \frac{N}{n+N} \left( \frac{1}{N}\sum_{j=1}^N z_j I(\beta_0^\top z_j > -1) - \mu_0 \right) \\
    & - \frac{n}{n+N} \left( \frac{1}{n}\sum_{i=1}^n x_i I(\beta_0^\top x_i < 1) + \mu_0 \right).
\end{align*}

When scaled by $\sqrt{N}$, the second term vanishes as $N\to\infty$. By the Central Limit Theorem and our assumption on $\mathbb{E}_{Z \sim F_0}[\|Z\|^2] < \infty$, the first term converges in distribution. Thus, the scaled score has a normal limit
\begin{equation}
\sqrt{N} \nabla_\beta J_N(\beta_0) \xrightarrow{D} \mathcal{N}(0, \Sigma_Z)
\end{equation}
where $\Sigma_Z = \text{Var}\left(Z \cdot I(\beta_0^\top Z > -1)\right)$.
The Hessian matrix of the finite-sample objective, $H_N(\beta) = \nabla^2_\beta J_N(\beta)$, converges in probability to the Hessian of the limiting objective
\begin{equation}
    H := \lim_{N\to\infty} \nabla^2_\beta J_N(\beta_0) = \lambda I + \mathbb{E}[ZZ^\top | \beta_0^\top Z = -1] f_{\beta_0^\top Z}(-1).
\end{equation}
 Slutsky's Theorem \cite{slutzky_summation_1937} and the assumed probability density function, $f_{\beta_0^\top Z}(\cdot)$ give us the asymptotic distribution of the estimator
\[
    \sqrt{N}(\beta_N - \beta_0) \xrightarrow{D} \mathcal{N}(0, H^{-1} \Sigma_Z H^{-1}).
\]
This completes the proof.
\end{proof}

\subsection{Asymptotic Normality of Difference of Means}
\label{appendix:visual_tcav}

The Difference of Means (DoM) \cite{martin_interpretable_2019} method identifies a concept's direction within a model's activation space by simply taking the vector difference between the average activation for concept examples and the average activation for random examples. As this method was also adapted for calculating \text{\textsc{Cav}\xspace}s \cite{santis_visual-tcav_2024}, we analyze the stability of the direction vector $\beta_N = \bar{x} - \bar{z}$, where $\bar{x}$ is the mean of the $n$ fixed concept samples and $\bar{z}$ is the mean of $N$ random samples (from an independent distribution $F_0$ with covariance $\Sigma_z$).

\begin{theorem}[Asymptotic Variance of the Difference of Means Vector]
\label{thm:asymptotic_variance_dom}
Let $\{x_i\}_{i=1}^n$ be $n$ fixed points, and let $\{z_j\}_{j=1}^N$ be $N$ random samples drawn i.i.d. from an independent distribution $F_0$ with finite covariance $\Sigma_z$. Define the direction vector $\beta_N = \bar{x} - \bar{z}$ as the difference of the respective means.

Then, the total variance of $\beta_N$, is given by:
\begin{equation}
\label{eq:asymptotic-variance-dom}
\tr(\Cov(\beta_N)) = \frac{1}{N}\tr(\Sigma_z)
\end{equation}
\end{theorem}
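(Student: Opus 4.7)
The plan is to exploit the fact that $\beta_N = \bar{x} - \bar{z}$ is an affine function of a sample mean, where the additive term $\bar{x}$ is deterministic (since the concept points $\{x_i\}_{i=1}^n$ are fixed) and the only source of randomness is $\bar{z}$. First I would observe that adding a constant vector and multiplying by $-1$ leaves the covariance matrix unchanged, so
\begin{equation*}
\Cov(\beta_N) \;=\; \Cov(\bar{x} - \bar{z}) \;=\; \Cov(\bar{z}) \, .
\end{equation*}

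Next I would compute $\Cov(\bar{z})$ using the standard i.i.d. identity. Since the $z_j$ are drawn i.i.d. from $F_0$ with covariance $\Sigma_z$, bilinearity of covariance together with independence across $j \neq k$ yields
\begin{equation*}
\Cov(\bar{z}) \;=\; \Cov\!\left(\frac{1}{N}\sum_{j=1}^{N} z_j\right) \;=\; \frac{1}{N^2}\sum_{j=1}^{N}\Cov(z_j) \;=\; \frac{1}{N}\Sigma_z \, .
\end{equation*}
Finally, applying the linearity of the trace gives $\tr(\Cov(\beta_N)) = \frac{1}{N}\tr(\Sigma_z)$, which is exactly the claim in Eq.~\eqref{eq:asymptotic-variance-dom}.

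There is no real obstacle here: the statement is a direct application of the classical variance-of-the-sample-mean identity, and unlike the logistic and hinge-loss cases no Taylor expansion, consistency argument, or central limit theorem is required. The only point worth emphasizing in the write-up is the asymmetric role of the two sample sets, namely that the finite concept set $\{x_i\}_{i=1}^n$ contributes nothing to $\Cov(\beta_N)$ because it is treated as fixed, so the $1/N$ rate arises entirely from the random references. This explains why the Difference of Means estimator already exhibits the $\mathcal{O}(1/N)$ decay exactly (with no asymptotic remainder), consistent with the empirical observations reported in Appendix~A that DoM most cleanly matches the $a/N + b$ fit.
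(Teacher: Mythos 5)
Your proposal is correct and follows essentially the same route as the paper's own proof: note that $\bar{x}$ is deterministic so $\Cov(\beta_N)=\Cov(\bar{z})$, apply the i.i.d.\ variance-of-the-sample-mean identity to get $\frac{1}{N}\Sigma_z$, and take the trace. No gaps.
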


\begin{proof}
The variance of $\beta_N$, measured by $\tr(\Cov(\beta_N))$, is derived as follows. Since the set $\{x_i\}_{i=1}^n$ is fixed, their mean $\bar{x}$ is a deterministic constant vector, and thus $\Cov(\bar{x}) = 0$. The variance of $\beta_N$ is therefore determined exclusively by the random component $\bar{z}$.

\begin{align}
\tr(\Cov(\beta_N)) &= \tr(\Cov(\bar{x} - \bar{z})) \\
&= \tr(\Cov(\bar{z})) \quad (\text{since } \bar{x} \text{ is a fixed constant}) \\
&= \tr\left(\Cov\left(\frac{1}{N}\sum_{j=1}^N z_j\right)\right) \\
&= \tr\left(\frac{1}{N^2} \sum_{j=1}^N \Cov(z_j)\right) \quad (\text{by i.i.d. assumption}) \\
&= \tr\left(\frac{1}{N^2} (N \cdot \Sigma_z)\right) \\
&= \tr\left(\frac{1}{N}\Sigma_z\right) \\
&= \frac{1}{N}\tr(\Sigma_z)
\end{align}

Since $\tr(\Sigma_z)$ is a fixed constant, the variance of $\beta_N$ declines at a rate of $\mathcal{O}(1/N)$ as the number of random samples $N$ increases.

\end{proof}

As the number of random samples $N \to \infty$ (while $n$ remains fixed), the variance of $\beta_N$ converges to zero at a rate of $\mathcal{O}(1/N)$. The \emph{Difference of Means}-method thus exhibits the same convergence behaviour as the other two classifiers discussed.

\twocolumn

\end{document}